\documentclass[twoside,11pt]{article}

\usepackage{blindtext}

\usepackage[abbrvbib, preprint]{jmlr2e}
\usepackage{amsmath}
\usepackage{bm}
\usepackage{xcolor}
\usepackage{enumerate,enumitem}
\def\sph{\mathbb{S}} 

\usepackage{makecell}
\usepackage{multirow}
\usepackage{booktabs}    
\usepackage{tabularx}   

\newtheorem{thm}{Theorem}[section]       
\newtheorem{lem}{Lemma}[section]         
\newtheorem{prop}{Proposition}[section] 
\newtheorem{cor}{Corollary}[section]     
\newtheorem{rmk}{Remark}[section]        
\newtheorem{defi}{Definition}[section]        

\newcommand{\D}{{\cal D}}

\def\LS{\text{LS}}

\def\RR{\mathbb{R}}
\def\TT{\mathbb{T}}
\def\NN{\mathbb{N}}

\def\X{\mathcal{X}}
\def\relu1{\text{ReLU1}}
\def\relu{\text{ReLU}}

\def\nn{\mathcal{NN}}

\def\P{\mathcal{P}}

\def\C{\mathcal{C}}
\def\D{\mathcal{D}}

\def\PP{\mathbb{P}}

\def\t{\mathcal{T}}

\def\XX{\mathcal{X}}

\def\Span{\operatorname{Span }}
\def\Supp{\operatorname{Supp }}
\def\argmin{\operatorname{arg\,min }}
\def\esssup{\operatorname{ess\,sup }}

\newcounter{JFCounter}

\usepackage{lastpage}

\begin{document}
	
    \title{Sparse-Aware Neural Networks for Nonlinear Functionals: Mitigating the Exponential Dependence on Dimension}
	

\author{\name Jianfei Li \email lijianfei@math.lmu.de \\
	\addr Department of Mathematics\\
	Ludwig-Maximilians-Universit{\"a}t M{\"u}nchen (LMU Munich)\\
	Akademiestr. 7, 80799 M{\"u}nchen
  \AND
  \name Shuo Huang$^*$\email shuo.huang@iit.it \\
	\addr IIT@MIT\\
	Istituto Italiano di Tecnologia\\
	 Via Morego 30, 16163 Genova, Italy
	\AND
    \name Han Feng$^*$
		\email hanfeng@cityu.edu.hk \\
		\addr Department of Mathematics\\ City University of Hong Kong\\ Hong Kong, China
	\AND
		\name Ding-Xuan Zhou \email dingxuan.zhou@sydney.edu.au \\
		\addr School of Mathematics and Statistics\\ The University of Sydney, Sydney\\ New South Wales 2006, Australia
    \AND
    \name Gitta Kutyniok$^\dagger$ \email kutyniok@math.lmu.de \\
	\addr Department of Mathematics\\
	Ludwig-Maximilians-Universit{\"a}t M{\"u}nchen (LMU Munich)\\
	Akademiestr. 7, 80799 M{\"u}nchen
}

	\date{today}
	
	\maketitle
    \begingroup
    \renewcommand{\thefootnote}{\fnsymbol{footnote}}
    \footnotetext[1]{Corresponding author.}
    \begingroup
    \renewcommand{\thefootnote}{\fnsymbol{footnote}}
    \footnotetext[2]{Munich Center for Machine Learning (MCML), Germany. University of Troms{\o}, Norway. German Aerospace Center (DLR), Germany.}
    \endgroup

	\begin{abstract}
Deep neural networks have emerged as powerful tools for learning operators defined over infinite-dimensional function spaces. However, existing theories frequently encounter difficulties related to dimensionality and limited interpretability. This work investigates how sparsity can help address these challenges in functional learning, a central ingredient in operator learning. We propose a framework that employs convolutional architectures to extract sparse features from a finite number of samples, together with deep fully connected networks to effectively approximate nonlinear functionals. Using universal discretization methods, we show that sparse approximators enable stable recovery from discrete samples. In addition, both the deterministic and the random sampling schemes are sufficient for our analysis. These findings lead to improved approximation rates and reduced sample sizes in various function spaces, including those with fast frequency decay and mixed smoothness. They also provide new theoretical insights into how sparsity can alleviate the curse of dimensionality in functional learning.

\end{abstract}
	
	\section{Introduction}


        
        

Deep neural networks (DNNs) have demonstrated remarkable success in various domains, such as image processing \citep{lecun2015deep, krizhevsky2012imagenet} and natural language processing \citep{vaswani2017attention, devlin2018bert}, despite the extremely high dimensionality of the input data. 
More recently, they have emerged as effective tools for learning operators between infinite-dimensional function spaces, in contrast to classical function regression. A prominent example is image denoising, where the input and output are both images or functions considered \citep{greven2017general}. Another important application is to solve partial differential equations (PDEs) \citep{raissi2019physics, lu2021learning, li2020fourier}, where they have shown remarkable empirical success in learning nonlinear operators.

Despite their success in applications, theoretical guarantees in the context of operator learning remain limited. One of the earliest studies on operator learning is due to \citet{chen1995universal}, which investigates the conditions under which shallow neural networks can universally approximate any continuous nonlinear functional. The research has subsequently been extended to deep neural networks, as demonstrated in works such as \cite{li2020fourier,lu2021learning}. 
An essential step in learning operators is understanding functionals, namely mapping the input function into a finite-dimensional representation.
Common approaches include discretizing the input function via sampled function values~\citep{chen1995universal,lu2021learning} or linearly  truncated basis expansions ~\citep{yang2024spherical,song2023approximation(a),zhou2024approximation} that leverage linear approximation results. 
However, these results either lack explicit convergence rates or yield rates that are prohibitively slow due to the high dimensionality of the input \citep{lanthaler2022error,liu2025generalization,yang2025deeponet}, which contrasts with the strong empirical performance observed in practice.
This naturally leads to the question: What underlying principles and which forms of structured neural networks enable efficient operator learning?


An explanation for the success of deep neural networks (DNNs) lies in their powerful feature extraction capabilities; see, e.g., \citep{riesenhuber1999hierarchical,bengio2013representation}, which motivate the use of sparse representations. 
Classical sparse representation methods, including wavelets and shearlets \citep{mallat1999wavelet,labate2005sparse} or other data-driven dictionary learning approaches \citep{Ron2010Dictionaries,elad2010sparse}, leverage sparsity to represent objects using only a small number of nonzero coefficients.
More recently, it has been shown that the forward pass of convolutional neural networks (CNNs) is closely related to the sparse approximation, with recovery errors that can be quantitatively characterized \citep{papyan2017convolutional,li2024convergence}, highlighting the ability of deep learning techniques to represent sparse features.

These considerations motivate us to combine sparse approximation theory with CNN encoders in order to obtain improved learning rates that exhibit only mild dependence on the dimension for high-dimensional functional learning, i.e., mappings from infinite-dimensional function spaces to real numbers, under broad assumptions on the underlying dictionaries.
Our \textbf{contributions} are summarized as follows:

\begin{itemize}
    \item \emph{ Comprehensive Functional Learning Framework}: We introduce a unified theoretical framework to approximate nonlinear functionals over broad classes of function spaces (Theorem~\ref{thm:holder}). Our results provide flexibility in constructing encoders based on dictionaries that extend the commonly used orthogonal basis. The concrete results for various input domains are presented in Section \ref{sec:examples}.
    
    \item \emph{ Sparse Approximation via CNN Encoder}: Unlike many existing approaches that rely on linear encoders, we employ sparse-aware CNNs to perform nonlinear encoding. These CNN encoders derive sparse approximators from limited samples of input functions (Theorem~\ref{thm:nn-estimator}). Denoting $f_s$ as a CNN-based sparse approximator for $f$, it is demonstrated that $\|f-f_s\|_{L_p(\Omega,\nu)} 
		\leq c_1 e^{-c_2 J} + c_3 \sigma_s(F, \D_N)_{L_\infty(\Omega,\nu)}$, where $J$ controls the number of CNN parameters and $\sigma_s(F, \D_N)_{L_\infty(\Omega,\nu)}$ denotes the approximation error incurred when representing a function $f \in F$ using $s$ elements from the dictionary $\D_N$. Using a sufficient number of parameters, the CNN sparse approximator can potentially approximate $f$ accurately, provided that $f$ can be sparsely described by the dictionary $\D_N$.

    \item \emph{ Random Samples Suffice for Functional Learning}: In practical applications, we typically deal with finite samples from input functions. Yet, some current approaches require, for instance, grid points or cubature rules. Using cubature rules raises concerns about locating such samples in general manifolds or developing algorithms for general dictionaries. Lemma~\ref{lem:sparsity_estimation} and Lemma~\ref{thm:dai_samples} illustrate that random sampling of the locations of input functions meets the conditions of our general theoretical frameworks.
    
    \item \emph{Mitigating the Curse of Dimensionality}: Two function spaces are investigated with explicit error rates: those with rapid frequency decay and Sobolev functions with mixed smoothness. 
    Assuming that the input functions have smoothness of order $\alpha$ on a domain of dimension $d$, we obtain an approximation error of {\small $O \left( (\log K)^{-\beta(\alpha-\frac{3}{2})} (\log\log K)^{v(\alpha,\beta,d)} \right)$} for H\"{o}lder functionals with modulus of continuity $\omega_{\P}(r)\leq r^\beta$, when employing a neural network constrained to at most $K$ non-zero parameters.
  Notably, the dependence on the dimension $d$ appears only through the $\log\log K$ term, demonstrating that the curse of dimensionality can be alleviated in our setting.

\end{itemize}

The remainder of the paper is organized as follows. Section~\ref{sec:related} reviews related work, and Section~\ref{sec:pre} introduces the necessary preliminaries and notation. Section~\ref{sec:app_general} presents approximation results for general nonlinear continuous functionals, and Section~\ref{subsec:sampling} discusses sample size requirements for sparse approximation under different sampling schemes. Section~\ref{sec:examples} provides examples that yield explicit rates for several common input function spaces. Section~\ref{sec:conclusion} concludes the paper.

	\section{Related works}\label{sec:related}


\subsection{Input approaches for operator (functional) learning}
One of the major challenges in operator (functional) learning arises from the infinite or high dimensionality of the data. A straightforward approach is to use discrete function values sampled at grid points as inputs of neural networks; see, e.g., \cite{chen1995universal,lu2021learning}. In particular, \citet{chen1995universal} showed that shallow neural networks can achieve universality, and this result was later extended to deep architectures by \citet{lu2021learning}. An alternative to this mesh-dependent strategy is to employ truncated basis expansions, where the expansion coefficients serve as network inputs. Bases may be predefined, such as polynomials or Fourier bases \citep{song2023approximation(a),yang2024spherical,kovachki2021universal}, or data-dependent bases, as in \citet{liu2025generalization,yao2021deep}.

\subsection{Encoder-decoder structures in functional learning}
The strategy for learning from infinite-dimensional spaces is to extract its most crucial features.
Many existing works consider a domain-specific encoder $\phi:L_p(\Omega) \to \RR^N$, which first uses a linear operator $L_N$ to obtain the dominant part of the input function $f$ and discretizes it into a vector using a pre-selected dictionary $\{u_i \}_{i=1}^N$, that is,
\begin{align*}
    \operatorname{Encoder} &: \phi(f):= \left( \langle L_N f , u_1 \rangle, \langle L_N f, u_2 \rangle, \dots, \langle L_N f , u_N \rangle \right).
\end{align*}
The decoder is then constructed by a neural network for learning nonlinearity
\begin{align*}
    \operatorname{Decoder} &: \psi: \RR^N \rightarrow \RR.
\end{align*}
If $L_N f \in \Span \{u_i\}_{i=1}^N$ and $L_N f$ converges to $f$ as $N \rightarrow \infty$, then the encoder captures all the necessary information about $f$. The universal approximation theorem then ensures that the decoder is expressive enough to learn the associated nonlinear mapping. Consequently, we can anticipate that $\psi \circ \phi(f) \approx \P(f)$ for any nonlinear functional $\P(\cdot)$.

In \cite{yang2024spherical}, they consider $\P$ defined over Sobolev spaces $W^r_p(\mathbb{S}^d)$ with the spherical harmonic basis as the dictionary for linear approximation. Kernel embedding is considered to construct $L_N$ in \cite{shi2024nonlinear}. Due to the property of the Mercer kernel, the corresponding linear integral operator has an orthonormal basis of $L_2(\Omega)$. \cite{zhou2024approximation} considered nodal functions for linear approximation encoding while using tanh neural networks as decoder to approximate nonlinear functionals over reproducing Hilbert spaces. \cite{song2023approximation(a),song2023approximation(b)} used Legendre polynomials for the encoder part.

\subsection{Curse of dimensionality}
The approximation theory of deep learning has long faced the challenge of the curse of dimensionality. 
Roughly speaking, in function approximation, when the input dimension becomes large, the number of trainable parameters required to achieve a prescribed accuracy may grow exponentially. 
Many research efforts therefore focus on alleviating the impact of dimensionality \citep{montanelli2019new,liu2024approximation,LI2024SignReLU, zhou2020theory}. 
One common approach is to assume that the input data are obtained from some low-dimensional manifolds; see, e.g., \citep{liu2021besov,chui2018deep,shaham2018provable,schmidt2019deep,nakada2020adaptive,chen2019efficient,zhou2020theory}. 
Another strategy is to impose stronger assumptions on the input function space, e.g., \citep{montanelli2019new,liu2024approximation,LI2024SignReLU, zhou2020theory}. 
In functional learning, approximation rates are also often affected by the curse of dimensionality; see, e.g., \cite{shi2024nonlinear, song2023approximation(a), zhou2024approximation}. 
Inspired by the approximation theory of Barron spaces, several recent works extend dimension-free approximation ideas to specific classes of Barron-type functionals, obtaining rates that avoid the curse of dimensionality \citep{yang2022approximation,liuzhou2025generalization}.

\section{Preliminaries and Background}\label{sec:pre}
In this section, we summarize the notation, describe the structure of our neural networks, and present preliminary concepts and results of the sparse approximation.

\subsection{Notations}
Let $\RR$ and $\NN$ denote the sets of real numbers and positive integers, respectively, 
and define $\NN_0 := \NN \cup \{0\}$. 
For any $n \in \NN$, write $[n] := \{1, \dots, n\}$ and $[n]_0 := [n] \cup \{0\}$. 
The vector $\bm{1}_d$ represents a $d$-dimensional vector with all entries equal to~$1$. 
For a vector $z \in \RR^d$, we define its $p$-norm as $\|z\|_p = (\sum_i |z_i|^p)^{1/p}$  for $1 \le p <\infty$, $\|z\|_\infty = \max_i |z_i|$ if $p=\infty$, and $\|z\|_0 = \#\{ i : z_i \ne 0 \}$ as the number of nonzero entries.

Let $\nu$ be a probability measure of a compact domain $\Omega \subset \RR^d$. 
We denote by $\C(\Omega)$ the space of continuous functions on $\Omega$, 
equipped with the supremum norm $\|f\|_{\C(\Omega)} = \sup_{x \in \Omega} |f(x)|$. 
For $1 \le p < \infty$, the space $L_p(\Omega, \nu)$ consists of functions 
with finite $L_p$ norm,
\[
\|f\|_{L_p(\Omega, \nu)} := \left( \int_{\Omega} |f|^p \, d\nu \right)^{1/p} < \infty.
\]
When $p = \infty$, $L_\infty(\Omega, \nu)$ consists of functions 
with finite essential supremum norm,
\[
\|f\|_{L_\infty(\Omega, \nu)} := \esssup_{x \in \Omega} |f(x)|.
\]

For $r \in \NN_0$ and $\beta \in (0,1]$, the H\"{o}lder space $C^{r,\beta}(\Omega)$ is defined as the collection of functions that are $r$ times continuously differentiable and the $r$-th derivatives are $\beta$-H\"{o}lder continuous. 
The norm is given by
\begin{align*}
    \|f\|_{C^{r,\beta}(\Omega)} 
    := \sum_{\|\alpha\|_1 \le r} \|\partial^{\alpha} f\|_{\C(\Omega)}
    + \sum_{\|\alpha\|_1 = r} [\,\partial^{\alpha} f\,]_{C^{0,\beta}(\Omega)},
\end{align*}
where
\begin{align*}
    [\,f\,]_{C^{0,\beta}(\Omega)} 
    := \|f\|_{\C(\Omega)} 
    + \sup_{x \ne y} \frac{|f(x) - f(y)|}{\|x - y\|_2^{\beta}}.
\end{align*}
It is straightforward to verify that $C^{r,\beta}(\Omega) \subset C^{0,\beta}(\Omega)$.


\subsection{Functional neural networks}




Motivated by the encoder--decoder framework widely used in real practice, such as image processing tasks, we adopt a similar structure to address functional learning problems. 
In particular, our model consists of two components: a CNN that serves as the encoder to extract efficient finite-dimensional representations from high-dimensional inputs and a DNN that serves as the decoder to capture nonlinear relationships within the finite-dimensional domain.  In the subsequent section, we first introduce DNNs and CNNs, and then interpret the resulting encoder--decoder architecture as a tool for approximating nonlinear functionals.

\begin{definition}[Deep neural networks]\label{ch1:def:dnn}
    Given the input $x \in \RR^d$, a DNN $\psi$ with $L$ layers is defined iteratively by $\psi^{(0)}(x) = x$ and
    \begin{align*}
        \psi^{(\ell)}( x) &= \sigma \left ( W^{(\ell)} \psi^{(\ell-1)}( x) + b^{(\ell)}  \right ), \quad \ell=1, \ldots,L-1,\\
        \psi( x) &=  W^{(L)} \psi^{(L-1)}( x) +  b^{(L)},
    \end{align*}
    for some weight matrices $ W^{(\ell)} \in \RR^{d_{\ell} \times d_{\ell-1}}$ and bias vectors $ b^{(\ell)} \in \RR^{d_\ell}$ where $d_0 := d$.  Here $\sigma(u) := \max\{u, 0\}$ is the rectified linear unit (ReLU) activation for $u \in \RR$, acting entry-wise on vectors.
\end{definition}

We denote $\nn_{\operatorname{DNN}}(L, K)$ as the collection of functions generated by DNNs with no more than $L$ layers and at most $K$ nonzero neurons, that is, $K:= \#\{ W^{(\ell)}_{ij} \neq 0, b^{(\ell)}_i \neq 0, i \in [d_\ell], j \in [d_{\ell-1}], \ell \in [L] \}$.

CNNs can be viewed as a special type of DNNs, 
in which the usual fully connected weight matrices are replaced by structured convolutional operators. 
Specifically, for a one-dimensional convolution with kernel size $s$ and stride $t$, 
given a kernel $w \in \mathbb{R}^s$ and input $x \in \mathbb{R}^d$, the convolution operation is defined as
\[
(w *_t x)_i = \sum_{k=1}^{s} w_k\, x_{(i-1)t+k}, 
\qquad i = 1, \ldots, \mathcal{D}(d,t),
\]
where $\mathcal{D}(d,t) := \lceil d/t \rceil$ and we set $x_i := 0$ for $i > d$. 
Equivalently, there exists a matrix $T^{w,t} \in \mathbb{R}^{\mathcal{D}(d,t)\times d}$ determined by the kernel $w$ 
such that $T^{w,t}x = w *_t x$, see, for example, \cite{zhou2020universality,fang2020theory,li2024approximation}.
Multichannel CNNs are one of the most popular network architectures in practice and are defined as follows.
\begin{definition}[Multi-channel CNNs]\label{1dcnn}
For an input $x \in \mathbb{R}^d$, a $J$-layer multi-channel CNN block 
$\phi: \mathbb{R}^d \to \mathbb{R}^{d_J \times n_J}$ is equipped with a collection of filters $\{ w^{(j)}_{\ell,i} \in \mathbb{R}^{s_j} \}_{j=1}^J$. 
In the $j$-th layer, the kernel $w^{(j)}_{\ell,i}$, which has $s_j$ entries and uses stride $t_j$, connects the $i$-th input channel to the $\ell$-th output channel.
The CNN block $\phi$ is defined layer by layer $\{ \phi^{(j)}:\RR^d \rightarrow \RR^{d_j\times n_j} \}$ via the recursive relation
\begin{align*}
    \phi^{(j)}(x)_\ell 
        &= \sigma\!\left( 
        \sum_{i=1}^{n_{j-1}} w^{(j)}_{\ell,i} *_{t_j} \phi^{(j-1)}(x)_i 
        + b^{(j)}_\ell \bm{1}_{d_j} \right),
        && \ell \in [n_j], \quad j = 1, \ldots, J-1, \\
    \phi(x)_\ell 
        &= \sum_{i=1}^{n_{J-1}} w^{(J)}_{\ell,i} *_{t_J} \phi^{(J-1)}(x)_i 
        + b^{(J)}_\ell \bm{1}_{d_J},
        && \ell \in [n_J],
\end{align*}
where $\phi^{(0)}(x)=x$, $b^{(j)}_\ell \in \mathbb{R}$ are bias terms, and $d_j = \mathcal{D}(d_{j-1}, t_j)$ denotes the output dimension at layer $j$.

A CNN is built from a series of these multi-channel convolutional blocks, whose outputs are then flattened into vectors and passed on as inputs to the following layers.
\end{definition}


Similarly, let $\mathcal{N}_{\mathrm{CNN}}(J, M)$ denote the class of functions realized by CNNs with depth at most $J$ and at most $M$ nonzero parameters.
We are now ready to introduce functional neural networks for learning nonlinear functionals.

\begin{definition}[{Functional} neural networks]
    The collection of functional neural networks $\nn_\P(J,M,L,N)$ {for learning nonlinear functional $\P$} is defined as 
    \begin{align*}
        \nn_\P(J,M,L,K)
        &:= \left\{  \psi \circ \phi: \phi \in \nn_{\operatorname{CNN}}(c_1 J,c_2 M) , \psi \in \nn_{\operatorname{DNN}}(c_3 L, c_4 K)  \right\},
    \end{align*}
    where $\{c_i\}_{i=1}^4$ are constants that may vary from place to place. 
\end{definition}

\begin{remark}
    The input of $\nn_\P(J,M,L,K)$ is the discretized function values of the functions in the domain of $\P$. Similar structures can be found in other works, but with different encoders and different inputs. For example, in \cite{song2023approximation(a),song2023approximation(b),yang2024spherical}, the encoders are defined linearly without adaptivity, and the inputs are functions in infinite-dimensional spaces. In contrast, we emphasize the feature extraction capability of the encoder (CNNs) and discretized inputs, which is a more practical setting.
    For simplicity of notation, we omit the subscript $\P$ and denote the network as $\nn(J, M, L, K)$. 
\end{remark}





\subsection{Sparse approximation from finite samples}

Sparsity has emerged as one of the most fundamental structural properties of high dimensional data and has been empirically verified in a wide range of applications \citep{mallat1999wavelet,donoho2006compressed,tibshirani1996regression}. From the perspective of approximation theory, sparse properties are known to yield efficient approximation rates in function spaces and have been studied extensively in the context of nonlinear approximation, see, e.g. \cite{devore1993constructive,dai2023universal}. This section presents preliminary concepts and algorithm of sparse approximation for general function spaces.

Let $\Omega \subset \RR^d$ be a compact set with probability measure $\nu$ and $\D_N := \{ u_i \}_{i=1}^N \subset \C(\Omega)$ be a dictionary that consists of $N$ continuous functions. The collection of all the $s$-sparse combination of elements in dictionary $\D_N$ is denoted as
\begin{align*}
    \Sigma_s (\D_N) := \left\{ \sum_{i=1}^N w_i u_i \, \middle | \, w = (w_1, \ldots, w_N), \|w\|_0 \leq s  \right\}.
\end{align*}
For a Banach space $(X, \|\cdot\|_X)$, the best $s$-term approximation error of a functional data $f\in X$ and the worst-case error for a function set $ F\subset X$, with respect to $\D_N $, are defined respectively by
\begin{equation}\label{equ:best-app}
    \sigma_s(f,\D_N)_X := \inf_{g\in\Sigma_s(\D_N)} \|f-g\|_X,  \quad \sigma_s(F,\D_N)_X := \sup_{f\in F}\sigma_s(f,\D_N)_X.
\end{equation}

To approximate a target function $f$ by ones in $\Sigma_s(\D_N)$, one can equivalently search for a sparse coefficient vector $w \in \RR^N$ for the estimator associated with the dictionary $\D_N$. 
In practice, this task is typically formulated in a discretized setting. 
Given a set of samples $\xi := \{\xi_i\}_{i=1}^m \subset \Omega$, we denote the discretized function values and the dictionary matrix by
\[
\tilde{f}^m := (f\;|\;\xi)^\top:= (f(\xi_1), \dots, f(\xi_m))^\top,
\qquad 
\widetilde{D}_N^m := (\tilde{u}_1^m, \dots, \tilde{u}_N^m) \in \mathbb{R}^{m \times N}.
\]
The averaged empirical norms are given by
\begin{align*}
	\|\tilde{f}^m\|_{p,m} &:= \left( \frac{1}{m} \sum_{i=1}^m |f(\xi_i)|^p \right)^{1/p}, 
	\quad 1 \leq p < \infty,
\end{align*}
and
\(
\|\tilde{f}^m\|_{\infty,m} := \max_{i \in [m]} |f(\xi_i)|. 
\)
With these notations, the $s$-term estimator is obtained by solving
\begin{equation}\label{eq:s-coef}
    w_{p,m}^s 
  := \arg\min_{\substack{w \in \mathbb{R}^N \\ \|w\|_0 \le s}}
  \left\| f - \sum_{i=1}^N w_i u_i \;\middle|\;\xi \right\|_{p,m}
  := \arg\min_{\substack{w \in \mathbb{R}^N \\ \|w\|_0 \le s}} 
  \left\| \tilde{f}^m - \widetilde{D}_N^m w \right\|_{p,m}.
\end{equation}
Notice that, given $s$, $\D_N$, and $\xi$, the sparse coefficient vector $w_{p,m}^s$ changes with respect to different $f$ and can be viewed as an operator $w_{p,m}^s := w_{p,m}^s(f)$. For simplicity, we omit explicit dependencies and use $w_{p,m}^s$ throughout this paper.
The corresponding $s$-term estimator is then given by
\begin{equation}\label{eq:s-estimator}
    f_{p,m}^s := \sum_{i=1}^N \big(w_{p,m}^s\big)_i \, u_i.
\end{equation}

To ensure a good approximation result, we impose the following condition on the sampling set~$\xi$, which was also introduced in the literature; see, e.g., \cite{dai2023universal,dai2024random}.

\begin{ass}\label{ass:sparsecoding}
    We assume that the sampling set $\xi:= \{ \xi_j \}_{j=1}^m \subset \Omega$ provides universal discretization for $\Sigma_{2s}(\D_N)$, that is,  there exist constants $C_1, C_2 >0$, such that the following inequalities hold
    \begin{align}\label{def:universal_discretization}
			C_1 \|f\|_{L_p(\Omega, \nu)}^p \leq \frac{1}{m} \sum_{j=1}^{m} |f(\xi_j)|^p \leq C_2 \|f\|_{L_p(\Omega, \nu)}^p, \, \forall f \in \Sigma_{2s}(\D_N).
		\end{align} 
    In particular, the condition involving only the lower bound with constant~$C_1$ is called one-sided universal discretization. 
\end{ass}
This assumption imposes a regularity condition on the sampling set $\xi$, requiring that it provides a stable discretization of the $L_p(\Omega,\nu)$-norm. 
Such conditions arise in several areas, including norm discretization in approximation theory \citep{temlyakov2018universal}, sampling for numerical integration \citep{dick2013high}, and stable embedding conditions in compressed sensing \citep{candes2006stable}. In particular, it rules out poorly distributed sampling sets that could distort function norms when measured through discrete samples. We show that this assumption is mild and quantify the required sample size m for both deterministic and random sampling schemes in Section~\ref{subsec:sampling}.








The next lemma establishes an approximation bound for the sparse estimator of the function $f\in \C(\Omega)$. 
In fact, it holds for $\sigma_s(f, \D_N)_{L_p(\Omega, \nu_\xi)}$ with some probability measure $\nu_\xi$ for any $p \in [1, \infty]$, as shown in the Appendix~\ref{sec:s-apx}.

\begin{lemma}\label{lem:s_apx}
Let $m,s,N \in \mathbb{N}$ with $s \le N/2$ and $1 \le p \leq \infty$. 
Then for any $f \in \C(\Omega)$, we have
\[
\| f - f_{p,m}^s \;|\; \xi \|_{p,m} 
   \le 2^{1/p}\, \sigma_{s}(f, \D_N)_{L_\infty(\Omega,\nu)},
\]
In addition, if $\xi$ satisfies one-sided universal discretization (Assumption~\ref{ass:sparsecoding}), then
\[
\| f - f_{p,m}^s \|_{L_p(\Omega,\nu)} 
   \le C_3 \,\sigma_s(f, \D_N)_{L_\infty(\Omega,\nu)},
\]
where $C_3 > 0$ depends only on~$p$ as given in~\eqref{equ:cp}.
\end{lemma}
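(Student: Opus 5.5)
The plan is to compare $f_{p,m}^s$ with a near-best $s$-term approximant in the $L_\infty$ sense and to use the minimizing property \eqref{eq:s-coef} on the samples.

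\textbf{First inequality.} Fix $\varepsilon>0$ and choose $g=\sum_i v_i u_i\in\Sigma_s(\D_N)$ with $\|f-g\|_{L_\infty(\Omega,\nu)}\le \sigma_s(f,\D_N)_{L_\infty(\Omega,\nu)}+\varepsilon$. Since $f$ and every $u_i$ are continuous, $f-g$ is continuous. If $\nu$ has full support on $\Omega$, then the essential supremum equals the pointwise supremum, so $|f(\xi_j)-g(\xi_j)|\le \|f-g\|_{L_\infty(\Omega,\nu)}$ for every $j$. This is the one subtle point. If $\nu$ does not have full support, I would pass to the measure $\nu_\xi$ mentioned before the lemma, for instance a mixture of $\nu$ and the empirical measure on $\xi$. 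This explains where the factor $2^{1/p}$ comes from: with $\nu_\xi=\frac12(\nu+\frac1m\sum_j\delta_{\xi_j})$ one has
\[
\frac1m\sum_j|h(\xi_j)|^p\le 2\int|h|^p\,d\nu_\xi\le 2\|h\|^p_{L_\infty(\Omega,\nu_\xi)} .
\]
Because $w_{p,m}^s$ minimizes the empirical norm over all $s$-sparse coefficient vectors and $v$ is admissible, we get
\[
\|f-f_{p,m}^s\,|\,\xi\|_{p,m}\le\|f-g\,|\,\xi\|_{p,m}\le 2^{1/p}\|f-g\|_{L_\infty},
\]
and letting $\varepsilon\to0$ gives the first claim. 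I expect this step, namely making the sample values controlled by an essential supremum, to be the only real obstacle. I would first try the full-support argument and fall back on the $\nu_\xi$ formulation otherwise. Existence of the minimizer is not an issue: over each fixed support of size $s$ the problem is a finite-dimensional norm minimization, and there are finitely many supports.

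\textbf{Second inequality.} Write
\[
f-f_{p,m}^s=(f-g)+(g-f_{p,m}^s),
\]
where $g-f_{p,m}^s\in\Sigma_{2s}(\D_N)$; this uses $s\le N/2$ only to make $\Sigma_{2s}$ meaningful. For the first term, $\|f-g\|_{L_p(\Omega,\nu)}\le\|f-g\|_{L_\infty}$ since $\nu$ is a probability measure. For the second term, one-sided universal discretization gives
\[
\|g-f_{p,m}^s\|_{L_p}\le C_1^{-1/p}\|g-f_{p,m}^s\,|\,\xi\|_{p,m}.
\]
Applying the triangle inequality for $\|\cdot\|_{p,m}$ to $g-f_{p,m}^s=(g-f)+(f-f_{p,m}^s)$ and using the first part twice bounds this by $C_1^{-1/p}\,2\cdot2^{1/p}(\sigma_s+\varepsilon)$. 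Collecting terms yields
\[
\|f-f_{p,m}^s\|_{L_p(\Omega,\nu)}\le C_3\,\sigma_s(f,\D_N)_{L_\infty(\Omega,\nu)},\qquad C_3=1+2^{1+1/p}C_1^{-1/p}.
\]
For $p=\infty$ the same argument applies with the max-norm and $C_1^{-1/p}$ replaced by the corresponding one-sided constant.
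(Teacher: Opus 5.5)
Your proof is correct. For the second inequality it takes a shorter route than the paper; the first inequality is argued essentially as in the paper.

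\textbf{Comparison of routes.} The paper (Theorem~\ref{thm:sterm-ls}) uses the two-step formulation \eqref{eq:optimal_subspace}. For each $V\in\X_s(\D_N)$ it compares $\LS_p(\xi,V)(f)$ with elements of $V$ in the auxiliary norm of $L_p(\Omega,\nu_\xi)$. It then introduces an intermediate estimator $f^*$, the best of these least-squares fits in $L_p(\Omega,\nu_\xi)$. Finally it splits $f-f^s_{p,m}=(f-f^*)+(f^*-f^s_{p,m})$ and applies one-sided discretization to $f^*-f^s_{p,m}\in\Sigma_{2s}(\D_N)$.

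You instead split around the near-best approximant $g$ itself and apply one-sided discretization to $g-f^s_{p,m}$. Since $f^s_{p,m}$ minimizes the empirical error over \emph{all} $s$-sparse coefficient vectors, this suffices and $f^*$ is unnecessary. Your constant $1+2^{1+1/p}C_1^{-1/p}$ is smaller than $C_p$ in \eqref{equ:cp}, so your bound implies the stated one.

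The paper's detour mirrors the argument of \cite{dai2023universal}, where the subspace is selected through a continuous norm. Here its only gain is that it also yields the version with $\sigma_s(f,\D_N)_{L_p(\Omega,\nu_\xi)}$. Your argument gives that too if you take $g$ near-best in $L_p(\Omega,\nu_\xi)$, with constant $2^{1/p}(1+2C_1^{-1/p})$.

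\textbf{On the subtle point you flag.} Your fallback to $\nu_\xi$ does not resolve it. It only bounds the sample values by $\|f-g\|_{L_\infty(\Omega,\nu_\xi)}$, which can exceed $\|f-g\|_{L_\infty(\Omega,\nu)}$ when some $\xi_j\notin\operatorname{supp}\nu$. In that situation the first inequality can genuinely fail: take $m=1$, all $u_i$ vanishing at $\xi_1\notin\operatorname{supp}\nu$, and $f$ continuous, vanishing on $\operatorname{supp}\nu$, with $f(\xi_1)\neq 0$. Then the right-hand side is $0$ while the empirical error is $|f(\xi_1)|$.

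What is actually needed is $\xi\subset\operatorname{supp}\nu$, so that $|h(\xi_j)|\le\|h\|_{L_\infty(\Omega,\nu)}$ for every continuous $h$. The paper uses exactly this tacitly when it asserts $L_\infty(\Omega,\nu_\xi)=L_\infty(\Omega,\nu)$. It holds almost surely for i.i.d.\ samples drawn from $\nu$. Under this condition your first step gives the bound even without the factor $2^{1/p}$, which is only an artifact of passing through $\nu_\xi$.
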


The above lemma is inspired by \cite{dai2023universal} and provides an approximation bound with respect to $\sigma_s(f,\D_N)$ when we learn sparse approximation from finite samples. 
The key difference compared to \cite{dai2023universal} lies in the algorithm: ours is purely based on a discretized representation of the function and dictionary. This modification makes our approach more practical for computation, and hence indicates that CNNs may serve as sparse approximators.

\section{Rates of approximating nonlinear functionals}\label{sec:app_general}

This section presents the approximation error of functional neural networks for functionals possessing certain smoothness properties on general continuous domains. We begin by analyzing the sparse approximation capability of the CNN part of functional neural networks.

\subsection{Sparse approximation via CNN encoders}
Algorithms to solve the sparse coding problem \eqref{eq:s-coef} are well studied in the literature. If $N \gg m$, the classical sparse coding problem aims to find the sparsest coefficient by solving the associated optimization problem
\[
\min_{w} \;\|w\|_0 \quad \text{s.t.} \quad \widetilde{f}^m = \widetilde{D}_N^m w.
\]
In the presence of noise, the basis pursuit denoising (BPDN) \citep{chen2001atomic} or the least absolute shrinkage and selection operator (LASSO) \citep{tibshirani1996regression} are typically adopted.
A large body of work has investigated the convergence behavior of these sparse coding algorithms, where a central role is played by the notion of {mutual coherence}, defined for a matrix $A \in \mathbb{R}^{m \times N}$ with columns $a_k$ as 
\begin{align*}
		\mu( A) = \max_{i\neq j} \frac{\left| a_i^\top a_j\right|}{\| a_i\|_2\|a_j\|_2}.
\end{align*}
It quantifies the maximum similarity between distinct columns: a smaller value indicates that the columns are nearly orthogonal (low redundancy). 
Moreover, mutual coherence provides fundamental guaranties for sparse recovery, since a smaller $\mu(A)$ allows one to recover signals of higher sparsity. In the following, we consider $m < N$ so that the discretized dictionary $\tilde{D}^m_N$ is redundant.
The book \citep{elad2010sparse} offers an in-depth treatment of these ideas and the associated sparse coding problems.
In recent years, algorithms for solving sparse coding problems \eqref{eq:s-coef} through deep learning have gained wide popularity under the paradigm of {learning to optimize}, see, e.g., \cite{gregor2010learning,de2024deep}. Inspired by this approach, we employ CNNs for sparse coding. Let
\begin{align}\label{notation:sbar}
    \bar{s} := \frac{1}{2}\left(1 + \frac{1}{\mu(\widetilde{D}_N^m)}\right).
\end{align}
Then next theorem specifies the associated approximation rate that can be attained by CNNs. The proof is given in Appendix~\ref{subsec:cnn-rcvy}.





\begin{theorem}\label{thm:nn-estimator}
	Let $m, s, k, J, N \in \mathbb{N}$, $1 \le p < \infty$, $B > 0$, and assume $s < \bar{s}$.
    Then there exists a CNN $\Phi \in \nn_{\operatorname{CNN}}(O(J\log_k(m+N)),J(m+N)^2) $ with kernel size $k$, such that for any $f\in F := \{ f \in \C(\Omega) : \|f\|_{\C(\Omega)} \le B \}$
    \begin{enumerate}
        \item $\| \Phi(\tilde{f}^m) \|_0 \leq s$;
        \item  $\left\| w_{p,m}^s - \Phi( \tilde{f}^m ) \right\|_{p} \leq   C_4 e^{-C_5 J} + C_6 \sigma_s(F, \D_N)_{L_\infty(\Omega,\nu)}$.
    \end{enumerate}
	In addition, if the one-sided universal discretization (Assumption~\ref{ass:sparsecoding}) holds, then 
	\begin{align}\label{equ:app-error}
		\|f-f_s\|_{L_p(\Omega,\nu)} 
		\leq   C_7 e^{-C_5 J} + C_8 \sigma_s(F, \D_N)_{L_\infty(\Omega,\nu)}.
	\end{align}
    where $f_s:= \sum_{i=1}^N \Phi( \tilde{f}^m )_i u_i $ and the constants $C_4,\dots,C_8$ depend on $p$, $m$, $B$, $s$, $N$, and $\tilde{D}^m_N$, as given in \eqref{eq:C4}, \eqref{eq:C5}, \eqref{eq:C6}, \eqref{eq:C7}, and \eqref{eq:C8} in Appendix~\ref{subsec:cnn-rcvy}.
\end{theorem}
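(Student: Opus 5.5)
We will realize an unrolled sparse coding iteration, namely iterative hard thresholding or a LISTA-type soft-thresholding scheme with support control, as a CNN, and combine its convergence under the mutual coherence condition $s<\bar s$ with Lemma~\ref{lem:s_apx}. First normalize the columns of $\widetilde D_N^m$, writing $\widetilde D_N^m = A\Lambda$ with $\Lambda$ the diagonal matrix of column norms. Since $s<\bar s = \frac12(1+1/\mu)$, every set of $2s$ columns of $A$ is well conditioned: by Gershgorin, the Gram matrix restricted to any support of size $\le 2s$ has eigenvalues in $[1-(2s-1)\mu,\,1+(2s-1)\mu]$, and $(2s-1)\mu<1$. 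From the first part of Lemma~\ref{lem:s_apx}, the residual $e:=\tilde f^m-\widetilde D_N^m w^s_{p,m}$ satisfies $\|e\|_{p,m}\le 2^{1/p}\sigma_s(F,\D_N)_{L_\infty}$. Hence $\|e\|_2\le m^{1/2}\|e\|_\infty \le C m^{1/2+1/p}\sigma_s$. So $\tilde f^m = \widetilde D_N^m w^\star + e$ with $w^\star:=w^s_{p,m}$ being $s$-sparse, which is a noisy sparse recovery problem.

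Next we analyze a thresholding iteration $z^{t+1}=H\big(z^t + \eta A^\top(\tilde f^m - A z^t)\big)$, where $H$ is either the soft threshold with a decreasing threshold schedule $\lambda_t$ (as in analyses of LISTA with coherence, e.g.\ Chen et al.) or a ReLU-implementable thresholding. The standard coherence argument proceeds by induction. If $\operatorname{supp}(z^t)\subset\operatorname{supp}(w^\star)$ and $\|z^t-w^\star\|_\infty\le \rho^t\|w^\star\|_\infty + c\|A^\top e\|_\infty$, then off-support entries of the pre-threshold vector are bounded by $(2s-1)\mu\|z^t-w^\star\|_\infty+\|A^\top e\|_\infty$. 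Choosing $\lambda_t$ just above this bound keeps the support inside $\operatorname{supp}(w^\star)$, giving $\|z^{J}\|_0\le s$ (item 1). The on-support error then contracts with ratio $\rho=(2s-1)\mu<1$, up to the noise term. Since $\|w^\star\|_\infty$ is bounded in terms of $B$, $m$ and the restricted conditioning, after $J$ steps we get $\|z^J-w^\star\|_\infty\le C\rho^J + C'\|e\|_2$. Passing to $\|\cdot\|_p$ on an at most $2s$-sparse difference costs a factor $(2s)^{1/p}$, which gives item 2 with $C_5=\log(1/\rho)$. Undoing $\Lambda$ only changes constants.

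The network realization is as follows. Each iteration is an affine map $z\mapsto (I-\eta A^\top A)z+\eta A^\top \tilde f^m$ followed by thresholding. Soft thresholding equals $\sigma(x-\lambda)-\sigma(-x-\lambda)$, so it uses two ReLU channels. A dense matrix of size $(m+N)\times(m+N)$, acting on the concatenated state $(\tilde f^m,z)$ with $\tilde f^m$ carried forward via $\sigma(x)-\sigma(-x)$, can be written as a composition of $O(\log_k(m+N))$ multichannel convolutions with kernel size $k$. This follows the known fact that convolutional factorization or multichannel stride constructions realize any linear map, as in the cited CNN universality works, using $O((m+N)^2)$ parameters. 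Stacking $J$ iterations gives depth $O(J\log_k(m+N))$ and $O(J(m+N)^2)$ parameters.

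Finally, for \eqref{equ:app-error}, we split
$\|f-f_s\|_{L_p}\le \|f-f^s_{p,m}\|_{L_p}+\|\sum_i (w^\star-\Phi(\tilde f^m))_i u_i\|_{L_p}$. The first term is bounded by $C_3\sigma_s$ by Lemma~\ref{lem:s_apx}. The second term is at most $\max_i\|u_i\|_{\C}\,\|w^\star-\Phi\|_1\le \max_i\|u_i\|_{\C}(2s)^{1-1/p}\|w^\star-\Phi\|_p$. Alternatively, since the difference lies in $\Sigma_{2s}(\D_N)$, one can use one-sided discretization and bound it through $\|\widetilde D_N^m(\cdot)\|_{p,m}$. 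We expect the main obstacle to be the support-control induction with the noise term: the thresholds must dominate both the coherence leakage and $\|A^\top e\|_\infty$ uniformly over $f\in F$, while the error stays proportional to $\sigma_s$. If $\sigma_s$ is not small relative to the entries, the induction may need to be replaced by the trivial bound, which is absorbed into $C_6\sigma_s$ because everything is bounded by $B$.
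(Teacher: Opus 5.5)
Your plan is essentially the paper's proof. The paper also controls the residual with the first part of Lemma~\ref{lem:s_apx} and normalizes the columns by a diagonal matrix. It bounds the coefficients through the lower estimate of Lemma~\ref{lem:mutual}. It runs the LISTA-CP soft-thresholding iteration of Chen et al.\ with thresholds defined as a supremum over the input class, so that they are fixed network weights. Support containment gives item 1, and the contraction factor $(2s-1)\mu$ gives $C_5=-\ln((2s-1)\mu)$. Each step is realized as a CNN by citing a factorization result. The $L_p$ bound comes from the split $\|f-f^s_{p,m}\|+\|f^s_{p,m}-f_s\|$ with one-sided discretization. The only difference is cosmetic: you track the error in $\ell_\infty$, while the paper tracks $\sup\|x^{(k)}-x^*\|_1$.

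One step, however, fails as you wrote it. Set $x^\star:=\Lambda w^\star$, the target of the normalized problem $\tilde f^m=Ax^\star+e$, and $S:=\operatorname{supp}(x^\star)$.

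For $i\notin S$ the pre-threshold entry is $\sum_{j\in S}A_i^\top A_j(x^\star_j-z^t_j)+A_i^\top e$. Since the sum has at most $s$ terms, its modulus is at most $s\mu\|z^t-x^\star\|_\infty+\|A^\top e\|_\infty$, not $(2s-1)\mu\|z^t-x^\star\|_\infty+\|A^\top e\|_\infty$.

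The distinction matters because soft thresholding adds the threshold to the on-support error. For $i\in S$ one gets $|z^{t+1}_i-x^\star_i|\le (s-1)\mu\|z^t-x^\star\|_\infty+\|A^\top e\|_\infty+\lambda_t$. If $\lambda_t$ sits just above your bound, the resulting ratio is $(3s-2)\mu$. This can exceed $1$ under $s<\bar s$: take $\mu=0.3$ and $s=2$, so that $(2s-1)\mu=0.9$ but $(3s-2)\mu=1.2$.

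The correct choice, as in the paper, is $\lambda_t:=s\mu E_t+\bar n$. Here $E_t:=\sup_{f\in F}\|z^t-x^\star\|_\infty$ and $\bar n:=\sup_{f\in F}\|A^\top e\|_\infty\le 2^{1/p}m^{1/2+1/p}\sigma_s(F,\D_N)_{L_\infty(\Omega,\nu)}$. This yields $E_{t+1}\le(2s-1)\mu E_t+2\bar n$.

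With this choice your closing worry also disappears. The threshold already contains the uniform noise level, so support control and the recursion hold whatever the size of $\sigma_s(F,\D_N)_{L_\infty(\Omega,\nu)}$. No fallback to a trivial bound is needed, and such a fallback would not deliver item 1 anyway.

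Finally, your first option for the last term, via $\max_i\|u_i\|_{\C(\Omega)}$, is valid. However, it introduces a constant that is not determined by $\widetilde D_N^m$. The discretization route is the one that produces $C_7$ and $C_8$ in the stated form.
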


The condition $s < \bar{s}$ indicates that the admissible sparsity level depends on the coherence of the dictionary. Intuitively, when dictionary elements are highly correlated, only very sparse representations can be reliably distinguished from sampled data, which imposes an upper bound on s. 
For instance, consider two orthogonal matrices $U, V \in \RR^{m \times m}$. Defining $A := [U, V]$, we obtain $1/\sqrt{m} \leq \mu(A) \leq 1$. Consequently, $s \leq (\sqrt{m} + 1)/2$, with equality achieved when $U$ is the identity matrix and $V$ is the Fourier matrix \cite{elad2010sparse}.
Such coherence-based sparsity bounds are classical in sparse recovery and ensure the uniqueness and stability of sparse representations.

The first error term in \eqref{equ:app-error} decays exponentially as the number of CNN layers increases, 
while the second term reflects the sparse approximation of $f$ with respect to the chosen dictionary $\mathcal{D}_N$. 
If $f$ admits an $s$-term representation using elements of $\mathcal{D}_N$, then this second term becomes small. 
In Theorem~\ref{thm:nn-estimator}, the sparsity condition on $s$ is imposed because, once the problem is discretized, this assumption guaranties the convergence of the sparse coding algorithm. Lemma \ref{lem:sparsity_estimation} characterizes this sparsity requirement in a specific setting.
Here, we do not explicitly state the expression for the sample size $m$, as it is implicitly contained in Assumption~\ref{ass:sparsecoding}. 
Section~\ref{subsec:sampling} provides the required sample sizes for deterministic and random sampling schemes for several specific dictionaries, and we shall show its advantages.


Theorem~\ref{thm:nn-estimator} is rooted in both approximation theory and sparse coding. The first universality result for CNNs was established in \cite{zhou2020universality}. In addition, approximation rates of CNNs have been systematically investigated across a range of function spaces; see, for instance, \cite{zhou2020universality} for smooth functions in Hilbert spaces, \cite{fang2020theory} for Sobolev spaces on the sphere, and \cite{zhou2020theory,mao2021theory,mao2023approximating,li2024approximation} for compositional function spaces that highlight their feature-learning capabilities. From the perspective of sparse coding, \cite{papyan2017convolutional} showed that convolutional neural networks can effectively solve sparse coding problems involving convolutional dictionaries, thereby revealing a fundamental link between CNNs and sparse coding algorithms. Subsequently, \cite{li2024convergence} generalized this result to broader classes of sparse coding problems with general activation functions, employing tools from approximation theory.
To the best of our knowledge, this is the first study to examine the sparse approximation capability of CNNs with respect to general dictionaries, which can be applied to a wide range of theoretical and practical settings.




\subsection{Error bound for learning H\"older continuous functionals}

\begin{figure}
    \centering
    \includegraphics[width=1.0\linewidth]{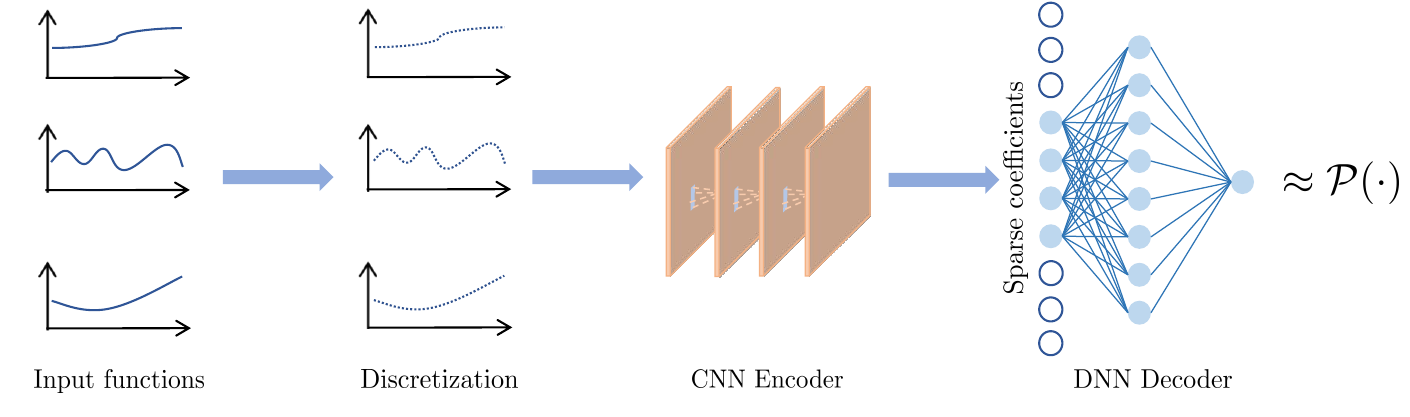}
    \caption{Functional learning pipeline proposed in Theorem~\ref{thm:holder} through functional neural networks. }
    \label{fig:holder}
\end{figure}

Consider a nonlinear functional $\mathcal{P}:  \C(\Omega) \to \mathbb{R}$. 
Its modulus of continuity is defined by
\begin{align*}
    \omega_{\mathcal{P}}(r;F)_p 
    := \omega_{\mathcal{P}}(r)_p
    := \sup\left\{|\mathcal{P}(f)-\mathcal{P}(g)| : \|f-g\|_{L_p(\Omega)} \le r, \; \text{for}\; f,g \in F \right\},
\end{align*}
where the dependence on $F$ is omitted when clear from the context.
The class of H\"older continuous functionals $C^{0,\beta}(\C(\Omega))_p$, $\beta \in (0,1]$ is given by
\begin{align*}
    C^{0,\beta}(\C(\Omega))_p
    := 
    \left\{
        \mathcal{P}: \C(\Omega)\rightarrow \RR : \omega_{\mathcal{P}}(r; \C(\Omega) )_p \le r^\beta
    \right\}.
\end{align*}

The following theorem establishes the approximation ability of functional neural networks for a 
nonlinear functional $\mathcal{P}$ defined on the function class 
\(
F = \{ f \in \C(\Omega) : \|f\|_{\C(\Omega)} \le B, B>0 \}.
\) The proof is given in Appendix~\ref{sec:holder}.

\begin{theorem}[H\"older continuous functional approximation]\label{thm:holder}
Let $m, s, K, M, N \in \mathbb{N}$, and let $B > 0$ and $\beta \in (0,1]$. 
Under Assumption~\ref{ass:sparsecoding}, for any nonlinear functional $\mathcal{P} \in C^{0,\beta}(\C(\Omega))_p$, 
there exists a functional neural network
\[
\Phi \in \nn\big(M \log(m+N),\, M(m+N)^2,\, \ln K,\, K\big)
\]
such that
\begin{align*}
    \sup_{f \in F}
    \left|
        \mathcal{P}(f) - \Phi(\widetilde{f}^m)
    \right|
    \le 
    \inf_{s <\bar{s}}
    \left\{
        C_{7}^{\beta} e^{-C_{5} \beta M}
        + C_{8}^{\beta} \big(\sigma_s(F,\mathcal{D}_N)_{L_\infty(\Omega,\nu)}\big)^{\beta}
        +C_9 (K/\ln K)^{-\beta/s}
    \right\},
\end{align*}
where $\bar s$ is given in \eqref{notation:sbar} and the constants $C_5,C_7,C_8, C_9$ depend on $p$, $m$, $B$, $s$, $N$, and $\tilde{D}^m_N$ as given in \eqref{eq:C5}, \eqref{eq:C7}, \eqref{eq:C8}, and \eqref{eq:C9} 
in Appendix~\ref{subsec:cnn-rcvy}.
\end{theorem}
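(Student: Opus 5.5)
Fix any admissible sparsity level $s<\bar s$. We prove the bound for this $s$ and then take the infimum. The approach is the standard encoder--decoder split. Theorem~\ref{thm:nn-estimator} supplies a CNN encoder $\Phi_{\mathrm{enc}}$ with $J=M$ layers, and a ReLU DNN $\psi$ then approximates the reduced functional on the (at most) $s$-dimensional coefficient representation. We write
\[
\mathcal{P}(f)-\psi(\Phi_{\mathrm{enc}}(\tilde f^m)) = \big[\mathcal{P}(f)-\mathcal{P}(f_s)\big] + \big[\mathcal{P}(f_s) - \psi(\Phi_{\mathrm{enc}}(\tilde f^m))\big],
\]
where $f_s=\sum_i \Phi_{\mathrm{enc}}(\tilde f^m)_i u_i$. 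For the first bracket we use H\"older continuity, $|\mathcal{P}(f)-\mathcal{P}(f_s)|\le \|f-f_s\|_{L_p}^\beta$. Inserting \eqref{equ:app-error} and using $(a+b)^\beta\le a^\beta+b^\beta$ for $\beta\in(0,1]$ gives the first two terms $C_7^\beta e^{-C_5\beta M}+C_8^\beta\sigma_s(F,\D_N)^\beta$. The CNN size $O(M\log_k(m+N))$ layers and $M(m+N)^2$ parameters matches the stated class.

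For the second bracket, define $G:\RR^N\to\RR$ by $G(w)=\mathcal{P}(\sum_i w_iu_i)$ on a bounded set $W$ containing all encoder outputs. Boundedness follows from item 2 of Theorem~\ref{thm:nn-estimator} together with the boundedness of $w^s_{p,m}$, which can be controlled by $B$ and the discretization constants. Since $\|\sum_i w_iu_i-\sum_i w'_iu_i\|_{L_p}\le \max_i\|u_i\|_\infty\|w-w'\|_1$, the function $G$ is $\beta$-H\"older on $W$, with constant depending on $N$ and the dictionary.

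The key point is that encoder outputs are $s$-sparse, by item 1 of Theorem~\ref{thm:nn-estimator}. So $G$ only needs to be approximated on the union of $\binom{N}{s}$ coordinate $s$-planes. On each such plane, $G$ restricted to a cube in $\RR^s$ is a $\beta$-H\"older function of $s$ variables. Standard ReLU approximation results for H\"older functions (Yarotsky / Shen--Yang--Zhang type) give a network with depth $O(\ln K)$ and $K$ nonzero weights achieving error $C(K/\ln K)^{-\beta/s}$.

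\textbf{Main obstacle.} The hardest step is combining the $\binom{N}{s}$ local approximants into a single DNN that is correct on every sparse support without knowing the support in advance. My first attempt would avoid any support selection: approximate the extension of $G$ to the whole box, but exploit sparsity through a partition of unity. The partition would be built with ReLU ``bump'' networks in each coordinate, so that only cells touching the sparse pattern contribute. Since each encoder output has at most $s$ nonzeros, its cell lies in an $s$-dimensional face, and on that face the piecewise-linear interpolant depends only on $s$ coordinates.

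A cleaner alternative, which I would adopt if the first becomes messy, is to sum over supports $S$ the product of $\psi_S(w_S)$ with a ReLU gate that vanishes unless the support of $w$ lies in $S$. This costs a factor $\binom{N}{s}$ in size, and that factor is absorbed into $C_9$, since $C_9$ may depend on $N,s,m,B$ and $\tilde D_N^m$. The gating has to be continuous, so there will be a small transition region near zero coordinates. To handle it, I would use the H\"older continuity of $G$, which shows that zeroing tiny coordinates perturbs $G$ by at most a quantity of the same order as the target error.

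Finally, taking the infimum over $s<\bar s$ yields the theorem.
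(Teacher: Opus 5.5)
Your proposal is correct, and its main line is the paper's proof. The paper uses the same split into $\mathcal{P}(f)-\mathcal{P}(f_s)$ and $\mathcal{P}(f_s)-\psi(\cdot)$, and handles the first term exactly as you do, via \eqref{equ:app-error} and subadditivity of $t\mapsto t^\beta$. It also confines the encoder outputs to a box $[-\bar B,\bar B]^N$ and rescales them. Your option (a) is precisely Lemma~\ref{thm:yarotsky}: a tensor-product trapezoidal partition of unity that keeps only the cells meeting $A_s=\{w:\|w\|_0\le s\}$. The number of cells then grows like $n^s$ rather than $n^N$ in the grid parameter $n$, so no support selection or gating is needed.

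The one genuine difference is how you get the H\"older constant of $\hat{\mathcal{P}}=\mathcal{P}\circ\D_N$. The paper proves it only on $A_s$, via Lemma~\ref{lem:modulus_equiv} (universal discretization plus the coherence bound of Lemma~\ref{lem:mutual}). This yields the constant $\tilde c_1$ that appears in \eqref{eq:C9}. Your $\ell_1\to L_\infty$ estimate gives a cruder constant of order $(\sqrt{N}\max_i\|u_i\|_\infty)^\beta$, so you prove the bound with a modified $C_9$. In exchange, you obtain H\"older continuity on the whole box, which is what Lemma~\ref{thm:yarotsky} formally assumes. The paper instead passes from the modulus on $A_s$ to a $C^{0,\beta}$ norm on the full cube without further argument.

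Two small corrections. First, the bound on $w^s_{p,m}$ should come from the coherence condition $s<\bar s$ through Lemma~\ref{lem:mutual}; the paper gets $\|L^{-1}w^s_{p,m}\|_\infty\le 6Bm$ this way. The discretization constants control $\|\D_N w\|_{L_p}$ but not $\|w\|$ without a Riesz-type assumption. Second, in your fallback (b) the gates must sum to one on $A_s$. A vector with fewer than $s$ nonzeros lies in many supports $S$, so plain support indicators would overcount. A valid choice is to expand $\prod_{i=1}^N[\rho(w_i)+(1-\rho(w_i))]$, with $\rho$ a piecewise-linear bump satisfying $\rho(0)=1$, and keep the terms indexed by $|T|\le s$.
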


    
    

Figure~\ref{fig:holder} provides a graphical depiction of how neural networks learn functionals as described in Theorem~\ref{thm:holder}.
The first and last terms of the upper bound in Theorem~\ref{thm:holder} capture the learning capacities of the CNN encoder and the DNN decoder in the functional network. As their respective numbers of parameters increase, the encoder achieves an exponential decay in error, whereas the decoder exhibits a polynomial decay. The second term depends on the choice of the dictionary, and we provide more explicit rates for several commonly used dictionaries and function spaces in Section~\ref{sec:examples} (Corollary~\ref{cor:main_decay} and \ref{cor:main_mixed_sobolev}). Moreover, in many practical applications, such as image processing, this term is typically very small.

\begin{remark}[Curse of dimensionality]\label{rem:cod}
    The error in the last term decays at the rate $(K/\ln K)^{-\beta/s}$, which depends on the sparsity level $s$ rather than on $N$. 
This is advantageous because $N$ typically grows exponentially with dimension $d$ when $\Omega \subset \mathbb{R}^d$; 
for example, the space of polynomials of degree $n$ over $[-1,1]^d$ contains $O(n^d)$ basis functions, 
which leads to extremely slow decay rates when the dependence is on $N$.
\end{remark}

\section{Sample size requirements for various sampling schemes}\label{subsec:sampling}
As mentioned in Section~\ref{sec:app_general}, the requirement of sample locations and dictionary is contained
in the Assumption~\ref{ass:sparsecoding}, which ensures that the $L_p$ norm is equivalent to 
the discrete $\ell_p$ norm on the sample set $\{\xi_j\}_{j=1}^m$ for all 
$f \in \Sigma_{2s}(\mathcal{D}_N)$. 
In this section, we demonstrate how many samples are needed for 
Assumption~\ref{ass:sparsecoding} to hold for several commonly used dictionaries 
(such as orthogonal bases), under both 
deterministic and random sampling schemes. 
As we show later, the sparsity pattern enables a substantial decrease in the necessary sample size. We begin by specifying the required properties of the dictionary.

\begin{ass}\label{ass:DN}
Assume that $\mathcal{D}_N$ satisfies:
\begin{enumerate}[label=(\theass\alph*),ref=\theass\alph*]
    \item $\sup_{x\in\Omega} |u_i(x)| \le 1$, $i \in [N]$. \label{ass:bndD}
    \item There exists $R>0$ such that 
    $\sum_{i=1}^N |w_i|^2 \le R \bigl\|\sum_{i=1}^N w_i u_i\bigr\|_2^2$ 
    for any $w \in \mathbb{C}^N$. \label{ass:riesz}
\end{enumerate}
\end{ass}

As noted in~\cite{dai2024random}, Assumption \ref{ass:riesz}, also called Riesz basis, is relatively mild, as it can always be satisfied by choosing $R$ as the reciprocal of the smallest eigenvalue of the matrix $D$, whose entries are defined by $D_{ij} = \langle u_i, u_j \rangle_{L_2(\Omega)}$. In particular, this condition rules out dictionaries containing highly correlated or nearly linearly dependent elements, which could allow large coefficient vectors to cancel each other and produce functions with very small norm and make the representation unstable. Many commonly used dictionaries satisfy Assumption~\ref{ass:DN}.
Examples include orthonormal systems such as Fourier bases, spherical harmonics, and orthogonal polynomial systems (e.g., Legendre or Chebyshev polynomials), for which $R=1$. It is also satisfied by compactly supported wavelet systems, B-splines, and finite element bases (such as piecewise linear or higher-order hat functions); see, e.g., \cite{cohen1992biorthogonal}, \cite{brenner2008mathematical}.

\subsection{Deterministic sampling}
Deterministic sampling provides an explicit way to guarantee the universal 
discretization property required in Assumption~\ref{ass:sparsecoding}. 
When the dictionary satisfies  Assumption~\ref{ass:DN}, 
the recent result of \cite{dai2023universal} shows that one can choose a relatively 
small number of deterministic sample points to achieve this property. The following lemma 
summarizes this result.

\begin{lemma}[\cite{dai2023universal}]\label{thm:dai_samples}
    Let {$1\leq p \leq 2$}. If the dictionary $\D_N$ satisfies Assumptions \ref{ass:DN}, then for any $1\leq s \leq N/2$, there exist $\xi \in \Omega^m$ with
    \begin{align*}
        m \leq Cs \log N \cdot \log^2(4s) (\log (4s)+\log\log N),
    \end{align*}
    such that $\xi$ provides universal discretization (Assumption~\ref{ass:sparsecoding}) with $C_1=1/2$ and $C_2 = 3/2$.
\end{lemma}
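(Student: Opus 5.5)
This lemma is quoted from \cite{dai2023universal}, so our plan is to reconstruct their argument rather than invent a new one. The route is probabilistic: draw $\xi_1,\dots,\xi_m$ i.i.d.\ from $\nu$ and show that, with positive probability, the two-sided inequality \eqref{def:universal_discretization} holds with $C_1=1/2$, $C_2=3/2$ simultaneously for all $f\in\Sigma_{2s}(\D_N)$. A positive probability yields a deterministic point set $\xi\in\Omega^m$. By homogeneity it suffices to control
\[
\sup_{f\in\Sigma_{2s}(\D_N),\ \|f\|_{L_p}=1}\Bigl|\frac1m\sum_{j=1}^m |f(\xi_j)|^p-\|f\|_{L_p}^p\Bigr|\le \tfrac12 .
\]

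\textbf{Reduction to an empirical-process bound.} We would bound the expected supremum by symmetrization, introducing Rademacher signs $\varepsilon_j$, and then apply Dudley's entropy integral with respect to the random semi-metric $\max_j |f(\xi_j)-g(\xi_j)|$. Assumption~\ref{ass:DN} is used here.
\begin{itemize}
\item By (\ref{ass:riesz}), any $f=\sum w_iu_i$ with $\|f\|_{L_2}\le 1$ has $\|w\|_2\le\sqrt R$.
\item For $\|w\|_0\le 2s$ this gives $\|w\|_1\le\sqrt{2sR}$.
\item By (\ref{ass:bndD}) it follows that $\|f\|_\infty\le\sqrt{2sR}$.
\end{itemize}
Thus the unit ball of $\Sigma_{2s}(\D_N)$ in $L_2$ sits inside a multiple of the symmetric convex hull of $\D_N$ intersected with $2s$-sparse vectors. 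For $1\le p\le 2$ one passes from $L_p$ to $L_2$ via a Nikol'skii-type inequality in sparse spaces. Alternatively, one first proves the $L_2$ case and then uses the standard interpolation trick (Bernstein plus a chaining over level sets $|f|^p$) as in Temlyakov's and Dai et al.'s works.

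\textbf{Entropy numbers.} The key quantitative input is the covering of the $\ell_1$-type hull in the empirical sup-norm. We would use Maurey's empirical method: any element of $\sqrt{2s}\cdot$conv$(\pm\D_N)$ is approximated within $\epsilon$ in $\max_j|\cdot(\xi_j)|$ by an average of $k\sim s\log m/\epsilon^2$ dictionary elements. This gives $\log\mathcal N(\epsilon)\lesssim s\epsilon^{-2}\log N\log m$ at large scales. At small scales we use the volumetric bound $\log\mathcal N(\epsilon)\lesssim s\log(N/\epsilon)$, which comes from choosing a support of size $2s$ and a net in $\RR^{2s}$. Integrating $\sqrt{\log\mathcal N}$ from the small scale up to $1$ yields the expected supremum of order $\sqrt{s\log N\log m/m}$ times $\log$ factors. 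It also yields the characteristic $\log^2(4s)$ and $\log(4s)+\log\log N$ terms, which arise from splitting the Dudley integral at scale $\sim 1/s$. Requiring this to be at most $1/4$ and solving for $m$ (with $\log m\lesssim\log s+\log\log N$ after self-consistency) produces $m\le Cs\log N\log^2(4s)(\log(4s)+\log\log N)$.

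\textbf{Conclusion and main obstacle.} A concentration step finishes the argument: Talagrand/Bernstein for suprema, with the uniform bound $\|f\|_\infty^p\lesssim s^{p/2}$. It upgrades the expectation bound to a bound with probability $>0$, fixing $C_1=1/2$ and $C_2=3/2$. The main obstacle is the entropy estimate with the sharp logarithmic factors. The naive bound loses a factor $s$ or $\log m$, which breaks the claimed rate. We would follow \cite{dai2023universal} in using the chaining bound of Rudelson–Vershynin/Bourgain type for sparse $\ell_1$ hulls, which is what yields the stated $\log^2(4s)$ dependence. Since the paper cites this lemma, the rigorous proof can ultimately defer to that reference.
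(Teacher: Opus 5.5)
Your closing deferral is exactly what the paper does, since the paper gives no proof of this lemma and simply imports it from \cite{dai2023universal}. Within the paper it also follows at once from Lemma~\ref{thm:dairandom}: for $m$ a suitable constant multiple of the threshold there, the success probability is positive, so some deterministic $\xi$ works. The constant $C$ then depends on $p$ and $R$, a dependence the statement silently absorbs. The real question is whether your reconstruction stands on its own.

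\textbf{The case $p=2$ works.} The increments of $\sum_j\varepsilon_j|f(\xi_j)|^2$ are bounded by $2\sup_h\bigl(\sum_j|h(\xi_j)|^2\bigr)^{1/2}\|f-g\|_{\ell_\infty(\xi)}$. Maurey gives $\log\mathcal{N}(u)\lesssim sRu^{-2}\log N\log m$ for $\sqrt{2sR}\,\mathrm{conv}(\pm\D_N)$ in $\ell_\infty(\xi)$, and the volumetric bound covers small scales. Dudley plus the usual self-bounding step then give $\mathbb{E}\sup\lesssim\sqrt{sR\log N\log m/m}\,\log(4s)$. This yields the stated $m$ once $\log m\lesssim\log(4s)+\log\log N$; it is the Rudelson--Vershynin/Rauhut RIP argument. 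You do not need Talagrand's inequality at the end, since $\mathbb{E}\sup\le 1/4$ and Markov's inequality already give positive probability.

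\textbf{The gap is the range $1\le p<2$.} The statement includes this range, and standard RIP arguments do not cover it. Under $\|f\|_{L_p}=1$, Nikol'skii only gives $\|f\|_{L_2}\le(2sR)^{1/p-1/2}$, i.e.\ $f\in(2sR)^{1/p}\mathrm{conv}(\pm\D_N)$. (Also $\|f\|_\infty^p\le 2sR$ here, not $s^{p/2}$.) The increments of $\sum_j\varepsilon_j|f(\xi_j)|^p$ are controlled only by roughly $\sqrt{m}\,\|f-g\|_{\ell_\infty(\xi)}$ or $\sqrt{m}\,\|f-g\|_{\ell_\infty(\xi)}^{p/2}$.
\begin{itemize}
\item With the first metric, your Dudley--Maurey computation returns $m$ of order $(sR)^{2/p}\log N\log m\log^2(4s)$. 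This is quadratic in $s$ at $p=1$.
\item With the second metric, Maurey's bound $\log\mathcal{N}(t)\lesssim(sR)^{2/p}t^{-2}\log N\log m$ becomes, under $t=u^{2/p}$, an integrand of size $u^{-2/p}$. This is not integrable at $0$ when $p<2$, and the volumetric cutoff still leaves a superlinear power of $s$.
\end{itemize}

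What is missing is an $L_\infty$-entropy bound for the $L_p$-unit ball of $\Sigma_{2s}(\D_N)$ itself, of the form $\log\mathcal{N}(t)\lesssim s\,t^{-p}\Lambda$ with $\Lambda$ polylogarithmic in $s$ and $N$. Such a bound comes from a multiscale interpolation between the $L_2$ and $L_\infty$ bounds, not from Nikol'skii alone. It must be combined with a Bernstein-type chaining that converts entropy bounds into discretization; this is what the entropy-number machinery in the cited works provides.

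Note also that $L_2$-discretization does not imply $L_p$-discretization with constants $1/2$ and $3/2$. So ``prove $p=2$, then pass to $L_p$'' is not a reduction. Your ``standard interpolation trick'' names the right tool but leaves precisely this step undone.
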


The above lemma shows that the required sample size is of order $O(s\log^3 s\cdot\log N\cdot\log\log N)$, which is 
substantially smaller than those in related works whenever $s \ll N$.
For example, orthonormal bases with uniformly bounded $L_\infty$ norms and 
nodal bases used in interpolation satisfy Assumption~\ref{ass:DN}, and therefore 
our results apply directly to these settings. 
In contrast, in previous work using spherical harmonics, the number of samples must scale 
linearly with $N$ \citep{yang2024spherical,feng2023generalization}. 
Similarly, \cite{zhou2024approximation} employs a fixed uniform grid for interpolation 
in a reproducing kernel Hilbert space during the discretization step, where again the 
sample size is required to be equal to $N$.


\subsection{Random sampling}
While deterministic constructions provide existence guaranties and valuable theoretical insight, they are often challenging to realize in practice. 
Therefore, we next show that random sampling achieves the same guaranties with high probability 
and with a comparable sample size. The following lemma also appears in \cite{dai2024random}, and therefore we omit the proof here.

\begin{lemma}[\cite{dai2024random}]\label{thm:dairandom}
    Assume that $\mathcal{D}_N$ satisfies Assumption~\ref{ass:DN}.  
Let $\xi$ be a set of $m$ independent and identically distributed random points on $\Omega$ 
with probability distribution $\nu$. Then for any $1 \le p \le 2$ and any integer 
$1 \le s \le N/2$, the sample set $\xi$ satisfies Assumption \ref{ass:sparsecoding} with constants $C_1 = \frac{1}{2}$ and $C_2 = \frac{3}{2}$, 
with probability at least
$ 1-2 \exp\{-cm/ (R s \log^2(4Rs))\}$
provided that
\[
    m \ge C R s \log N \cdot (\log(4 R s))^2 \cdot \bigl( \log(4 R s) + \log\log N \bigr),
\]
where $c$ and $C$ are positive constants depending only on $p$.
\end{lemma}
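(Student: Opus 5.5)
The plan is to reduce Lemma~\ref{thm:dairandom} to a uniform concentration bound for the empirical $p$-th power norm over the unit sphere of $\Sigma_{2s}(\mathcal{D}_N)$, and then control that bound with chaining. By homogeneity it suffices to show
\[
\sup_{f\in\Sigma_{2s}(\mathcal{D}_N),\ \|f\|_{L_p(\Omega,\nu)}=1}\Bigl|\frac1m\sum_{j=1}^m |f(\xi_j)|^p-1\Bigr|\le \frac12
\]
with the stated probability; this gives $C_1=1/2$ and $C_2=3/2$ directly. Each $f$ in this set has the form $\sum_{i\in S}w_iu_i$ with $|S|\le 2s$. Assumption~\ref{ass:riesz} gives $\|w\|_2^2\le R\|f\|_2^2$. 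Assumption~\ref{ass:bndD} gives the Nikol'skii-type bound $\|f\|_\infty\le \|w\|_1\le \sqrt{2s}\|w\|_2\le\sqrt{2Rs}\,\|f\|_2$. Since $\nu$ is a probability measure and $p\le2$, we can also pass from $\|f\|_2$ to $\|f\|_p$ via the interpolation bound $\|f\|_2^2\le \|f\|_\infty^{2-p}\|f\|_p^p$. Combining, $\|f\|_\infty\lesssim (Rs)^{1/p}\|f\|_p$. These estimates supply both the uniform envelope and the variance proxy for the summands $|f(\xi_j)|^p$.

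Next I would symmetrize. The expected supremum is bounded by a Rademacher average $\mathbb{E}\sup_f|\frac1m\sum_j\varepsilon_j|f(\xi_j)|^p|$. This average is then controlled by Dudley's entropy integral with respect to the empirical $L_\infty$-type metric on the points $\xi$. The covering numbers of the sparse set are estimated in two regimes. At fine scales I use a volumetric count, $\binom{N}{2s}(C/\varepsilon)^{2s}$, which contributes $s\log N$ and $s\log(1/\varepsilon)$. At coarse scales I use Maurey's empirical method, which applies because $f/\|w\|_1$ lies in the convex hull of $\pm u_i$; this yields $\log \mathcal{N}\lesssim \varepsilon^{-2}Rs\log N$. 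Splitting the Dudley integral at the scale $\varepsilon\approx (Rs)^{-1/2}$ produces the factor $Rs\log N\,\log^2(4Rs)\,(\log(4Rs)+\log\log N)$. Comparing this factor with $m$ gives the sample-size condition: the expected deviation is at most $1/4$ once $m\ge C Rs\log N(\log 4Rs)^2(\log 4Rs+\log\log N)$. The $p$-th power is handled by the Lipschitz estimate $||a|^p-|b|^p|\le p\max(|a|,|b|)^{p-1}|a-b|$, which brings in the envelope from the first step.

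Finally, Talagrand's (Bousquet's) concentration inequality for suprema of empirical processes upgrades the expectation bound to a high-probability one. The envelope is of order $Rs$ and the weak variance is of order $Rs$ as well, so the deviation $1/4$ fails with probability at most $2\exp(-cm/(Rs\log^2(4Rs)))$. The logarithmic loss in the exponent comes from the chaining rather than from the concentration step.

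The main obstacle is the entropy estimate for $1<p<2$: the Maurey bound must be carried out in the empirical norm while staying uniform over the random points, and the $\|\cdot\|_2$-to-$\|\cdot\|_p$ comparison must not cost a power of $s$. My first approach is a bootstrapping argument. Run the chaining conditionally on $\xi$, bounding the entropy in terms of the random quantity $\sup_f\frac1m\sum_j|f(\xi_j)|^p$ itself. This yields a self-bounding inequality, $E\le a\sqrt{E+1}$ with $a$ small under the sample-size condition, which can then be solved for the expected supremum $E$.
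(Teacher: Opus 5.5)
The paper gives no proof of this lemma; it is quoted from \cite{dai2024random}. Your outline therefore has to stand on its own, and for $1\le p<2$ it does not. Note that $p=1$ is part of the statement, not only $1<p<2$.

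The entropy bounds you write down are the $p=2$ ones. Let $T_p:=\{f\in\Sigma_{2s}(\mathcal{D}_N):\|f\|_{L_p(\Omega,\nu)}\le 1\}$. Your Nikol'skii chain bounds the coefficient $\ell_1$-norm on $T_p$ only by $(2Rs)^{1/p}$. This is sharp, e.g.\ for a Walsh system and the sum of the characters of a subgroup of order $2s$. Maurey's lemma in the empirical sup-norm $\|h\,|\,\xi\|_{\infty,m}=\max_j|h(\xi_j)|$ then gives $\log\mathcal{N}(T_p,t)\lesssim (Rs)^{2/p}t^{-2}\log N\log m$. Feed this into Dudley's integral in either of two metrics:
\begin{itemize}
\item in $\|f-g\,|\,\xi\|_{\infty,m}$, with the Lipschitz factor bounded by $(1+E)^{1-1/p}$, you get only $m\gtrsim (Rs)^{2/p}$;
\item in the snowflaked metric $\|f-g\,|\,\xi\|_{\infty,m}^{p/2}$, which comes from $\bigl||a|^p-|b|^p\bigr|^2\le 4|a-b|^p\max(|a|,|b|)^p$, you get only $m\gtrsim Rs^{2-p/2}$.
\end{itemize}
Both hold up to logarithms and are polynomially more than $Rs$. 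Your self-bounding step $E\le a\sqrt{E+1}$ only absorbs the random normalising factor of the metric. It does not change the exponent of the entropy, so it cannot close this gap.

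What is missing is a multiscale entropy estimate $\log\mathcal{N}(T_p,t)\lesssim Rs\,t^{-p}$, up to logarithms. The mechanism is interpolation at every dyadic scale. If $f,g\in T_p$ are $t$-close in sup-norm, then $\|f-g\|_2^2\le\|f-g\|_\infty^{2-p}\|f-g\|_p^p\lesssim t^{2-p}$. Hence $f-g\in\Sigma_{4s}(\mathcal{D}_N)$ has coefficient $\ell_1$-norm $\lesssim\sqrt{Rs}\,t^{1-p/2}$, and refining to scale $t/2$ by Maurey costs only $Rs\,t^{-p}$. Summing over scales and integrating against $t^{p/2-1}$ in the snowflaked metric gives $\sqrt{Rs}$ times logarithms, hence a sample size linear in $Rs$.

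Maurey controls only the empirical sup-norm, so this must be run with empirical norms. Use $1+E$ to bound $\|f-g\,|\,\xi\|_{p,m}$. Use the $p=2$ case, i.e.\ an empirical lower Riesz bound at sparsity $4s$, to pass from $\|f-g\,|\,\xi\|_{2,m}$ to coefficients. That is the bootstrap that actually does the work. An $L_\infty$-entropy bound with this $t^{-p}$ behaviour is also what the entropy-number approach of the cited work rests on.

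Two smaller points apply already at $p=2$.
\begin{itemize}
\item The factor $\max(|a|,|b|)^{p-1}$ must be controlled by the empirical norm, not by the envelope; using the envelope costs a full factor $Rs$ in $m$.
\item Maurey's lemma in $\ell_\infty^m$ carries a factor $\log m$, which you dropped. It is exactly where $\log(4Rs)+\log\log N$ comes from, since $m/\log m$ is increasing and $\log m\lesssim\log(4Rs)+\log\log N$ at the threshold.
\end{itemize}
Your Bousquet step is fine and even gives $\exp(-cm/(Rs))$.
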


\subsection{Random sampling under orthogonal systems}
The results provided above give reasonable estimates for the required sample size and for Assumption~\ref{ass:sparsecoding}. However, they do not suffice to produce an explicit convergence rate when applied to Theorem~\ref{thm:holder}, which requires the estimation of mutual coherence. 
In the following, we focus on orthogonal dictionaries and derive explicit formulas to estimate the sparsity levels that still ensure Assumption~\ref{ass:sparsecoding}.



\begin{ass}\label{ass:ortho}
Assume that $\mathcal{D}_N$ is an orthogonal system, i.e.,
  $$\langle u_i, u_j \rangle_{L_2(\Omega,\nu)}
    := \int_{\Omega} u_i(x)\, u_j(x)\, d\nu(x)
    = \gamma \,\delta_{ij}, \; i,j \in [N],$$
where $\gamma>0$ is a constant independent of $N$, and $\delta_{ij}=1$ if $i=j$ 
and $0$ otherwise.
\end{ass}

It is clear that Assumption \ref{ass:ortho} yields the Riesz-type condition (Assumption~\ref{ass:riesz}).  
A simple example of a dictionary $\mathcal{D}_N$ that satisfies both Assumption \ref{ass:bndD} and Assumption \ref{ass:ortho}
is a truncation of the real trigonometric system
\[
    \left\{ 1,\; \sqrt{2}\cos(2\pi k t),\; \sqrt{2}\sin(2\pi k t) \right\}_{k=1}^{\infty},
\]
which is orthonormal on $\Omega=[0,1]$. For the $d$-dimensional real trigonometric system, we can normalize the functions to satisfy Assumption~\ref{ass:bndD} and Assumption~\ref{ass:ortho}. In this case, $\gamma = 2^{-d}$.

Under this orthogonality condition, the next lemma establishes not only the necessary sample size but also an explicit upper bound on the admissible sparsity level $s$. A detailed proof is provided in Section~\ref{sec:pf_6}.

\begin{lemma}\label{lem:sparsity_estimation}
Assume that Assumptions \ref{ass:bndD} and \ref{ass:ortho} hold.
Let $\xi$ be a set of 
independent and identically distributed random points on $\Omega$ with distribution $\nu$.  
Then, with probability at least $1-\varepsilon$, the following statements hold whenever $m > \frac{64}{3\gamma} \log\left(2N^2/\varepsilon\right)$:
\begin{enumerate}
    \item Assumption \ref{ass:sparsecoding} holds with $C_1=\frac{1}{4}$ and $C_2=\frac{9}{4}$.
    \item The mutual coherence is bounded by \[\mu(\tilde{D}_N^{\,m}) 
\le
8\sqrt{\frac{ \log(2N^2/\varepsilon)}{3\gamma m}}.\]
\item The sparsity can be chosen as \[ s
=  \left\lfloor\frac{1}{2} \left( 1+ \frac{1}{16} \sqrt{\frac{3\gamma m}{\log(2N^2/\varepsilon)}} \right) \right\rfloor 
\le
\bar s,\]
\end{enumerate}
where $\bar s$ is defined in \eqref{notation:sbar}.
\end{lemma}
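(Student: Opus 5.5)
The proof rests on concentration of the empirical Gram matrix. Let $G := \frac{1}{m}(\tilde D_N^m)^\top \tilde D_N^m$, so that $G_{ij} = \frac1m\sum_{k=1}^m u_i(\xi_k)u_j(\xi_k)$. Each entry is an average of $m$ i.i.d. random variables $Z_k = u_i(\xi_k)u_j(\xi_k)$. By Assumption \ref{ass:bndD} these satisfy $|Z_k|\le 1$. By Assumption \ref{ass:ortho} they have mean $\gamma\delta_{ij}$ and variance at most $\mathbb{E}|u_iu_j|^2 \le \mathbb{E}|u_i|^2 = \gamma$.

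\textbf{Step 1: Bernstein plus a union bound.} Bernstein's inequality gives
\[
\PP\bigl(|G_{ij}-\gamma\delta_{ij}|>t\bigr)\le 2\exp\Bigl(-\frac{mt^2/2}{\gamma+t/3}\Bigr).
\]
Take a union bound over all $N^2$ pairs $(i,j)$ (the constant $2N^2$ in the statement suggests exactly this) and choose $t=\gamma/2$. The denominator is then $\gamma+\gamma/6 = 7\gamma/6$, so each tail is at most $2\exp(-3\gamma m/28)$, which is comparable to $2\exp(-3\gamma m/64)$. The required sample size $m > \frac{64}{3\gamma}\log(2N^2/\varepsilon)$ makes the total failure probability at most $\varepsilon$. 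In fact it is cleaner to use a data-dependent level, $t = 4\sqrt{\gamma\log(2N^2/\varepsilon)/(3m)}$, which is at most $\gamma/2$ under the sample-size condition. Bernstein (with the variance term dominating) then yields, simultaneously for all $i,j$ with probability at least $1-\varepsilon$,
\[
|G_{ij}-\gamma\delta_{ij}|\le t\le \gamma/2 .
\]
Choosing the constants so this works is routine bookkeeping, and I would fix them to match $64/3$.

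\textbf{Step 2: mutual coherence (item 2).} On this event the diagonal entries satisfy $G_{ii}\ge\gamma/2$, and the off-diagonal entries satisfy $|G_{ij}|\le t$. Hence
\[
\mu(\tilde D_N^m) \le \frac{t}{\gamma/2} = \frac{2t}{\gamma},
\]
which gives a bound of the form $8\sqrt{\log(2N^2/\varepsilon)/(3\gamma m)}$ after adjusting $t$.

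\textbf{Step 3: universal discretization (item 1).} Take $f=\sum_{i\in S} w_iu_i$ with $|S|\le 2s$. Then $\|f\|_{L_2}^2 = \gamma\|w\|_2^2$, while the empirical norm is $w^\top G_S w$. Gershgorin's theorem on $G_S$ gives eigenvalues in $[\gamma-t-2s\,t,\ \gamma+t+2s\,t]$. With $s\le \bar s$ one has $2s\,t\lesssim \gamma/2$, so $G_S$ lies between $\gamma/4$ and $9\gamma/4$ times the identity. This yields item 1 for $p=2$ with $C_1=1/4$ and $C_2=9/4$.

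This step is the main obstacle, because for $p\neq 2$ the $p=2$ argument does not transfer directly. I would first handle $p=2$ as above, note that Assumption \ref{ass:sparsecoding} is stated for a given $p$, and then for other $p$ interpolate using the bound $\|f\|_\infty\le\|w\|_1\le\sqrt{2s}\,\|w\|_2$. If that interpolation fails, I would restrict the claim to $p=2$.

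\textbf{Step 4: admissible sparsity (item 3).} Substitute the coherence bound into \eqref{notation:sbar}:
\[
\bar s\ \ge\ \frac12\Bigl(1+\frac18\sqrt{\tfrac{3\gamma m}{\log(2N^2/\varepsilon)}}\Bigr),
\]
which is at least the displayed $s$. The factor $1/16$ in the statement leaves slack, and that slack is what Step 3 needs to make $2s\,t\le\gamma/2$.
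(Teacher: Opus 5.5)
Your argument is correct and follows the paper's skeleton. Both establish entrywise control of the empirical Gram matrix on an event of probability at least $1-\varepsilon$, then bound the coherence, then apply a Gershgorin-type bound, and both prove item 1 only for $p=2$. The difference is the concentration tool.

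The paper rescales to $g_i=u_i/\sqrt{\gamma}$ and applies the matrix Chernoff bound for bounded orthonormal systems (Foucart--Rauhut, Thm.~12.12) to every two-column submatrix. This gives failure probability $4e^{-3m\gamma\delta^2/16}$ per pair, and $2N^2e^{-3m\gamma\delta^2/16}=\varepsilon$ after the union bound with $\delta=\sqrt{16\log(2N^2/\varepsilon)/(3\gamma m)}$. The sample-size condition is exactly $\delta<1/2$, which is where $64/3$ comes from.

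Your scalar Bernstein bound with $t=\gamma\delta$ gives the same event with failure probability at most $2N^2(\varepsilon/(2N^2))^{16/7}\le\varepsilon$. It is more elementary and slightly sharper; a matrix inequality on $2\times 2$ blocks is just a detour to entrywise control. Your bound $2t/\gamma=2\delta$ is exactly the paper's coherence bound.

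For item 1, the paper column-normalizes with the diagonal matrix $L$ from \eqref{eq:L}. It then uses Lemma~\ref{lem:mutual} with $(2s-1)\mu\le 1/2$ to get the window $[1/2,3/2]$, and multiplies by the column-norm bounds $[\gamma m/2,3\gamma m/2]$; that is where $1/4$ and $9/4$ come from. Your direct Gershgorin on the unnormalized $G_S$ avoids $L$ and even gives $7/4$ as the upper constant. It still delivers the by-products the paper reuses later, namely $G_{ii}\in[\gamma/2,3\gamma/2]$ and $(2s-1)\mu\le 1/2$.

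Two corrections to your bookkeeping.

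First, in Step 3 the off-diagonal row sum is at most $(|S|-1)t\le(2s-1)t$. The input you need is not $s\le\bar s$ but the explicit choice of $s$. That choice gives $2s-1\le 1/(4\delta)=\gamma/(4t)$, hence $(2s-1)t\le\gamma/4$. Every eigenvalue of $G_S$ then lies in $(\gamma-\gamma/2-\gamma/4,\,\gamma+\gamma/2+\gamma/4)=(\gamma/4,7\gamma/4)$. As you wrote it, with $t<\gamma/2$ plus $2st\lesssim\gamma/2$, the lower bound degenerates to roughly $0$.

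Second, drop the interpolation plan for $p\neq 2$. The bound $\|f\|_\infty\le\sqrt{2s/\gamma}\,\|f\|_{L_2}$ only yields constants depending on $s$ and $\gamma$, not $1/4$ and $9/4$. The paper proves, and uses in the corollaries, only the case $p=2$.
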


    


\section{Explicit upper bounds for specific input function spaces}\label{sec:examples}
In this section, we present explicit learning rates for approximating nonlinear functionals, as formulated in Theorems~\ref{thm:holder}, for several distinct choices of input function spaces. This is made possible by the estimate derived in Lemma~\ref{lem:sparsity_estimation}. Within our framework, we will show that deep neural networks circumvent the curse of dimensionality, thereby providing an explanation for their strong empirical performance in high-dimensional settings.

In what follows, alleviating the curse of dimensionality means that when the total number of parameters in functional neural networks is $K$, the required sample size $m$ and the error tolerance $\varepsilon$, expressed as functions of $K$, do not deteriorate at an exponential rate with respect to the dimension $d$ of the input function domain.

\subsection{Function spaces with fast coefficient decays under dictionaries}

We first consider functions that have fast decay coefficients with $\alpha>0$ in the dictionary $\D_N$:
\begin{align*}
    A_1^\alpha(\D_N) := \left \{ \sum_{i=1}^N c_i u_i: \sum_{i=1}^N |c_i| i^\alpha \leq 1 \right \}.
\end{align*}
The function class $A_1^\alpha(\mathcal D_N)$ naturally arises from several classical
function spaces after an appropriate ordering of the dictionary elements.
In the following, we illustrate this with Sobolev spaces and Besov spaces on the sphere.

\medskip
\noindent
\textbf{Example (Sobolev spaces on the sphere).}
Let $f \in W_p^r(\mathbb S^{d-1})$ be a Sobolev function on the unit sphere.
With respect to the spherical harmonic basis
$\{Y_{k,\ell} : k \in \mathbb Z_+,\, \ell=1,\dots,C(k,d)\}$,
its Sobolev norm admits the characterization
\begin{align*}
\|f\|_{W_p^r(\mathbb S^{d-1})}
\asymp
\sum_{k=0}^\infty
k^r
\left(
\sum_{\ell=1}^{C(k,d)}
|\widehat f_{k,\ell}|^p
\right)^{1/p},
\end{align*}
where
$\widehat f_{k,\ell}
= \langle f, Y_{k,\ell} \rangle_{L_2(\mathbb S^{d-1})}$
and $C(k,d) \asymp k^{d-2}$.

Fix an ordering of the spherical harmonics
$\{Y_{k,\ell}\}_{k\ge0,\;1\le\ell\le C(k,d)}$
by non-decreasing degree $k$, and label them as a single-indexed dictionary
$\mathcal D := \{u_i\}_{i\ge1}$.
With this ordering, the total number of basis functions up to degree $k$ satisfies
\[
i
=
\sum_{j=0}^k C(j,d)
\asymp k^{d-1}.
\]
Hence, for coefficients corresponding to the spherical harmonics of degree $k$, the index $i$
satisfies
\[
k^{d-2} \lesssim i \lesssim k^{d-1},
\quad\text{or equivalently}\quad
k^{(d-2)/(d-1)} \lesssim i^{\,1/(d-1)} \lesssim k.
\]
Under this identification, the Sobolev norm ($p=1$) can be rewritten as
\begin{align*}
\|f\|_{W_1^r(\mathbb S^{d-1})}
\lesssim
\sum_{i=1}^\infty
i^{\frac{r}{d-1}}
|c_i|,
\qquad
c_i := \langle f, u_i \rangle_{L_2(\mathbb S^{d-1})},
\end{align*}
which shows that the unit ball of $W_1^r(\mathbb S^{d-1})$
is continuously embedded in
$A_1^\alpha(\mathcal D)$ with $\alpha = \frac{r}{d-1}$.

\noindent
\textbf{Example (Besov spaces on the sphere).}
Let $\Omega = \mathbb{S}^{d-1}$ and $\{\psi_{\eta}\}$ be a needlet system that can be a tight frame for $L_2(\mathbb{S}^{d-1})$, i.e., $f = \sum_{\eta \in \chi } \langle f, \psi_\eta \rangle \psi_{\eta} $ with $\chi = \cup_{j=0}^\infty \chi_j$. Here $\chi_j$ is a set of points almost uniformly distributed on $\mathbb{S}^{d-1}$ and $\#\chi_j \approx 2^{j(d-1)}$. 
Suppose that $f$ belongs to a Besov space $B^{\alpha,q}_p(\mathbb{S}^{d-1})$ with $\alpha > 0$. 
Then its expansion coefficients satisfy
\[
    \|f\|_{B^{\alpha,q}_p} 
    \approx
    \left( 
        \sum_{j=0}^{\infty} \left[
        2^{j(\alpha+(d-1)/2- (d-1)/p)} 
        \left( 
            \sum_{\eta\in \chi_j} |\langle f, \psi_\eta \rangle|^p 
        \right)^{1/p} \right]^q
    \right)^{1/q}, \forall f\in B^{\alpha,q}_p(\mathbb{S}^{d-1}).
\]
The number of elements up to scale $j$ satisfies $ \sum_{\ell=0}^{j} 2^{\ell (d-1)} = O(2^{j d})$. That is, for the nonincreasing rearrangement $\{\bar{f_i}\}$ of $\{\langle f, \psi_\eta \rangle\}$ with respect to $j$, the index $i$ for the elements of scale $j$ satisfies 
\[ 2^{j(d-1)} \lesssim i \lesssim 2^{jd},
\quad\text{or equivalently}\quad 2^{j(d-1)/d} \lesssim i^{\frac{1}{d}} \lesssim 2^{j}.\]
In particular, for $p=q=1$, one has
\[
    \|f\|_{B^{\alpha,1}_1} 
    \approx 
    \sum_{j=0}^{\infty} 2^{j(\alpha-(d-1)/2)} \sum_{\eta\in\chi_j} |\langle f, \psi_\eta \rangle|,
\]
which implies that
$$\sum_i i^{\alpha/d-(d-1)/2d} |\bar f_i| \lesssim \|f\|_{B_{1,1}^s}.$$
Therefore, the unit ball of $B^{1,\alpha}_1$ can be viewed as a special case of $A_1^{\alpha/d-(d-1)/2d} (\{\psi_\eta\})$ when $\alpha/d-(d-1)/2d>0$. For a more in-depth discussion, see \cite{narcowich2006decomposition}.

In what follows, we examine broad classes of function spaces whose coefficients, relative to specific dictionaries, decay at a similarly fast rate. We demonstrate that the subsequent result avoids the curse of dimensionality when learning nonlinear functionals.
\begin{corollary}\label{cor:main_decay}
       Consider $F \subset A_1^\alpha(\D_N))_{\infty}$ with $\alpha> \frac{3}{2}$ and $\D_N$ which satisfies Assumptions \ref{ass:bndD} and \ref{ass:ortho}. Let $\xi$ be a set of $m$ independent and identically distributed random points on $\Omega$ with a probability distribution $\nu$. If $m \asymp \left( \log K \right)^2/\log \log K$ for an integer $K$, then with probability at least $$1-\log \log K\cdot \left( \log K \right)^{-2},$$
       for any nonlinear functional $\P\in C^{0,\beta}(\C(\Omega))_{2}$, there exists a functional neural network 
       $$\Phi \in \nn\left( \left(\log\log K \right)^2,\left(\log K \right)^2, \log K, K\right)$$ such that
    \begin{align*}
        \sup_{f\in F}|\P(f) - \Phi(\tilde{f}^m)| =
        O \left( (\log K)^{-\beta(\alpha-\frac{3}{2})} (\log\log K)^{\beta(\alpha-1)}  \right).
    \end{align*}
\end{corollary}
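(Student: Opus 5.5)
The plan is to combine Theorem~\ref{thm:holder} with Lemma~\ref{lem:sparsity_estimation}, with $p=2$, and then choose all parameters as functions of $K$.

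\emph{Step 1: probability and sparsity.} Apply Lemma~\ref{lem:sparsity_estimation} with failure probability $\varepsilon := \log\log K\cdot(\log K)^{-2}$. Since $\gamma$ is a fixed constant and $N$ is taken polynomial in $\log K$ (see Step 2), we have $\log(2N^2/\varepsilon)\asymp \log\log K$. The requirement $m > \frac{64}{3\gamma}\log(2N^2/\varepsilon)$ then holds for $m\asymp (\log K)^2/\log\log K$. On the resulting event, Assumption~\ref{ass:sparsecoding} holds with $C_1=1/4$, and the admissible sparsity satisfies
\[
s\asymp \sqrt{m/\log\log K}\asymp \log K/\log\log K .
\]
Moreover $s<\bar s$, so Theorem~\ref{thm:holder} applies with this $s$.

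\emph{Step 2: the dictionary term.} For $f=\sum_i c_iu_i\in A_1^\alpha(\D_N)$, I need to bound the best $s$-term error in $L_\infty$. Keeping the $s$ largest-index-weighted terms, for instance the first $s$ indices, and using $\|u_i\|_\infty\le 1$ gives
\[
\sigma_s(f,\D_N)_{L_\infty}\le\sum_{i>s}|c_i|\le s^{-\alpha}\sum_{i>s}|c_i|\,i^\alpha\le s^{-\alpha}.
\]
The stated exponent $\alpha-\tfrac32$ instead of $\alpha$ must come from the constant $C_8$. That constant depends on $m$, $s$, $N$ and the coherence; typical sparse-coding stability constants involve $\sqrt{m}$ or $\sqrt{s}$ factors from passing between $\|\cdot\|_{2}$ and $\|\cdot\|_{2,m}$, and on $1/(1-(2s-1)\mu)$. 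With $s$ chosen as in Lemma~\ref{lem:sparsity_estimation} the last factor is $O(1)$. I expect $C_8\lesssim s^{3/2}$ up to $\log\log K$ powers, which I will verify from \eqref{eq:C8}. This yields
\[
C_8^\beta\sigma_s^\beta\lesssim s^{-\beta(\alpha-3/2)}\asymp(\log K)^{-\beta(\alpha-3/2)}(\log\log K)^{\beta(\alpha-3/2)}.
\]
Any extra $(\log\log K)^{\beta/2}$ from $m$ gives the stated exponent $\beta(\alpha-1)$. Take $N$ polynomial in $\log K$, e.g.\ $N=m^2$; its only role is to keep $\log N\asymp\log\log K$, which is the property used in Step 1.

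\emph{Step 3: the CNN and DNN terms.} Choose the CNN depth parameter $M\asymp(\log\log K)^2$, matching the stated depth $M\log(m+N)\asymp(\log\log K)^2\cdot\log\log K$ up to constants. The term $C_7^\beta e^{-C_5\beta M}$ is then superpolynomially small in $\log K$, provided $C_5$ is bounded below and $C_7$ is at most polynomial in $m,N,s$. I will check this from \eqref{eq:C5} and \eqref{eq:C7}; $C_5$ relates to the contraction rate of iterative thresholding, which is $O(1)$ under $s<\bar s$. The CNN parameter count $M(m+N)^2$ is polylogarithmic in $K$, as required.

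For the decoder term $C_9(K/\ln K)^{-\beta/s}$, with $s\asymp\log K/\log\log K$ we get
\[
(K/\ln K)^{-\beta/s}=\exp\!\big(-\beta(\log K-\log\log K)/s\big)\asymp \exp(-c\beta\log\log K)=(\log K)^{-c\beta}.
\]
For this to be dominated by the main rate, the constant in $s$ must be tuned: take $s=c'\log K/\log\log K$ with $c'$ small, so that $(\log K)^{-\beta/c'}$ beats $(\log K)^{-\beta(\alpha-3/2)}$. Since Theorem~\ref{thm:holder} takes an infimum over $s<\bar s$, we may pick such a smaller $s$. $C_9$, which likely depends on $B$ and on the bounded range of the encoder output, should be at most polynomial in $s$; I need to confirm this.

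\emph{Main obstacle.} The delicate step is tracking the constants $C_7$, $C_8$ and $C_9$ from the appendix as explicit functions of $m$, $s$, $N$ and $\mu$. This bookkeeping produces the precise exponents $\alpha-\tfrac32$ and $\alpha-1$, and it is also where one confirms that the exponential CNN term and the DNN term are genuinely negligible.
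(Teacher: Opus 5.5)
\textbf{Main gap: Step 2 cannot produce the exponent $\alpha-\tfrac32$.}
The tail bound $\sigma_s(f,\D_N)_{L_\infty}\le s^{-\alpha}$ is correct but too weak, and your expectation $C_8\lesssim s^{3/2}$ is false. In \eqref{eq:C8} with $p=2$ you have $C_1=\tfrac14$ and $C_{m,2}=1$. Also $C=2s\sum_i\bigl((2s-1)\mu\bigr)^i\le 4s$, because $(2s-1)\mu\le\tfrac12$. Hence $C_8\asymp s\sqrt m$. The admissible sparsity forces $m\gtrsim s^2\log m$, so $C_8\asymp s^2\sqrt{\log m}$.

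Your route therefore gives $C_8\sigma_s\lesssim \sqrt m\,s^{1-\alpha}\asymp s^{-(\alpha-2)}\sqrt{\log m}$. Up to $\log\log K$ factors this is $(\log K)^{-\beta(\alpha-2)}$, which is weaker than the claim and needs $\alpha>2$. Shrinking $s$ below its maximal value only makes $\sqrt m\,s^{1-\alpha}$ worse, so no choice of $s$ repairs this.

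The missing half power comes from Lemma~\ref{lem:dai_proposition}. Measure the $s$-term error in the Hilbertian norm $L_2(\Omega,\nu_\xi)$ rather than $L_\infty$. Keep the first $s$ coefficients; the remaining tail has $\ell_1$ mass at most $s^{-\alpha}$. Approximate that tail by $s$ further terms via a Maurey-type argument, which gives $\sigma_{2s}(F,\D_N)_{L_2(\Omega,\nu_\xi)}\le s^{-\alpha-1/2}$.

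The paper feeds this into the $\nu_\xi$-version of the main estimate, Theorem~\ref{thm:holder-full}. This yields $C_8\sigma_s\lesssim \sqrt m\, s^{-(\alpha-1/2)}\asymp m^{-\frac14(2\alpha-3)}(\log m)^{\frac14(2\alpha-1)}$. With $m\asymp(\log K)^2/\log\log K$ this is exactly $(\log K)^{-(\alpha-3/2)}(\log\log K)^{\alpha-1}$, before raising to the power $\beta$.

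\textbf{Secondary error: the choice of $M$ in Step 3.}
With $M\asymp(\log\log K)^2$ the CNN depth $M\log(m+N)$ is of order $(\log\log K)^3$. That is not $(\log\log K)^2$ ``up to constants'' as you claim, so it violates the stated architecture. It is also unnecessary. Since $(2s-1)\mu\le\tfrac12$ gives $C_5\ge\ln 2$, and $C_7\asymp s\sqrt m$ is only polylogarithmic in $K$, the choice $M=c\log\log K$ suffices. The paper takes
\[
M=\Bigl\lceil\tfrac{1}{(\ln 2)\beta}\ln\bigl((K/\ln K)^{\beta/s}\bigr)\Bigr\rceil\asymp \log K/s\asymp\log\log K,
\]
which makes the encoder term no larger than the decoder term.

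\textbf{Other points.}
$C_9\lesssim \gamma^{-\beta}s^{2\beta}m^{\beta}N^{2\beta}$ depends on $N$, not only on $s$. The paper takes $N=2m$ rather than $m^2$, which keeps both $C_9$ and the encoder size $M(m+N)^2$ smaller. Your treatment of the decoder term, shrinking the constant in $s\asymp\log K/\log\log K$, is otherwise equivalent to the paper's choice $\log K=\bigl(\tfrac14(2\alpha-3)+4\bigr)\sqrt{m\log m}$.
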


This result achieves an approximation rate that decays exponentially in $\log K$ and remains free of any dependence on the dimension of the input domain. In contrast, the earlier rate of order $(\log K)^{-\alpha/d}$, see, e.g. \cite{song2023approximation(a),yang2024spherical,shi2024nonlinear}, is much slower and is affected by the curse of dimensionality.

\subsection{Sobolev functions with mixed smoothness}

The second example of the input function class consists of periodic functions with mixed derivatives, which play a central role in 
high-dimensional approximation theory, see, e.g., \cite{temlyakov2015constructive,dai2023universal}. Let $\TT$ be the torus represented by $[0,2\pi]$.  In the following, we use 
$\|\cdot\|_{L_p} := \|\cdot\|_{L_p(\TT^d)}$ for simplicity.

   

To analyze approximation properties, it is convenient to decompose a function 
into dyadic ``mixed’’ frequency blocks.  
For $f \in L_1(\TT^d)$, define
\[
f_j
= \sum_{\|\mathbf n\|_1=j}
\;\sum_{\lfloor 2^{n_i-1}\rfloor \le |k_i| < 2^{n_i}}
\widehat f(k)\, e^{i k\cdot x},
\qquad \widehat f(k)= (2\pi)^{-d} \int_{\TT^d} f(x)\, e^{-i k\cdot x} dx,
\]
where $j\in\mathbb N_0$ and $k=(k_1,\dots,k_d)$ denotes a frequency vector and 
$\|\mathbf n\|_1=n_1+\cdots+n_d$ controls the mixed dyadic level, rather than the isotropic scale $\max_i n_i$.

We define the mixed-smoothness classes
\[
W_A^{a,b}
:= \left\{
   f : \|f_j\|_A \le 2^{-aj}\, \bar j^{(d-1)b}
   ,
\bar j := \max\{j,1\}, j\in \NN \right\},
\]
where
\[
\|f\|_A := \sum_{k\in \mathbb Z^d} |\widehat f(k)|
\]
is the Wiener algebra norm (sum of absolute Fourier coefficients). 

This example is closely connected to $A_1^\alpha(\D_N)$ because its Fourier coefficients decay rapidly. A similar result is given in the following corollary.

\begin{corollary}\label{cor:main_mixed_sobolev}
Consider
$F \subset W_A^{a,b}$ with $\|f\|_{\C(\TT^d)}\leq 1$, $a > \tfrac{3}{2}$, and $b \in \mathbb{R}$. Let $\xi$ be a set of $m$ independent and identically distributed random points on $\Omega$ with probability distribution $\nu$. If $m \asymp \left( \log K \right)^2/\log \log K$ for an integer $K$, then with probability at least $$1-\log \log K\cdot \left( \log K \right)^{-2},$$
       for any nonlinear functional $\P\in C^{0,\beta}(\C(\TT^d))_{2}$, there exists a functional neural network 
       $$\Phi \in \nn\left( \left(\log\log K \right)^2,\left(\log K \right)^2, \log K, K\right)$$ such that
    \begin{align*}
        \sup_{f\in F}|\P(f) - \Phi(\tilde{f}^m)| =
        O \left( (\log K)^{-\beta(a-\frac{3}{2})} (\log\log K)^{\beta(a+(d-1)(a+b)-\frac{1}{2})}  \right).
    \end{align*}
\end{corollary}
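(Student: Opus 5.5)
The plan is to reduce Corollary~\ref{cor:main_mixed_sobolev} to the same three-term bound of Theorem~\ref{thm:holder} that gives Corollary~\ref{cor:main_decay}. The only new ingredient is a best $s$-term approximation estimate for $W_A^{a,b}$ with respect to a finite trigonometric dictionary.

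\textbf{Choice of dictionary.} Fix a level $n$ and take $\D_N$ to be the real trigonometric system, normalized as in Section~\ref{subsec:sampling}, with frequencies in the hyperbolic cross
\[
Q_n := \bigcup_{\|\mathbf n\|_1\le n}\bigl\{k:\lfloor 2^{n_i-1}\rfloor\le |k_i|<2^{n_i}\bigr\},
\qquad N=\#Q_n\asymp 2^n n^{d-1}.
\]
This dictionary satisfies Assumptions~\ref{ass:bndD} and \ref{ass:ortho} with $\gamma=2^{-d}$, a constant independent of $N$. Lemma~\ref{lem:sparsity_estimation} then applies to i.i.d.\ samples. With $m\asymp(\log K)^2/\log\log K$ and $\varepsilon=\log\log K\,(\log K)^{-2}$ we have $\log(2N^2/\varepsilon)\asymp n+\log\log K$, and we take the admissible sparsity
\[
s\asymp\sqrt{m/\log(N^2/\varepsilon)}.
\]

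\textbf{Sparse approximation error.} For $f\in F$, split $f=\sum_{j\le n}f_j+\sum_{j>n}f_j$. The tail is bounded in sup norm by
\[
\sum_{j>n}\|f_j\|_A\lesssim 2^{-an}n^{(d-1)b}.
\]
For the head, allocate $s_j$ terms to block $j$ and keep the largest Fourier coefficients there. The standard Stechkin-type estimate for $\ell_1$ vectors gives a block error at most $\|f_j\|_A$ times the ratio of the $(s_j{+}1)$-th largest coefficient to the total, i.e.\ at most $\|f_j\|_A/(s_j+1)$ for the $\ell_\infty$ tail. It is cleaner to argue as for $A_1^\alpha$, since $\#\{$frequencies up to level $j\}\asymp 2^j j^{d-1}$. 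Ordering the frequencies by level, the coefficient vector satisfies
\[
\sum_i |c_i|\, i^{a'}\lesssim \sum_j 2^{ja'}j^{(d-1)a'}\,2^{-aj}j^{(d-1)b}
\]
for $a'<a$, with logarithmic corrections. This embeds $F$ into a scaled $A_1^{a}$-type class, up to factors $n^{(d-1)(a+b)}$. The $s$-term $L_\infty$ error is then bounded by $\sum_{i>s}|c_i|\le s^{-a}\sum|c_i|i^{a}$. The result should be
\[
\sigma_s(F,\D_N)_{L_\infty}\lesssim s^{-a}(\log s)^{(d-1)(a+b)}+2^{-an}n^{(d-1)b}.
\]

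\textbf{Balancing the terms.} Choose $n\asymp\log\log K$, so that $\log(N^2/\varepsilon)\asymp\log\log K$ and
\[
s\asymp(\log K)/\log\log K .
\]
The dimension then enters only through $\log s\asymp\log\log K$. Next plug into Theorem~\ref{thm:holder} with $p=2$ and $M\asymp(\log\log K)^2$; we expect the $C_7,C_5$ constants to be polynomial in $m,N,s$, so the exponential encoder term is negligible. The constant $C_8$ grows like a power of $m$ and $s$ (a factor $\sqrt{s}$ from passing from $\ell_2$ coefficient error to function error). This is what converts $s^{-a}$ into $(\log K)^{-(a-3/2)}$ times powers of $\log\log K$, exactly as in Corollary~\ref{cor:main_decay}. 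The decoder term is $(K/\ln K)^{-\beta/s}=\exp(-\beta\ln K/s)$, which with $s\asymp\log K/\log\log K$ becomes a negative power of $\log K$; we choose the implied constant in $s$ small enough that this term is dominated. Collecting the $\log\log K$ powers should yield the exponent $\beta(a+(d-1)(a+b)-\tfrac12)$.

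\textbf{Main obstacle.} The delicate step is the $s$-term estimate with the correct power $(\log s)^{(d-1)(a+b)}$, together with tracking how $C_8$ and $C_9$ depend on $m$ and $s$. Getting the exponent to match exactly is where the bookkeeping may fail. I would first mirror the proof of Corollary~\ref{cor:main_decay} verbatim and substitute the level-counting bound $\#\{\text{level}\le j\}\asymp 2^j j^{d-1}$.
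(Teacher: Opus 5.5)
\textbf{The proposal has a genuine gap: the sparse approximation rate is too weak, and the claimed size of $C_8$ is wrong, so the stated exponent is not reached.}

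Your skeleton matches the paper's. You use Theorem~\ref{thm:holder} with $p=2$ and Lemma~\ref{lem:sparsity_estimation} for the i.i.d.\ samples with $\varepsilon\asymp 1/m$. This gives $s\asymp\sqrt{m/\log m}\asymp \log K/\log\log K$, and the decoder term $C_9(K/\ln K)^{-\beta/s}$ is made negligible by tuning constants.

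The problem is the $s$-term input. From \eqref{eq:C8} at $p=2$ one has $C_{m,2}=1$, and $C=2s\sum_i(2\mu s-\mu)^i\le 4s$ because $(2s-1)\mu\le\tfrac12$. Hence $C_8\asymp s\sqrt{m}$; it does not carry just ``a factor $\sqrt s$''. Since $m\asymp s^2\log(N^2/\varepsilon)$, your bound $\sigma_s(F,\D_N)_{L_\infty}\lesssim s^{-a}(\log s)^{(d-1)(a+b)}$ makes the middle term $(C_8\sigma_s)^\beta\asymp(\sqrt m\,s^{1-a})^\beta$ up to $\log\log K$ powers. That is $(\log K)^{-\beta(a-2)}$, a strictly weaker result that also needs $a>2$. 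No other parameter choice repairs this: $m$ is fixed by the hypothesis, and $s$ is already as large as Lemma~\ref{lem:sparsity_estimation} allows.

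\textbf{Missing ingredient.} The paper imports the sharper estimate of Lemma 2.1 in \cite{temlyakov2015constructive}: for $f\in W_A^{a,b}$,
\[
\sigma_s(f)_\infty\lesssim s^{-a-1/2}(\log s)^{(d-1)(a+b)+1/2}
\]
with respect to the trigonometric system. The extra $s^{-1/2}$ cannot come from keeping the largest coefficients; any bound like $\sum_{i>s}|c_i|\le s^{-a}\sum_i|c_i|i^a$ stops at $s^{-a}$. It comes from a Maurey-type (probabilistic or greedy) selection inside each dyadic block. That selection approximates an $\ell_1$-bounded trigonometric sum over $N_j$ frequencies by $s_j$ terms with sup-norm error $\lesssim\|f_j\|_A\sqrt{\log N_j/s_j}$. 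The same $+\tfrac12$ gain is what Lemma~\ref{lem:dai_proposition} supplies in Corollary~\ref{cor:main_decay}, so ``arguing as for $A_1^\alpha$'' with naive tail sums loses it there as well.

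With the sharp rate, $(C_8\sigma_s)^\beta\asymp\bigl(\sqrt m\,s^{-(a-1/2)}(\log s)^{(d-1)(a+b)+1/2}\bigr)^\beta$. Then $s\asymp\sqrt{m/\log m}$ and $m\asymp(\log K)^2/\log\log K$ give exactly $(\log K)^{-\beta(a-3/2)}(\log\log K)^{\beta(a+(d-1)(a+b)-1/2)}$. The extra $(\log s)^{1/2}$ is also needed to reach the stated $\log\log K$ exponent.

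\textbf{Two smaller points.}
\begin{enumerate}
\item Your explicit hyperbolic-cross truncation is more careful than the paper, which sets $N:=2m$ and never treats the tail. It works if $n\ge c\log_2\log K$ with $c$ large enough that $C_8\,2^{-an}n^{(d-1)b}$ is negligible, and this still gives $\log(N^2/\varepsilon)\asymp\log\log K$.
\item Take $M\asymp\log\log K$, not $(\log\log K)^2$. The encoder depth in Theorem~\ref{thm:holder} is $M\log(m+N)$, so your choice gives depth $(\log\log K)^3$, outside the stated architecture.
\end{enumerate}
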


    \subsection{Discussion}


\begin{table}[t]
  \centering
  \caption{Comparison of approximation rates of H\"older continuous functionals across various input function spaces and encoding schemes. We employ $\tilde{O}$ notation in place of standard $O$ notation to emphasize the leading term, suppressing the $\log\log$ factor.}
  \label{tab:comparison_rates}
  \resizebox{\textwidth}{!}{
  \begin{tabularx}{\textwidth}{>{\raggedright\arraybackslash}X >{\centering\arraybackslash}X >{\centering\arraybackslash}l}
    \toprule
    Input functions & Encoder & Rate \\
    \midrule
    $W_p^\alpha(\sph^{d-1})$ \citep{yang2024spherical} & Spherical harmonics & \( \tilde{O}\left( (\log K)^{-\beta\alpha/d-1} \right) \) \\
    \midrule
    $C^\alpha([-1,1]^d)$ \citep{song2023approximation(a)} & \multirow{2}{*}{\makecell[c]{Legendre polynomials}} & \(\tilde{O}\left((\log K)^{-\beta \alpha/d} \right) \) \\
    Analytic functions \citep{song2023approximation(b)} & & $\tilde{O} \left( \exp(-c (\log M)^{\frac{1}{d+1}} ) \right) $ \\
    \midrule
    $B^{\alpha}_{2,\infty}(\RR^d), H_0^\alpha([0,1]^d)$ \citep{shi2024nonlinear} & \multirow{2}{*}{\makecell[c]{Eigensystem induced by \\ Mercer kernel}} & \(\tilde{O}\left((\log K)^{-\beta \alpha/d} \right) \) \\
    Gaussian RKHS \citep{shi2024nonlinear} & & $\tilde{O} \left( \exp(-c (\log M)^{\frac{1}{d+1}} ) \right) $ \\
    \midrule
    $A_1^\alpha(\D_N)$ (\textbf{ours}) & CNN & \( \tilde{O}\left( (\log K)^{-\beta(\alpha-3/2)} \right) \) \\
    $W_A^{a,b}$ (\textbf{ours}) & CNN & \( \tilde{O}\left( (\log K)^{-\beta(a-3/2)} \right) \) \\
    \bottomrule
  \end{tabularx}
  }
\end{table}

In the literature, the convergence for approximating a functional is typically rather slow.
In \cite{song2023approximation(b),song2023approximation(a),yang2024spherical}, 
the authors consider orthogonal basis truncation or adaptive bases obtained through kernel embeddings \citep{shi2024nonlinear}, 
using the resulting coefficients as input to subsequent neural networks. 
For nonlinear functionals defined on Sobolev spaces $W_p^r(\Omega)$, $\Omega \subset \mathbb R^d$, 
they obtain rates of order $(\log K)^{-r/d}$ in terms of the number of free parameters $K$ in neural networks. 
This coincides with the minimax lower bound for continuous functionals on $L_p$ without further assumptions \citep[Theorem 2.2]{mhaskar1997neural}. Beyond Sobolev spaces, faster rates are achieved in smaller input spaces. 
For analytic functions, \cite{song2023approximation(a)} establish rates of 
$\exp\!\bigl(-c (\log K)^{1/(d+1)}\bigr)$; 
for some reproducing kernel Hilbert spaces (RKHS), \cite{zhou2024approximation} obtain 
$(\log K)^{-s(2r-d)/d}$ with $2r-d>2$; 
and for Barron spectral spaces of functionals, \cite{yang2022approximation} derive a rate of $K^{-1/2}$.  Similar bounds have also been established in the operator learning literature. For instance, \cite{lanthaler2022error} derive rates for analytic function inputs, 
while \cite{kovachki2021universal,liu2025generalization} consider inputs from Sobolev spaces. 

In contrast, our study investigates sparse approximation with respect to general dictionaries. In Table~\ref{tab:comparison_rates}, we report both the approximation results previously established in the literature and those obtained in this work. Since the output of the CNN encoder is sparse, the decoder can exploit this sparsity and, therefore, requires significantly fewer neurons. We demonstrate these benefits for input functions with rapidly decaying coefficients and for Sobolev spaces with mixed smoothness, where the curse of dimensionality can be alleviated. These findings suggest that deep neural networks have the capacity to handle high-dimensional functional data. 


\section{Conclusion and future work}\label{sec:conclusion}


In this paper, we develop a general functional learning framework based on sparsity-aware neural networks. Our framework has three main advantages: (i) it accommodates arbitrary dictionaries; (ii) it shows that random sampling is sufficient to recover sparsity from input functions; and (iii) it establishes that CNN encoders can serve as effective sparse approximators.

Within this general framework, we consider two classes of input functions: Sobolev spaces with mixed derivatives and function spaces with fast frequency decays. Exploiting the fact that functions in these classes can be well approximated using only a few elements from suitable dictionaries, we prove that our approximation bounds are independent of the ambient dimension $d$ (the dimension of the domain of the input functions). These function classes are reasonably large, and our results demonstrate that CNN-based architectures can effectively mitigate the curse of dimensionality in functional learning problems.

Since the approximation of nonlinear functionals is a fundamental ingredient in operator learning, as noted in the literature (see, e.g., \cite{mhaskar1993approximation,liu2025generalization}), it is natural to extend the present framework to the operator learning setting. Another important direction is to investigate the generalization properties of our encoder–decoder architecture, which involves not only approximation guaranties but also a detailed analysis of sample estimation error. 

    \section*{Acknowledgement}
    \addcontentsline{toc}{section}{Acknowledgement}
    
    J. Li gratefully acknowledges support from the project CONFORM, funded by the 
    German Federal Ministry of Education and Research (BMBF), as well as from the 
    German Research Foundation under the Grant DFG-SPP-2298. Furthermore, J. Li 
    acknowledges additional support from the project ``Next Generation AI Computing 
    (gAIn)'', funded by the Bavarian Ministry of Science and the Arts and the Saxon 
    Ministry for Science, Culture, and Tourism.
    The work of H. Feng was partially supported by CityU 11315522; CityU 11300525.  The work of D. X. Zhou described in this paper was partially supported by Discovery Project (DP240101919) of the Australian Research Council.
    The work of G. Kutyniok was supported in part by the Konrad Zuse School of 
    Excellence in Reliable AI (DAAD), the Munich Center for Machine Learning (MCML), 
    as well as the German Research Foundation under Grants DFG-SPP-2298, KU 1446/31-1 
    and KU 1446/32-1. Furthermore, G. Kutyniok acknowledges additional support from 
    the project ``Next Generation AI Computing (gAIn)'', funded by the Bavarian 
    Ministry of Science and the Arts and the Saxon Ministry for Science, Culture, 
    and Tourism, as well as by the Hightech Agenda Bavaria.
    

	
	

	
    \bibliography{sample}

    \newpage
\appendix


\section{Proof of Lemma~\ref{lem:s_apx}}\label{sec:s-apx}

In this section, we first rewrite the problem \eqref{eq:s-coef} into a two-step formulation, which helps to clarify the differences from the work of \cite{dai2023universal}. 
For completeness, some of the relevant notation is recalled. 
Furthermore, we extend Lemma~\ref{lem:s_apx} to general probability measures, which yields an upper bound with the best $s$-term approximation in the $L_p$ norm for all $1 \le p \le \infty$, rather than only for $p=\infty$. 

We denote the collection of $s$-dimensional linear spaces that are spanned by arbitrary $s$ elements from $\D_N$ as
\begin{align*}
	\X_s(\D_N) := \left \{ V: \text{dim}(V) = s, V \subset \Span \D_N \right \}.
\end{align*}
The $p$-norm of vectors is defined as
\begin{align*}
    \|x\|_p &= \left( \sum_{i} |x_i|^p \right)^{1/p}, \quad 1 \leq p < \infty,
\end{align*}
and $ \|x\|_\infty = \max_i |x_i|$ for $p=\infty$.
In regression, learning $f$ from its samples and a linear space $V$ can be formulated as
\begin{align}\label{eq:ls}
    \LS_p (\xi, V)(f) := \argmin_{v \in V} \left \|f-v | \xi  \right \|_{p,m}^p.
\end{align}
Searching over all $V \in \X_s(\D_N)$, we choose a linear space on which we achieve the smallest error over $\xi$ and it is easy to see that
\begin{align}\label{eq:optimal_subspace}
	f^s_{p,m} = \argmin_{V\in \X_s(\D_N) }\left \|f - \LS_p(\xi,V)(f) |\xi  \right \|_{p,m}.
\end{align}
We denote the best approximation of $f\in L_p(\Omega, \nu)$, $1\leq p \leq \infty$ by elements in an $N$-dimension subspace $X_N$ of $\C(\Omega)$ as follows
\begin{align*}
	d(f, X_N)_{L_p(\Omega, \nu)} := \inf_{u\in X_N} \|f- u \|_{L_p(\Omega, \nu)}.
\end{align*}
In the proof, we also use the probability measure $\nu_\xi := \frac{1}{2} \nu + \frac{1}{2m}\sum_{j=1}^m \delta_{\xi_m}$, where $\delta$ denotes the Dirac measure supported at the point. It is easy to see that $L_\infty(\Omega, \nu_\xi) = L_\infty(\Omega, \nu)$. The following theorem includes Lemma~\ref{lem:s_apx}. 




\begin{theorem}\label{thm:sterm-ls}
	Let $m,s,N\in \NN$ and $s\leq N/2$ and $1\leq p \leq \infty$. Let $\D_N \subset \C(\Omega) $ be a dictionary of $N$ elements. 
	Then for any function $f\in \C(\Omega)$, we have
	\begin{align*}
		\|f-f^s_{p,m}|\xi\|_{p,m} &\leq  2^{1/p} \sigma_s(f, \D_N)_{L_p(\Omega, \nu_{\xi})}, \\
		\|f-f^s_{p,m}|\xi\|_{p,m} &\leq  2^{1/p} \sigma_s(f, \D_N)_{L_\infty(\Omega)}.
	\end{align*}
	In addition, if $\xi$ provides one sided universal discretization property for $\Sigma_{2s}(\D_N)$ (Assumption~\ref{ass:sparsecoding}), then we have
	\begin{align*}
		\|f-f^s_{p,m}\|_{L_p(\Omega,\nu)} &\leq  C_p \sigma_s(f, \D_N)_{L_p(\Omega, \nu_{\xi})}, \\
		\|f-f^s_{p,m}\|_{L_p(\Omega, \nu)} &\leq  C_p \sigma_s(f, \D_N)_{L_\infty(\Omega)},
	\end{align*}
    where
    \begin{align}\label{equ:cp}
        C_p := 2^{1/p}\left( 1+ 2(1+C_1^{-1})^{1/p} \right) + 2^{1/p}C_1^{-1/p}\left( 2 + 2(1+C_1^{-1})^{1/p} \right).
    \end{align}
\end{theorem}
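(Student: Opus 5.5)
The plan has two parts: first the empirical bounds, which follow from the optimality of $f^s_{p,m}$ over the sample set, and then the passage to $L_p(\Omega,\nu)$, which uses one-sided universal discretization applied to a $2s$-sparse difference.

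\textbf{Empirical bounds.} Fix any $g\in\Sigma_s(\D_N)$. Since $f^s_{p,m}$ minimizes the empirical error over $\Sigma_s(\D_N)$ by \eqref{eq:s-coef} and \eqref{eq:optimal_subspace}, we have $\|f-f^s_{p,m}|\xi\|_{p,m}\le\|f-g|\xi\|_{p,m}$. For $p<\infty$,
\[
\|h|\xi\|_{p,m}^p=\frac1m\sum_j|h(\xi_j)|^p\le 2\int|h|^p\,d\nu_\xi,
\]
because $\nu_\xi$ puts mass $\frac{1}{2m}$ on each $\xi_j$. Hence $\|f-g|\xi\|_{p,m}\le 2^{1/p}\|f-g\|_{L_p(\nu_\xi)}$, and taking the infimum over $g$ gives the first inequality. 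The case $p=\infty$ is immediate.

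For the second empirical bound, note that $\|h\|_{L_p(\nu_\xi)}\le\|h\|_{L_\infty(\nu_\xi)}$ since $\nu_\xi$ is a probability measure. Moreover, for continuous $h$, $L_\infty(\nu_\xi)$ coincides with the sup norm over $\Supp\nu\cup\xi$. This sup norm is dominated by $\|h\|_{\C(\Omega)}$, which is how I read $L_\infty(\Omega)$. That gives the $2^{1/p}\sigma_s(f,\D_N)_{L_\infty(\Omega)}$ bound.

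\textbf{$L_p(\Omega,\nu)$ bounds.} Choose a near-best $g\in\Sigma_s(\D_N)$ with $\|f-g\|_{L_p(\nu_\xi)}\le 2\sigma_s(f,\D_N)_{L_p(\nu_\xi)}$; the factor $2$ is harmless and can be taken close to $1$. Split
\[
\|f-f^s_{p,m}\|_{L_p(\nu)}\le\|f-g\|_{L_p(\nu)}+\|g-f^s_{p,m}\|_{L_p(\nu)}.
\]
The first term is at most $2^{1/p}\|f-g\|_{L_p(\nu_\xi)}$, because $\nu\le 2\nu_\xi$. The difference $g-f^s_{p,m}$ lies in $\Sigma_{2s}(\D_N)$, so one-sided universal discretization gives
\[
\|g-f^s_{p,m}\|_{L_p(\nu)}\le C_1^{-1/p}\,\|g-f^s_{p,m}|\xi\|_{p,m}.
\]
By the triangle inequality in the empirical norm, this is at most $C_1^{-1/p}\bigl(\|g-f|\xi\|_{p,m}+\|f-f^s_{p,m}|\xi\|_{p,m}\bigr)$. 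Each of these two terms is bounded by $2^{1/p}\|f-g\|_{L_p(\nu_\xi)}$, the second by the first step.

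Collecting terms gives a bound of the form $C\,\sigma_s(f,\D_N)_{L_p(\nu_\xi)}$, with $C$ built from $2^{1/p}$ and $C_1^{-1/p}$. The constant $C_p$ in \eqref{equ:cp} contains factors $(1+C_1^{-1})^{1/p}$. These suggest the authors first bound $\|h\|_{L_p(\nu_\xi)}$ for $h\in\Sigma_{2s}$ via $\|h\|_{L_p(\nu_\xi)}^p=\tfrac12\|h\|_{L_p(\nu)}^p+\tfrac12\|h|\xi\|_{p,m}^p\le\tfrac12(1+C_1^{-1})\|h|\xi\|_{p,m}^p$. They then work throughout in $L_p(\nu_\xi)$, handling the LS projection step in \eqref{eq:ls} separately. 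My simpler chain produces a constant no larger than \eqref{equ:cp}, so it suffices. Finally, the $L_\infty(\Omega)$ version follows as before from $\sigma_s(\cdot)_{L_p(\nu_\xi)}\le\sigma_s(\cdot)_{L_\infty(\Omega)}$.

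\textbf{Main obstacle.} The only delicate points are existence of the minimizer $f^s_{p,m}$ and the handling of $p=\infty$. Existence follows because there are finitely many supports, and on each support minimizing a seminorm over a finite-dimensional space attains its infimum. For $p=\infty$, the factors $2^{1/p}$ and $C_1^{-1/p}$ are read as $1$, with the discretization assumption interpreted accordingly.
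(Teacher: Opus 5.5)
Your proof is correct. The empirical half, namely $\|h|\xi\|_{p,m}^p\le 2\|h\|_{L_p(\Omega,\nu_\xi)}^p$ together with the optimality of $f^s_{p,m}$ over $\Sigma_s(\D_N)$, is exactly the paper's argument. For the $L_p(\Omega,\nu)$ half, however, you take a genuinely shorter route.

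The paper works subspace by subspace. For each $V\in\X_s(\D_N)$ it shows that the discrete fit $u=\LS_p(\xi,V)(f)$ is quasi-optimal in $L_p(\Omega,\nu_\xi)$ with constant $1+2(1+C_1^{-1})^{1/p}$. This is where the estimate $\|h\|_{L_p(\Omega,\nu_\xi)}^p\le\frac12(1+C_1^{-1})\|h|\xi\|_{p,m}^p$ on $\Sigma_{2s}(\D_N)$ enters, as you guessed. It then passes to an auxiliary $f^*$, the best of these fits in $L_p(\Omega,\nu_\xi)$. Only then does it compare $f^*$ with $f^s_{p,m}$, via one-sided discretization of $f^*-f^s_{p,m}\in\Sigma_{2s}(\D_N)$.

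You instead compare $f^s_{p,m}$ directly with a near-best $s$-term approximant $g$. The discretization assumption is used once, on $g-f^s_{p,m}$, and the least-squares fits on individual subspaces never appear. This yields the constant $2^{1/p}(1+2C_1^{-1/p})$, which falls short of $C_p$ by $2^{1+1/p}(1+C_1^{-1/p})(1+C_1^{-1})^{1/p}>0$, so the stated bound follows. The paper's detour buys only a by-product, the per-subspace Lebesgue-type inequality \eqref{eq:best-on-V-muxi}, which the theorem itself does not need.

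Your handling of the $L_\infty$ variant is also slightly cleaner. Reading $L_\infty(\Omega)$ as the sup norm, you only use $\|h\|_{L_p(\Omega,\nu_\xi)}\le\|h\|_{\C(\Omega)}$. The paper's identification $L_\infty(\Omega,\nu_\xi)=L_\infty(\Omega,\nu)$ tacitly requires the sample points to lie in $\Supp\nu$. That holds almost surely for random sampling, but not for arbitrary deterministic $\xi$.
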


\begin{proof}[Proof of Lemma~\ref{lem:s_apx} and Theorem~\ref{thm:sterm-ls}]
	Fix a linear space $V \in \X_s(\D_N)$. We write $u:=  LS_p\left(\xi, V\right)(f) $ for simplicity. It is straightforward to show that
	\begin{align}\label{eq:s-2-muxi}
		\|f|\xi\|^p_{p,m} \leq 2 \|f\|^p_{L_p(\Omega, \nu_\xi)}, \quad \forall f \in \C(\Omega).
	\end{align}
	Hence, for any $g \in V$, we have
	\begin{align*}
		\|f-g|\xi\|^p_{p,m} \leq 2 \|f-g\|^p_{L_p(\Omega, \nu_\xi)}.
	\end{align*}
	Since $u$ is the minimizer of the least square regression problem \eqref{eq:ls} w.r.t. $V$, we have for any $g\in V$
	\begin{align}\label{eq:fu}
		\|f-u|\xi\|_{p,m} \leq \|f-g|\xi\|_{p,m} \leq 2^{1/p} \|f-g\|_{L_p(\Omega, \nu_\xi)}.
	\end{align}
	Taking the minimum over all $g \in V$ gives
	\begin{align*}
		\|f-u|\xi\|_{p,m} \leq  2^{1/p} d(f, V)_{L_p(\Omega, \nu_{\xi})}.
	\end{align*}
	Since $u$ depends on the linear space $V$, by minimizing both sides for any $V\in \Sigma_s(\D_N)$ and using \eqref{eq:optimal_subspace}, we can further get
	\begin{align}\label{eqref:conclusion1}
		\|f-f^s_{p,m}|\xi\|_{p,m} \leq  2^{1/p} \sigma_s(f, \D_N)_{L_p(\Omega, \nu_{\xi})}.
	\end{align}
	Combining \eqref{eq:fu} with the following inequality
    \begin{align}\label{eq:LpLinfty}
        \|f-g\|_{L_p(\Omega, \nu_\xi)} \leq  \|f-g\|_{L_\infty(\Omega,\nu)},
    \end{align}
    we obtain an upper bound given in $L_\infty$ norm
	\begin{align}\label{eq:Linftycase}
		\|f-f^s_{p,m}|\xi\|_{p,m} \leq  2^{1/p} \sigma_s(f, \D_N)_{L_\infty(\Omega, \nu)},
	\end{align}
    with similar steps.
	
	For any $f \in \C(\Omega)$, its $L_p(\Omega, \nu)$ norm can be bounded by its $L_p(\Omega, \nu_\xi)$ norm
	\begin{align}\label{eq:mu-2-muxi}
		\|f\|_{L_p(\Omega, \nu)} &\leq 2^{1/p} \|f\|_{L_p(\Omega, \nu_\xi)} .
	\end{align}
    Given any $g_1, g_2 \in \Sigma_s(\D_N)$, Assumption~\ref{ass:sparsecoding} implies that
    \begin{align}\label{eq:muxi-2-s}
    \begin{aligned}
        \|g_1-g_2\|_{L_p(\Omega,\nu_\xi)}^p 
        &= \frac{1}{2}\|g_1-g_2\|_{L_p(\Omega,\nu)}^p +\frac{1}{2} \|g_1-g_2|\xi\|_{p,m}^p \\
        &\leq  \frac{1}{2}(1+C_1^{-1}) \|g_1-g_2|\xi\|_{p,m}^p ,
    \end{aligned}
    \end{align}
    since $g_1-g_2 \in \Sigma_{2s}(\D_N)$.
    Now we are able to conclude the following upper bound of $\|f- u \|_{L_p(\Omega, \nu_\xi)}$ by choosing an arbitrary $g \in V$
    \begin{align}\label{eq:f-u}
    \begin{aligned}
        \|f- u \|_{L_p(\Omega, \nu_\xi)} 
		&\leq \|f-g\|_{L_p(\Omega, \nu_\xi)} + \|g-u\|_{L_p(\Omega, \nu_\xi)} \\
		&\leq \|f-g\|_{L_p(\Omega, \nu_\xi)} + 2^{-1/p}(1+C_1^{-1})^{1/p} \|g-u|\xi\|_{p,m} \\
		&\leq \|f-g\|_{L_p(\Omega, \nu_\xi)} + 2^{-1/p}(1+C_1^{-1})^{1/p} \left( \|g-f|\xi\|_{p,m} +  \|f-u|\xi\|_{p,m} \right) \\
		&\leq \|f-g\|_{L_p(\Omega, \nu_\xi)} + 2^{1-1/p}(1+C_1^{-1})^{1/p} \|f-g|\xi\|_{p,m} \\
        &\leq \left( 1 + 2(1+C_1^{-1})^{1/p} \right) \|f-g\|_{L_p(\Omega, \nu_\xi)},
    \end{aligned}
	\end{align}
    where we use \eqref{eq:muxi-2-s} in the second step, and the fact that $u$ is the minimizer over $V$ in the fourth step. In the last step, we use \eqref{eq:s-2-muxi}.
    
    By choosing $g \in V $ as the minimizer of $\|f-g\|_{L_p(\Omega, \nu_\xi)}$, we obtain
    \begin{align}\label{eq:best-on-V-muxi}
        \|f- u \|_{L_p(\Omega, \nu_\xi)} 
		\leq \left( 1 + 2(1+C_1^{-1})^{1/p} \right)  d(f, V)_{L_p(\Omega, \nu_{\xi})} .
    \end{align}
    Let $f^*$ be the best estimator of $f$ among all $u=  LS_p\left(\xi, V\right)(f) $ w.r.t. $V \in \X_s(\D_N)$ under $L_p(\Omega, \nu_\xi)$ norm:
    \begin{align*}
        f^* = \argmin_{V \in \X_s(\D_N)} \|f-u\|_{L_p(\Omega, \nu_\xi)}.
    \end{align*}
    Minimizing both sides of \eqref{eq:best-on-V-muxi} for any $V \in \X_s(\D_N)$ leads to the error bound between $f$ and $f^*$
    \begin{align}\label{eq:f-ustar}
        \|f- f^* \|_{L_p(\Omega, \nu_\xi)} 
		\leq \left( 1 + 2(1+C_1^{-1})^{1/p} \right) \sigma_s(f, \D_N)_{L_p(\Omega, \nu_{\xi})} .
    \end{align}
    Since $f^*- f^s_{p,m} \in \Sigma_{2s}(\D_N)$, we can characterize their distance using the one-sided universal discretization
    \begin{align}\label{eq:ustar-2-uxi}
    \begin{aligned}
        \|f^* - f^s_{p,m} \|_{L_p(\Omega, \nu)} 
        &\leq C_1^{-1/p } \|f^*-f^s_{p,m} | \xi\|_{p,m} \\
        &\leq  C_1^{-1/p } \|f- f^*|\xi\|_{p,m} +  C_1^{-1/p } \|f-f^s_{p,m}|\xi\|_{p,m }\\
        &\leq  2^{1/p}C_1^{-1/p } \|f- f^*\|_{L_p(\Omega, \nu_\xi)} +  C_1^{-1/p } \|f-f^s_{p,m}|\xi\|_{p,m} \\
        &\leq  2^{1/p}C_1^{-1/p} \left( 2 + 2(1+C_1^{-1})^{1/p} \right) \sigma_s(f, \D_N)_{L_p(\Omega, \nu_{\xi})},
    \end{aligned}
    \end{align}
    where in the third step we use \eqref{eq:s-2-muxi}, and the last step follows from \eqref{eqref:conclusion1} and \eqref{eq:f-ustar}.
    
    Finally, following \eqref{eq:mu-2-muxi}, \eqref{eq:f-ustar}, and \eqref{eq:ustar-2-uxi}, we conclude the last statement
    \begin{align}\label{eq:finalstep}
    \begin{aligned}
        \|f-f^s_{p,m}\|_{L_p(\Omega,\nu)} 
        &\leq \|f-f^*\|_{L_p(\Omega,\nu)} + \|f^*-f^s_{p,m}\|_{L_p(\Omega,\nu)} \\
        &\leq 2^{1/p } \|f-f^*\|_{L_p(\Omega,\nu_\xi)} + \|f^*-f^s_{p,m}\|_{L_p(\Omega,\nu)} \\
        &\leq C_p \sigma_s(f, \D_N)_{L_p(\Omega, \nu_{\xi})} .
    \end{aligned}
    \end{align}
    Again, since \eqref{eq:LpLinfty} and \eqref{eq:Linftycase} holds, the keys steps can be adapted to $L_\infty$ case for some constants $C_p', C_p''$
    \begin{itemize}
        \item The inequality \eqref{eq:f-u} becomes $\|f- u \|_{L_p(\Omega, \nu_\xi)} \leq C_p' \|f-g\|_{L_\infty(\Omega, \nu)} $ .
        \item The inequality \eqref{eq:f-ustar} becomes $\|f- f^* \|_{L_p(\Omega, \nu_\xi)} \leq C_p' \sigma_s(f, \D_N)_{L_\infty(\Omega, \nu)} $.
        \item The inequality \eqref{eq:ustar-2-uxi} becomes $\|f^* - f^s_{p,m} \|_{L_p(\Omega, \nu)}  \leq \sigma_s(f, \D_N)_{L_\infty(\Omega, \nu)}$.
        \item Similarly and consequently, the final step \eqref{eq:finalstep} becomes $$\|f-f^s_{p,m}\|_{L_p(\Omega,\nu)} \leq C_p \sigma_s(f, \D_N)_{L_\infty(\Omega, \nu)}.$$
    \end{itemize}
    The proof is finished.
\end{proof}
Theorem~\ref{thm:sterm-ls} differs from the result in \cite{dai2023universal} in that their estimator is based on the $L_p$ norm in \eqref{eq:optimal_subspace}, rather than on a discretized summation.
This distinction approach makes our procedure more practical for computing the estimator and builds a close connection between sparse approximation and sparse coding problems.

\section{Proofs of Section~\ref{sec:app_general}}
\subsection{Proof of Theorem~\ref{thm:nn-estimator}}\label{subsec:cnn-rcvy}
Building on Lemma~\ref{lem:s_apx} or Theorem~\ref{thm:sterm-ls}, we are now prepared to construct CNNs that act as sparse approximators.
The next lemma was essentially demonstrated in \cite{li2024convergence}, relying on the methods developed in \cite{chen2018theoretical}. We are reconsidering the proof, as constants in error bounds are essential to subsequent analyzes.
\begin{lemma}\label{lem:cnnsparse}
	Consider a linear inverse problem $y=Ax^* + \varepsilon$ where columns of $A \in \RR^{m\times N}$ are normalized and $\|x^*\|_0\leq s$, $\|x^*\|_\infty \leq B$, and $\|\varepsilon\|_1\leq \delta$. We assume that the sparsity satisfies $s< \left( 1+ 1/\mu(\tilde{D}^m_N) \right)/2$.
    Then there exists a CNN $\phi \in \nn_{\operatorname{CNN}}(O(J\log_k(m+N)),J(m+N)^2) $ with kernel size $k$ such that for any $\|x^*\|_0\leq s$, $\|x^*\|_\infty \leq B$, and $\|\varepsilon\|_1\leq \delta$
	\begin{align*}
    \begin{gathered}
        \left\| x^* - \phi( y ) \right\|_{1} \leq   c_1 e^{-c_2J} + c_3 \delta  , \\
        \Supp \phi(y)  \subset \Supp x^*,
    \end{gathered}
	\end{align*}
    where $c_1 = sB$, $c_2 = -\ln (2\mu s-\mu) >0 $, $c_3 = 2 s \sum_{i=0}^{J} (2\mu s-\mu)^i $, and $\Supp z := \{i: z_i \neq 0\}$ for any vector $z$.
\end{lemma}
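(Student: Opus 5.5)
The approach is to unroll a coherence-based iterative thresholding scheme with support selection, in the spirit of \cite{chen2018theoretical}, and to realize each iteration with a constant number of convolutional layers. Write $\mu:=\mu(A)$ and $\rho:=2\mu s-\mu$. The assumption $s<(1+1/\mu)/2$ is exactly $\rho<1$, so $c_2=-\ln\rho>0$.

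The iteration starts at $x^0=0$ and is
\[
x^{j+1}=\eta_{\theta_j}\bigl(x^j + A^\top(y-Ax^j)\bigr),\qquad j=0,\dots,J-1,
\]
where $\eta_\theta(z)=\sigma(z-\theta)-\sigma(-z-\theta)$ is soft thresholding. The thresholds are $\theta_j=\mu\sup_{x^*}\|x^j-x^*\|_1+\|A^\top\varepsilon\|_\infty$. Because the columns are normalized, $|a_i^\top \varepsilon|\le\|\varepsilon\|_2\le\|\varepsilon\|_1\le\delta$, so the noise part of $\theta_j$ can be bounded by $\delta$.

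The analysis is by induction on $j$ and has two parts.

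\emph{No false support.} Suppose $\Supp x^j\subset\Supp x^*$. For $i\notin\Supp x^*$, the pre-threshold entry is $(A^\top A(x^*-x^j))_i+(A^\top\varepsilon)_i$. Since $(A^\top A)_{ii}=1$ and the diagonal contribution vanishes off the support, its magnitude is at most $\mu\|x^*-x^j\|_1+\delta\le\theta_j$. So $\eta_{\theta_j}$ kills it, and $\Supp x^{j+1}\subset\Supp x^*$.

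\emph{Error recursion.} For $i\in S=\Supp x^*$, the pre-threshold entry equals $x^*_i+\sum_{k\ne i}(A^\top A)_{ik}(x^*_k-x^j_k)+(A^\top\varepsilon)_i$. Soft thresholding moves it by at most $\theta_j$. Hence
\[
|x^{j+1}_i-x^*_i|\le \mu\|x^j-x^*\|_1-\mu|x^j_i-x^*_i|+\delta+\theta_j .
\]
Summing over the $s$ indices in $S$ and using that both vectors are supported in $S$ gives
\[
\|x^{j+1}-x^*\|_1\le \bigl(2\mu s-\mu\bigr)\|x^j-x^*\|_1+2s\delta=\rho\,\|x^j-x^*\|_1+2s\delta .
\]
Since $\|x^0-x^*\|_1\le sB$, iterating yields $\|x^J-x^*\|_1\le sB\rho^J+2s\delta\sum_{i<J}\rho^i$, which is the claimed bound with $c_1,c_2,c_3$. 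The thresholds must be fixed in advance, independent of $x^*$, because they become network biases. I would therefore take the worst-case versions $\theta_j=\mu\bigl(sB\rho^j+2s\delta\sum_{i<j}\rho^i\bigr)+\delta$. The same induction then goes through, since the bound on $\|x^j-x^*\|_1$ holds uniformly over the class.

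\emph{CNN realization.} The affine map $x\mapsto (I-A^\top A)x+A^\top y$ acts on the stacked vector $(y,x)\in\RR^{m+N}$. It is a linear map on $\RR^{m+N}$, and $y$ must be carried forward unchanged. I would represent $y$ and $x$ via $\sigma(z)-\sigma(-z)$ using two channels per sign, and push everything through with identity-like filters. The obstacle is implementing a dense $(m+N)\times(m+N)$ matrix with small kernels of size $k$. Here I would invoke the standard factorization results for deep convolutions (\cite{zhou2020universality}, or the multi-channel, stride-based constructions of \cite{li2024approximation}): a general matrix is realized by $O(\log_k(m+N))$ layers of size-$k$ multichannel convolutions with $O((m+N)^2)$ parameters, with ReLU kept inactive through large biases or $\pm$ splitting. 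Soft thresholding costs one more ReLU layer, with biases $\mp\theta_j$. Stacking $J$ such blocks gives depth $O(J\log_k(m+N))$ and $O(J(m+N)^2)$ parameters, which establishes the lemma. The delicate step is the convolutional factorization with exact parameter count. The thresholding analysis is routine once $\rho<1$.
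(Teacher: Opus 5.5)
Your proposal is correct and follows essentially the same route as the paper. Both use the same unrolled soft-thresholding iteration from $x^{(0)}=0$, with thresholds equal to $\mu$ times the worst-case $\ell_1$ error plus the noise bound. Both then run the same induction, giving no false support and the $(2\mu s-\mu)$-contraction of the $\ell_1$ error, and finish with a per-iteration CNN realization; the paper does not build that realization but takes it from Lemma~14 of \cite{li2024convergence}. The remaining differences are cosmetic. You bound $|a_i^\top\varepsilon|$ by Cauchy--Schwarz rather than via $C_A=\max_{ij}|A_{ij}|\le 1$. Your explicit thresholds $\mu\bigl(sB\rho^j+2s\delta\sum_{i<j}\rho^i\bigr)+\delta$ upper-bound the paper's supremum-defined $\theta^{(k)}$ and give the same recursion, even with the slightly sharper factor $\sum_{i<J}$ in place of $\sum_{i\le J}$.
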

\begin{proof}
    In this proof, for the sake of self-containedness, we repeat and adjust the approach of \cite{chen2018theoretical} to obtain the convergence rate for an iteration below and then adopt an argument similar to that in \cite{li2024convergence} to convert this iterative scheme into a CNN. 
    We denote $\mu:= \mu(\tilde{D}^m_N)$, $C_A :=\max_{ij} |A_{ij}|$, and
    \begin{align*}
        \XX(s,B,\delta) :=\left \{ (x^*,\varepsilon) : \|x^*\|_0\leq s, \|x^*\|_\infty \leq B, \|\varepsilon\|_1\leq \delta \right \},
    \end{align*}
    for simplicity. It is easy to see that $C_A \leq 1$ since the columns of $A \in \RR^{m\times N}$ are $\ell_2$-normalized.
    
      We first prove that the following iteration $x^{(k)}$ converges to $x^*$
    \begin{align}\label{LISTA-CP}
    	\begin{aligned}
    		 x^{(0)} &:= 0, \\
    		 x^{(k+1)} &:= \t_{\theta^{(k)}}( x^{(k)} + A^\top ( y -  A x^{(k)})), k = 1,2,3,\dots,
    	\end{aligned}
    \end{align}
    where the soft thresholding function $\t_{\alpha}$ is defined component-wise as
    \begin{align*}
    	\t_\alpha( x)_i = \operatorname{sign}(x_i)(|x_i|-\alpha)_{+},
    \end{align*}
     and the threshold $\theta^{(k)}$ is given by
     \begin{align}
         \theta^{(k)} := \mu \sup_{(x^*,\varepsilon) \in \XX(s,B,\delta) } \{  \| x^{(k)}(x^*,\varepsilon) - x^* \|_1 \} + C_A \delta.
     \end{align}

 Denote $S := \Supp x^*$ ($|S| \leq s$) and $A_i$ as the $i$-th column of $A$. We assume that $x^{(k)}_i = 0$, for any $i\notin S$. Then we can rewrite $x^{(k+1)}_i$ as
     \begin{align*}
         x^{(k+1)}_i 
         &= \t_{\theta^{(k)}} ( x^{(k)}_i + A^\top_i A ( x^* - x^{(k)}  ) + A^\top_i \varepsilon ), \\
         &= \t_{\theta^{(k)}} ( \sum_{j\in S} A^\top_i A_j ( x^*_j - x^{(k)}_j  ) + A^\top_i \varepsilon ).
     \end{align*}
     Since the definition of $\theta^{(k)}$ implies that
     \begin{align*}
         \left | \sum_{j\in S} A^\top_i A_j ( x^*_j - x^{(k)}_j  ) + A^\top_i \varepsilon \right| \leq   \mu \| x^{(k)} - x^* \|_1 + C_A \delta \leq \theta^{(k)} ,
     \end{align*}
     we obtain $x^{(k+1)}_i = 0 $ for any $i\notin S$. As $x^{(0)} = 0$, we can conclude that $x^{(k)}_i = 0$, $\forall i \notin S$ is true for any $k$. This means that $\Supp x^{(k)} \subset \Supp x^*$.

     Define
     \begin{align*}
         \partial \ell_1(z)_i := 
         \left\{
         \begin{aligned}
             {\operatorname{sign}(z_i)}, \quad z_i \neq 0, \\
             [-1,1], \quad z_i = 0.
         \end{aligned}
         \right.
     \end{align*}
     Then for any $i\in S$, 
     \begin{align*}
         \begin{aligned}
            x^{(k+1)}_i 
            &= \t_{\theta^{(k)}} ( x^{(k)}_i + \sum_{j\in S} A^\top_i A_j ( x^*_j - x^{(k)}_j  ) + A^\top_i \varepsilon ) \\
            &= \t_{\theta^{(k)}} ( x^{(k)}_i + \sum_{j\in S, j\neq i} A^\top_i A_j ( x^*_j - x^{(k)}_j  ) + A_i^\top A_i (x^*_i - x^{(k)}_i) + A^\top_i \varepsilon ) \\
            &\in x_i^*+ \sum_{j\in S, j\neq i} A^\top_i A_j ( x^*_j - x^{(k)}_j  ) + A^\top_i \varepsilon - \theta^{(k)} \partial\ell_1(x_i^{(k+1)}) ,
         \end{aligned}
     \end{align*}
     where we use the fact that $A$ is column-wise normalized.
     This further implies that
     \begin{align}\label{eq:xk1}
         x^{(k+1)}_i - x^*_i \in \sum_{j\in S, j\neq i} A^\top_i A_j ( x^*_j - x^{(k)}_j  ) + A^\top_i \varepsilon - \theta^{(k)} \partial\ell_1(x_i^{(k+1)}).
     \end{align}
     Since $\partial\ell_1( x_i^{(k+1)})$ is bounded by $1$, from \eqref{eq:xk1} we can get the upper bound of $x^{(k+1)}_i - x^*_i$ for any $i\in S$
     \begin{align}\label{eq:xsupport}
     \begin{aligned}
         |x^{(k+1)}_i - x^*_i| 
         &\leq \left| \sum_{j\in S, j\neq i} A^\top_i A_j ( x^*_j - x^{(k)}_j  ) + A^\top_i \varepsilon \right| + \theta^{(k)} \\
         &\leq \mu \sum_{j\in S, j\neq i} | x^*_j - x^{(k)}_j  | + C_A \delta + \theta^{(k)}.
     \end{aligned}
     \end{align}
     Summing both sides of \eqref{eq:xsupport} for any $i\in [N]$ and noticing that $\Supp x^{(k)} \subset \Supp x^*$, we obtain
     \begin{align*}
         \sup_{(x^*,\varepsilon) \in \XX(s,B,\delta) } \|x^{(k+1)} - x^*\|_1 
         &= \sup_{(x^*,\varepsilon) \in \XX(s,B,\delta) } \sum_{i\in S}|x^{(k+1)}_i - x^*_i| \\
         &\leq \sup_{(x^*,\varepsilon) \in \XX(s,B,\delta) } \left\{   \mu \sum_{i\in S} \sum_{j\in S, j\neq i} | x^*_j - x^{(k)}_j  |   \right\} + sC_A \delta + s\theta^{(k)} \\
         &\leq \mu (s-1) \sup_{(x^*,\varepsilon) \in \XX(s,B,\delta) }  \left\{ \|x^{(k)} - x^*\|_1 \right \} + sC_A \delta + s\theta^{(k)} \\
         &\leq (2\mu s-\mu) \sup_{(x^*,\varepsilon) \in \XX(s,B,\delta) }  \left\{ \|x^{(k)} - x^*\|_1 \right \} +2 sC_A \delta,
     \end{align*}
     where in the second and last steps we use the definition of $\theta^{(k)}$.
     By induction, we obtain
     \begin{align*}
         &\sup_{(x^*,\varepsilon) \in \XX(s,B,\delta) } \|x^{(k+1)} - x^*\|_1 \\
         &\leq (2\mu s-\mu)^{k+1} \sup_{(x^*,\varepsilon) \in \XX(s,B,\delta) }  \left\{ \|x^{(0)} - x^*\|_1 \right \} +2 sC_A \delta \sum_{i=0}^{k} (2\mu s-\mu)^i \\
         &= c_1 e^{-c_2 (k+1) } +  \delta\left( 2 s \sum_{i=0}^{k} (2\mu s-\mu)^i \right).
     \end{align*}
     since $x^{(0)} = 0$ and $C_A \leq 1$.

     Lemma 14 in \cite{li2024convergence} implies that there exists a CNN $\phi \in \nn_{\operatorname{CNN}}(O(J\log_k(m+N)),J(m+N)^2) $ with kernel size $k$ such that $\phi(y) = x^{(J)}(x^*,\varepsilon)$ for any $(x^*,\varepsilon) \in \XX(s,B,\delta)$.
     The proof is complete.
\end{proof}

To determine the conditions of the above lemma for sparse approximation, we also need to assess how a dictionary influences the $\ell_p$ norm of sparse vectors.
The following estimation commonly utilized in sparse coding is taken from \cite{elad2010sparse}. For completeness, we provide the proof.

\begin{lemma}\label{lem:mutual}
    Given a matrix $D \in \RR^{n \times d}$ that is column-wise normalized, for any $ x\in \RR^d$ with sparsity $\| x\|_0 \leq s$, we have
    \begin{align*}
        \left(1-(s-1) \mu( D)\right)\| x\|_2^2 \leq \| D x\|_2^2 \leq \left(1+(s-1) \mu( D)\right)\| x\|_2^2 .
    \end{align*}
\end{lemma}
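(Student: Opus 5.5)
The plan is to expand $\|Dx\|_2^2$ as a quadratic form in the Gram matrix $G := D^\top D$ and to control its off-diagonal part using the mutual coherence. Since the columns $d_i$ of $D$ are $\ell_2$-normalized, we have $G_{ii}=1$ and $|G_{ij}| = |d_i^\top d_j| \le \mu(D)$ for $i\neq j$. Let $S := \Supp x$, so $|S|\le s$. Then
\begin{align*}
\|Dx\|_2^2 = \sum_{i,j\in S} x_i x_j\, d_i^\top d_j = \|x\|_2^2 + \sum_{i,j\in S,\, i\neq j} x_i x_j\, d_i^\top d_j .
\end{align*}
The task therefore reduces to showing that the off-diagonal sum is at most $(s-1)\mu(D)\|x\|_2^2$ in absolute value.

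First, bound the off-diagonal sum by
\begin{align*}
\mu(D)\sum_{i\neq j\in S}|x_i||x_j| = \mu(D)\left(\|x_S\|_1^2-\|x\|_2^2\right).
\end{align*}
Second, apply Cauchy--Schwarz on the support, $\|x_S\|_1^2 \le |S|\,\|x\|_2^2 \le s\|x\|_2^2$. This gives an off-diagonal contribution of at most $(s-1)\mu(D)\|x\|_2^2$. Adding and subtracting this bound from $\|x\|_2^2$ yields both the upper and the lower inequality at once.

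An equivalent route is Gershgorin's theorem applied to the principal submatrix $G_{S,S}$. Its diagonal entries equal $1$ and each off-diagonal row sum is at most $(|S|-1)\mu(D)$, so every eigenvalue of $G_{S,S}$ lies in $[1-(s-1)\mu(D),\,1+(s-1)\mu(D)]$. The Rayleigh quotient bound then gives the claim.

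There is no real obstacle here. The only point needing care is to restrict to the support before applying Cauchy--Schwarz, so that the factor is $s$ rather than $d$. The degenerate cases $x=0$ and $s=1$ are trivially covered, since for $s=1$ the off-diagonal sum is empty and the inequality holds with equality.
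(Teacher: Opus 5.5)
Your proposal is correct and follows essentially the same route as the paper: expand $\|Dx\|_2^2 = \|x\|_2^2 + \sum_{i\neq j} x_i x_j d_i^\top d_j$, bound the off-diagonal part by $\mu(D)\bigl(\|x\|_1^2-\|x\|_2^2\bigr)$, and apply $\|x\|_1\le\sqrt{s}\,\|x\|_2$ on the support. Your Gershgorin remark on $G_{S,S}$ is an equivalent eigenvalue reformulation of the same bound and is also valid.
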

\begin{proof}
    Let $ Q = \bm 1 \bm 1^\top$ where $\bm 1= (1,1,\dots, 1)^\top \in \RR^d$. Given a matrix $D \in \RR^{n \times d}$ that is column-normalized, it is easy to see that for any $ x\in \RR^d$ with sparsity $\| x\|_0 \leq s$,
	\begin{align*}
		\begin{aligned}
			\| D x\|_2^2 =  x^\top  D^\top  D  x 
            &\geq  x^\top  x - \mu( D)| x|^\top(  Q-I)| x| \\
			&\geq \left(1+ \mu( D)\right)\| x\|_2^2 - \mu( D) \| x\|_1^2 \\
			&\geq \left(1+ \mu( D)\right)\| x\|_2^2 - \mu( D)s \| x\|_2^2 \\
			&= \left(1-(s-1) \mu( D)\right)\| x\|_2^2 ,
		\end{aligned}
	\end{align*}
	where $|x|:=(|x_i|)_{i=1}^d$, in the third step we use the inequality $\| c\|_1 \leq \sqrt{s}\| c\|_2$ for any $ c \in \RR^s$, and in the first step we use the following fact
	\begin{align*}
		 x^\top  D^\top  D  x = \| x \|_2^2 + \sum_{i \neq j}  D_i^\top   D_j x_i x_j \geq \| x \|_2^2 - \sum_{i \neq j} \mu( D) |x_i x_j| = \| x \|_2^2 - \mu( D)| x|^\top ( Q- I) | x|.
	\end{align*}
    
    The claim for the upper bound follows similar steps
	\begin{align*}
		\begin{aligned}
			\| D x\|_2^2 =  x^\top  D^\top  D  x 
            &\leq  x^\top  x - \mu( D)| x|^\top( I -  Q)| x| \\
			&\leq \left(1- \mu( D)\right)\| x\|_2^2 + \mu( D) \| x\|_1^2 \\
			&\leq \left(1- \mu( D)\right)\| x\|_2^2 + \mu( D)s \| x\|_2^2 \\
			&= \left(1+(s-1) \mu( D)\right)\| x\|_2^2 ,
		\end{aligned}
	\end{align*}
	where in the first step we use
	\begin{align*}
		 x^\top  D^\top  D  x = \| x \|_2^2 + \sum_{i \neq j}  D_i^\top   D_j x_i x_j \leq \| x \|_2^2 + \sum_{i \neq j} \mu( D) |x_i x_j| = \| x \|_2^2 + \mu( D)| x|^\top ( Q- I) | x|.
	\end{align*}
\end{proof}

Using Theorem~\ref{thm:sterm-ls} together with Lemma~\ref{lem:cnnsparse}, we are now prepared to establish the proof of Theorem~\ref{thm:nn-estimator}. In contrast to Theorem~\ref{thm:nn-estimator}, Theorem~\ref{thm:nn-estimator-full} also encompasses the case $\sigma_s(F, \D_N)_{L_p(\Omega,\nu_\xi)}$.

\begin{theorem}\label{thm:nn-estimator-full}
	Let $m, s, k, J, N \in \mathbb{N}$, $1 \le p < \infty$, $B > 0$, and assume $s < \bar{s}$.
    Then there exists a CNN $\Phi \in \nn_{\operatorname{CNN}}(O(J\log_k(m+N)),J(m+N)^2) $ with kernel size $k$, such that for any $f\in F := \{ f \in \C(\Omega) : \|f\|_{\C(\Omega)} \le B \}$
    \begin{enumerate}
        \item $\| \Phi(\tilde{f}^m) \|_0 \leq s$;
        \item  $\left\| w_{p,m}^s - \Phi( \tilde{f}^m ) \right\|_{p} \leq   C_4 e^{-C_5 J} + C_6 \sigma_s(F, \D_N)_{L_\infty(\Omega,\nu)}$.
    \end{enumerate}
	In addition, if the one-sided universal discretization (Assumption~\ref{ass:sparsecoding}) holds, then 
	\begin{align}
		\|f-f_s\|_{L_p(\Omega,\nu)} 
		\leq   C_7 e^{-C_5 J} + C_8 \sigma_s(F, \D_N)_{L_\infty(\Omega,\nu)}.
	\end{align}
    where $f_s:= \sum_{i=1}^N \Phi( \tilde{f}^m )_i u_i $ and the constants $C_4,\dots,C_8$ depend on $p$, $m$, $B$, $s$, $N$, and $\tilde{D}^m_N$, as given in \eqref{eq:C4}, \eqref{eq:C5}, \eqref{eq:C6}, \eqref{eq:C7}, and \eqref{eq:C8} in Appendix~\ref{subsec:cnn-rcvy}.

    The above error bounds hold also for the $\|\cdot\|_{L_p(\Omega,\nu_\xi)}$ case.
\end{theorem}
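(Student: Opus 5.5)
The plan is to reduce the statement to Lemma~\ref{lem:cnnsparse} by viewing the sampled data as a noisy linear inverse problem. Write $A := \widetilde{D}_N^m \Lambda^{-1}$, where $\Lambda := \operatorname{diag}(\|\tilde u_i^m\|_2)$, so that $A$ has normalized columns and $\mu(A) = \mu(\widetilde D_N^m)$. For $f \in F$, set $x^* := \Lambda w_{p,m}^s$, which is $s$-sparse. Then
\[
\tilde f^m = A x^* + \varepsilon, \qquad \varepsilon := \tilde f^m - \widetilde D_N^m w_{p,m}^s = (f - f^s_{p,m} \,|\, \xi).
\]
By Lemma~\ref{lem:s_apx} (or Theorem~\ref{thm:sterm-ls}),
\[
\|\varepsilon\|_1 \le m^{1-1/p}\, m^{1/p}\, \|f-f^s_{p,m}\,|\,\xi\|_{p,m} \le m\, 2^{1/p}\, \sigma_s(F,\D_N)_{L_\infty(\Omega,\nu)} =: \delta ,
\]
where the first inequality is Hölder's inequality together with the averaged normalization. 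For the $L_p(\Omega,\nu_\xi)$ version, use the first inequality of Theorem~\ref{thm:sterm-ls} instead.

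Next I need a uniform sup-norm bound on $x^*$. The vector $\widetilde D_N^m w_{p,m}^s = \tilde f^m - \varepsilon$ has $\ell_2$ norm at most $\sqrt m (B + 2^{1/p}\sigma_s)$, and $\sigma_s \le B$ because $0 \in \Sigma_s$. Applying Lemma~\ref{lem:mutual} to $A$ with sparsity $s$, and using $s<\bar s$, which gives $1-(s-1)\mu>0$, I get
\[
\|x^*\|_\infty \le \|x^*\|_2 \le \frac{\sqrt m\,(1+2^{1/p})B}{\sqrt{1-(s-1)\mu}} =: B' .
\]
Since $s<\bar s$ is exactly the hypothesis of Lemma~\ref{lem:cnnsparse}, that lemma yields a CNN $\phi$ of the stated size with $\Supp\phi(\tilde f^m)\subset \Supp x^*$ and
\[
\|x^*-\phi(\tilde f^m)\|_1 \le sB' e^{-c_2 J} + c_3\delta .
\]
Define $\Phi := \Lambda^{-1}\phi$. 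The diagonal rescaling can be absorbed into the last convolutional layer by taking per-channel weights, so it costs no extra depth. The support inclusion immediately gives $\|\Phi(\tilde f^m)\|_0\le s$, which is item 1. For item 2,
\[
\|w^s_{p,m}-\Phi(\tilde f^m)\|_p \le \|\Lambda^{-1}\|\,\|x^*-\phi\|_1 ,
\]
which defines $C_4 = sB'\max_i\|\tilde u_i^m\|_2^{-1}$, $C_5=c_2$ and $C_6 = c_3\, m\, 2^{1/p}\max_i \|\tilde u_i^m\|_2^{-1}$. Here I assume every $\tilde u_i^m\neq 0$; this is implicit in $\mu$ being defined.

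For the function-space bound, split
\[
\|f-f_s\|_{L_p(\Omega,\nu)} \le \|f-f^s_{p,m}\|_{L_p(\Omega,\nu)} + \|f^s_{p,m}-f_s\|_{L_p(\Omega,\nu)} .
\]
The first term is at most $C_3\sigma_s$ by Lemma~\ref{lem:s_apx}. The second term involves $g := f^s_{p,m}-f_s \in\Sigma_{2s}(\D_N)$, actually $\Sigma_s$ because the supports are nested. There are two ways to bound it. One can use $\|g\|_{L_p}\le \|g\|_\infty \le \|w^s_{p,m}-\Phi\|_1$ by Assumption~\ref{ass:bndD}. If one wants to avoid Assumption~\ref{ass:bndD}, one can instead use the one-sided discretization $\|g\|_{L_p}\le C_1^{-1/p}\|g|\xi\|_{p,m}\le C_1^{-1/p}\max_{i,j}|u_i(\xi_j)|\,\|w-\Phi\|_1$. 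Either route gives $C_7$ and $C_8$. The $\nu_\xi$ variants follow verbatim.

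The main obstacle is the noise bookkeeping. The lemma controls $\varepsilon$ in $\ell_1$ while Theorem~\ref{thm:sterm-ls} controls it in the averaged $\ell_p$ norm, and the conversion produces $m$-dependent constants. I also have to make sure that the threshold $\theta^{(k)}$ in Lemma~\ref{lem:cnnsparse} is chosen uniformly over the class $\XX(s,B',\delta)$, with $\delta$ depending on $F$ only through $\sigma_s(F,\D_N)$. This uniformity is what makes a single network $\Phi$ work for all $f\in F$, and I would check it first.
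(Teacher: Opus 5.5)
Your plan follows the paper's proof essentially step for step: the residual decomposition with $\ell_1$ noise bound $\delta=2^{1/p}m\,\sigma_s(F,\D_N)_{L_\infty(\Omega,\nu)}$, column normalization, Lemma~\ref{lem:mutual} for a uniform $\ell_\infty$ bound on the normalized coefficients, Lemma~\ref{lem:cnnsparse}, rescaling, and the same triangle-inequality split. One step is wrong as written: the claim $\|\varepsilon\|_2\le\sqrt m\,2^{1/p}\sigma_s$. Theorem~\ref{thm:sterm-ls} controls only the averaged norm $\|\varepsilon\|_{p,m}$, and $\|\varepsilon\|_{2,m}\le\|\varepsilon\|_{p,m}$ holds only for $p\ge 2$. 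For $1\le p<2$ nothing prevents the $\ell_p$-optimal residual from concentrating on a few sample points, and then $\|\varepsilon\|_2$ exceeds $\sqrt m\,\|\varepsilon\|_{p,m}$ by up to a factor $m^{1/p-1/2}$. This matters because Lemma~\ref{lem:cnnsparse} is applied over the class $\XX(s,B',\delta)$, and its thresholds $\theta^{(k)}$ are calibrated through the supremum over that class. If some $x^*$ violates $\|x^*\|_\infty\le B'$, the support-inclusion and convergence guarantees no longer cover it. The fix is the paper's estimate $\|\varepsilon\|_2\le\|\varepsilon\|_1\le\delta\le 2^{1/p}mB$. It gives $\|x^*\|_\infty\le\sqrt2\,(\sqrt m+2^{1/p}m)B\le 6Bm$ and only inflates the constants in front of $e^{-C_5J}$ by a factor of order $\sqrt m$.

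The remaining differences are harmless.
\begin{itemize}
\item Using $\|\cdot\|_p\le\|\cdot\|_1$ in item~2 is sharper than the paper, whose $C_4$ and $C_6$ carry a superfluous factor $N^{1-1/p}$.
\item For $\|f^s_{p,m}-f_s\|_{L_p(\Omega,\nu)}$, the paper passes from $\ell_p$ to $\ell_2$ (factor $C_{m,p}$) and applies Lemma~\ref{lem:mutual} at sparsity $2s$. This yields the factor $2C_1^{-1/p}m^{-\min\{1/2,1/p\}}$ in front of $\|x^*-\phi(\tilde f^m)\|_1$. Your route (b) is more elementary and avoids coherence, but it is cruder by roughly $m^{\min\{1/2,1/p\}}$. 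It is cleanest in normalized coordinates: $g(\xi_j)=(A(x^*-\phi(\tilde f^m)))_j$ with $|A_{ji}|\le 1$, so $\|g\,|\,\xi\|_{p,m}\le\|x^*-\phi(\tilde f^m)\|_1$.
\item Drop route (a), since Assumption~\ref{ass:bndD} is not a hypothesis of this theorem.
\item Whether $\Lambda^{-1}$ can be absorbed into the last layer at no extra depth depends on how the construction behind Lemma~\ref{lem:cnnsparse} lays out its output across channels, because a kernel is shared across positions within a channel. The paper instead appends a CNN of depth $O(\log_k N)$ with $O(N^2)$ parameters realizing the diagonal map, which stays within the stated size.
\end{itemize}
Your constants therefore differ from \eqref{eq:C4}--\eqref{eq:C8}, but the form of every bound is the same.
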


\begin{proof}[Proof of Theorem~\ref{thm:nn-estimator} and Theorem~\ref{thm:nn-estimator-full}]
	

	According to the definition \eqref{eq:s-coef}, we can represent $\tilde{f}^m$ as $\tilde{f}^m =  \tilde{D}_N^m w^s_{p,m} + \tilde{r}^m_N $ for some residual term $\tilde{r}^m_N \in \RR^m$, which satisfies
	\begin{align}\label{eq:r_p}
    \begin{aligned}
        \sup_{\|f \|_{\C(\Omega)} \leq B} \| \tilde{r}^m_N \|_1
        &\leq m^{1-1/p}  \sup_{\|f \|_{\C(\Omega)} \leq B} \| \tilde{r}^m_N \|_p \\
        &= m  \sup_{\|f \|_{\C(\Omega)} \leq B} \| \tilde{r}^m_N \|_{p,m} \\
        &\leq
		2^{1/p} m \cdot \sigma_s(F, \D_N)_{L_\infty(\Omega,\nu)} ,
    \end{aligned}
	\end{align}
    where the third step follows from the fact that $\tilde{r}^m_N = \tilde{f}^m -  \tilde{D}_N^m w^s_{p,m} = (f-f^s_{p,m}|\xi) $ and Lemma~\ref{lem:s_apx} (Theorem~\ref{thm:sterm-ls}).

    Let $L \in \RR^{N\times N}$ be a diagonal matrix with diagonal elements $\frac{1}{\sqrt{\sum_{t=1}^m \left| u_i(\xi_t)\right|^2}}$
    \begin{align}\label{eq:L}
        L :=
        \begin{bmatrix}
        \frac{1}{\sqrt{\sum_{t=1}^m \left| u_1(\xi_t)\right|^2}} & 0 & \dots & 0 \\
        0 & \frac{1}{\sqrt{\sum_{t=1}^m \left| u_2(\xi_t)\right|^2}}  & \dots & 0 \\
        \vdots & \vdots & \ddots & 0 \\
        0 & \dots & \dots & \frac{1}{\sqrt{\sum_{t=1}^m \left| u_N(\xi_t)\right|^2}}
    \end{bmatrix}
        .
    \end{align}
    Then we can rewrite the inverse problem as $\tilde{f}^m =  \tilde{D}_N^m L L^{-1} w^s_{p,m} + \tilde{r}^m_N$ with $\tilde{D}_N^m L$ being column-wise normalized. For simplicity, we denote $\bar{D}_N^m  := \tilde{D}_N^m L$ and $\bar{w}^s_{p,m} := L^{-1} w^s_{p,m}$.
    For this inverse problem, our next step is to derive the necessary conditions on $ 
    \bar{w}^s_{p,m}$ as required in Lemma~\ref{lem:cnnsparse}.
    We use Lemma~\ref{lem:mutual} to estimate the $\ell_\infty$ norm of $\bar w^s_{p,m}$
    \begin{align}\label{eq:wpupper}
    \begin{aligned}
         \|\bar w^s_{p,m}\|_\infty &\leq \|\bar w^s_{p,m}\|_2 \leq \frac{\left\|\tilde{D}_N^m w^s_{p,m}  \right\|_2}{\sqrt{1-(s-1) \mu(\tilde{D}_N^m )}} \leq \frac{\left\| \tilde{f}^m \right\|_2 + \left\| \tilde r^m_N \right\|_2 }{\sqrt{1-(s-1) \mu(\tilde{D}_N^m )}} \\
        &\leq \frac{B \sqrt{m} + 2^{1/p} B m  }{\sqrt{1-(s-1) \mu(\tilde{D}_N^m )}} ,
    \end{aligned}
    \end{align}
    where in the first step, it is straightforward from the definition that $\mu(\tilde{D}_N^m ) = \mu(\bar{D}_N^m )$, and in the last step we use $\|c\|_2 \leq \sqrt{m} \|c\|_\infty$ and $\left\| \tilde{f}^m \right\|_\infty \leq B$ to control $\left\| \tilde{f}^m \right\|_2$, and $\|c\|_2 \leq \|c\|_1$, \eqref{eq:r_p}, and the fact that $\sigma_s(f, \D_N)_{L_\infty(\Omega,\nu)} \leq \|f\|_{L_\infty(\Omega,\nu)} \leq B$ to control $\left\| \tilde r^m_N \right\|_2$.
    
    Since
    \begin{align*}
        1-(s-1)\mu(\tilde{D}_N^m ) > 1- \left(\frac{1}{2}\left(1+\frac{1}{\mu(\tilde{D}_N^m )}\right)-1 \right)\mu(\tilde{D}_N^m ) = \frac{1}{2} + \frac{1}{2}\mu(\tilde{D}_N^m ) \geq \frac{1}{2},
    \end{align*}
    we can further simplify \eqref{eq:wpupper} as
    \begin{align}\label{eq:w_upper}
        \|\bar w^s_{p,m}\|_\infty \leq \frac{B \sqrt{m} + 2^{1/p} B m  }{\sqrt{1-(s-1) \mu(\tilde{D}_N^m )}} \leq B \sqrt{2m} + 2^{1/2+1/p} B m \leq 6 Bm.
    \end{align}
    Following \eqref{eq:r_p}, \eqref{eq:wpupper}, the pair $(\bar w^s_{p,m}, \tilde{r}^m_N)$ in $\tilde{f}^m =  \bar{D}_N^m \bar w^s_{p,m} + \tilde{r}^m_N $ satisfies
    \begin{align*}
        \|\bar w^s_{p,m}\|_0\leq s, \quad \|\bar w^s_{p,m}\|_\infty\leq 6Bm, \quad \|\tilde{r}^m_N\|_1\leq 2^{1/p} m \cdot \sigma_s(F, \D_N)_{L_\infty(\Omega,\nu)}.
    \end{align*}
    
	  Lemma~\ref{lem:cnnsparse} shows the following convergence rate of using CNNs to approximate the sparse coefficient vector $\bar w_{p,m}^s$. 
    When $s<\frac{1}{2} \left(1 + 1/\mu\left(\tilde{D}_N^m\right) \right)$, we can learn the sparse coefficient vector $\bar w^{s}_{p,m}$ through a CNN $\phi$ with kernel size $k$, $O(J \log_k (m+N) )$ layers and at most $O(J(N+m)^2)$ nonzero parameters such that for any $\|f\|_{\C(\Omega)} \leq B $, we have
    \begin{align}\label{eq:nn2discrete-ell2}
    \begin{aligned}
         \left\|  \phi(\tilde{f}^m) -\bar w^s_{p,m}  \right\|_1 
        &\leq 6 Bms\cdot e^{-cJ} + C 2^{1/p} m \cdot\sigma_s(F, \D_N)_{L_\infty(\Omega,\nu)},
    \end{aligned}
	\end{align}
    where $c = -\ln (2\mu s-\mu) >0 $, $C = 2 s \sum_{i=0}^{J} (2\mu s-\mu)^i $ with $\mu:=\mu( \tilde{D}_N^m )$. This implies
    \begin{align*}
    \begin{aligned}
        \left\| L \phi(\tilde{f}^m) - w^s_{p,m}  \right\|_1 
        &=
         \left\| L \phi(\tilde{f}^m) - L \bar w^s_{p,m}  \right\|_1 \\
         &\leq \left( \max_{i\in[N]} \frac{1}{\sqrt{\sum_{t=1}^m \left| u_i(\xi_t)\right|^2}} \right) \left\|  \phi(\tilde{f}^m) -\bar w^s_{p,m}  \right\|_1  \\
        &\leq 6 c_0 Bms\cdot e^{-cJ} + c_0 C 2^{1/p} m \cdot\sigma_s(F, \D_N)_{L_\infty(\Omega,\nu)},
    \end{aligned}
	\end{align*}
    where we denote $c_0 := \max_{i\in[N]} \frac{1}{\sqrt{\sum_{t=1}^m \left| u_i(\xi_t)\right|^2}}$. 
    According to \cite{li2024approximation}, the linear map $L$ can be performed by a CNN of depth $O(\log_k N)$ with at most $O(N^2)$ parameters. Therefore, by denoting a CNN as $\psi(\cdot):= L \cdot$, the composition $\Phi :=\psi \circ \phi$ has $O(J \log_k (m+N) )$ layers and at most $O(J(N+m)^2)$ nonzero parameters.

    Accordingly, we obtain the following error bound when using a CNN to approximate the sparse coefficients
	\begin{align}\label{eq:nn2discrete}
    \begin{aligned}
         \left\|\Phi(\tilde{f}^m) - w^s_{p,m}  \right\|_p 
        &\leq N^{1-1/p} \left\| L \phi(\tilde{f}^m) - w^s_{p,m}  \right\|_1 \\
        &\leq C_4 e^{-C_5J} + C_6 \sigma_s(F, \D_N)_{L_\infty(\Omega,\nu)},
    \end{aligned}
	\end{align}
    with
    \begin{align}
        C_4 &:= 6 c_0 B m s N^{1-1/p},  \label{eq:C4} \\ 
        C_5 &:= -\ln (2\mu(\tilde{D}^m_N) s-\mu(\tilde{D}^m_N)), \label{eq:C5} \\
        C_6 &:= c_0 C 2^{1/p} m N^{1-1/p}. \label{eq:C6} 
    \end{align}
    Moreover, the support of $\Phi(\tilde{f}^m)$ is included in the support of $ w^s_{p,m}$, and we have $\|\Phi(\tilde{f}^m)\|_0 \leq s$.

	If we denote the $s$-sparse estimator of $f$ with coefficients learned by CNN $\Phi$ as $f_s:= \sum_{i=1}^N \Phi(\tilde{f}^m)_i u_i $, then we also have
	\begin{align}\label{eq:coeff-dec}
    \begin{aligned}
        &\|f-f_s\|_{L_p(\Omega,\nu)} \\
		&\leq  \|f-f^s_{p,m}\|_{L_p(\Omega,\nu)} + \|f^s_{p,m}-f_s\|_{L_p(\Omega,\nu)} .
    \end{aligned}
	\end{align}
    Let $C_{m,p}:= \max\{1, m^{1/p-1/2} \}$. 
    Lemma~\ref{lem:mutual} and $s < \frac{1}{2}\left(1 + \frac{1}{\mu\left(\tilde{D}_N^m\right)} \right)$ imply
    \begin{align}\label{eq:westimates}
    \begin{aligned}
        \left  \| \tilde{D}_N^m w^s_{p,m} - \tilde{D}_N^m \Phi(\tilde{f}^m) \right  \|_{p} 
        &= \left  \| \bar{D}_N^m \bar w^s_{p,m} - \bar{D}_N^m \phi(\tilde{f}^m) \right  \|_{p}  \\
        &\leq  C_{m,p} \left  \| \bar{D}_N^m \bar w^s_{p,m} - \bar{D}_N^m \phi(\tilde{f}^m) \right  \|_{2} \\
        &\leq C_{m,p} \sqrt{1+(2s-1)\mu\left(\tilde{D}_N^m\right)} \left  \| \bar w^s_{p,m} -  \phi(\tilde{f}^m) \right  \|_{2} \\
        &\leq 2 C_{m,p}  \left  \| \bar w^s_{p,m} -  \phi(\tilde{f}^m) \right  \|_{2}.
    \end{aligned}
    \end{align}
    
    Since $f^s_{p,m}- f_s \in \Sigma_{2s} (\D_N)$, we can use the one-sided universal discretization property and derive the inequality of the second term of the upper bound in \eqref{eq:coeff-dec}
    \begin{align}\label{eq:cnn-part}
        \begin{aligned}
            &\|f^s_{p,m}-f_s\|_{L_p(\Omega,\nu)} \\
            &\leq C_1^{-1/p} \left \| f^s_{p,m}-f_s | \xi   \right \|_{p,m} \\
            &= C_1^{-1/p} \left  \| \tilde{D}_N^m w^s_{p,m} - \tilde{D}_N^m \Phi(\tilde{f}^m) \right  \|_{p,m} \\
            &\leq 2 C_1^{-1/p} C_{m,p} m^{-1/p} \left(6 Bms \cdot e^{-cJ} + C 2^{1/p} m \cdot \sigma_s(F, \D_N)_{L_\infty(\Omega,\nu)} \right),
        \end{aligned}
    \end{align}
    where in the first step we use one sided universal discretization property Assumption~\ref{ass:sparsecoding} and in the last step we use \eqref{eq:nn2discrete-ell2}, $\|\cdot\|_2 \leq \|\cdot\|_1$, and \eqref{eq:westimates}.
    
    Substituting Theorem~\ref{thm:sterm-ls} and \eqref{eq:cnn-part} into \eqref{eq:coeff-dec}, we finally conclude
    \begin{align*}
        \|f-f_s\|_{L_p(\Omega,\nu)} 
        &\leq C_7 e^{-C_5 J}  + C_8 \sigma_s(F, \D_N)_{L_\infty(\Omega)},
    \end{align*}
    where
    \begin{align}
        C_7 &:=  12 C_1^{-1/p} C_{m,p} m^{1-1/p} Bs, \label{eq:C7} \\
        C_8 &:=  2^{1+1/p} C C_1^{-1/p} C_{m,p} m^{1-1/p} +C_3 , \label{eq:C8} 
    \end{align}
    where $C = 2 s \sum_{i=0}^{J} (2\mu(\tilde{D}^m_N) s-\mu(\tilde{D}^m_N))^i $ and $C_3$ depends only on $p$.

    Applying Theorem~\ref{thm:sterm-ls} and proceeding analogously, the error bounds can be straightforwardly extended to the $\|\cdot\|_{L_p(\Omega,\nu_\xi)}$ setting.
\end{proof}

\subsection{Proof of Theorem~\ref{thm:holder}} \label{sec:holder}
To establish our main results, we begin by introducing some notation. The modulus of continuity of a continuous function $h$ is defined as
\begin{align*}
    \omega_{h}(r;\Omega) : =  \sup \left\{ |h(x_1)-h(x_2)|: \|x_1-x_2\|_{2} \leq r , x_1,x_2 \in \Omega \right\}.
\end{align*}

Furthermore, we abuse the notation $\D_N$ for a dictionary as a linear transform $\D_N : \RR^N \rightarrow C(\Omega)$, defined as the linear combination of elements in the dictionary with respect to a coefficient vector $w$: $\D_N (w) = \sum_{i=1}^N w_i u_i $. 

Let $\hat{\P} := \P\circ \D_N : \RR^N \rightarrow \RR$ and $A_s := \{w\in\RR^N: \|w\|_0 \leq s \}$. Then we shall prove that $\omega_{\P}$ is almost the same value as $\omega_{\hat{\P}}$.

\begin{lemma}\label{lem:modulus_equiv}
    If $\xi \subset \RR^m$ provides the universal discretization property of $\Sigma_{2s}(\D_N)$, then for any nonlinear functional $\P$,  it holds
    \begin{itemize}
        \item[(1)] $\omega_{\hat{\P}}(r;A_s ) \leq \omega_{\P}\left(\tilde c_1 r ; \Sigma_{s} (\D_N) \right)_p$, where $\tilde c_1$ is explicitly given in the proof.

        \item[(2)] $\omega_{\hat{\P}}(r;A_s ) \geq \omega_{\P}\left(\tilde c_2 r ; \Sigma_{s} (\D_N) \right)_p$, where $\tilde c_2$ is explicitly given in the proof.
    \end{itemize}
\end{lemma}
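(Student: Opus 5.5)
The plan is to reduce both inequalities to a two-sided norm equivalence on $A_s-A_s\subset A_{2s}$, namely
\[
c_- \|w-v\|_2 \le \|\D_N(w)-\D_N(v)\|_{L_p(\Omega,\nu)} \le c_+\|w-v\|_2, \qquad w,v\in A_s,
\]
and then translate it into statements about moduli of continuity. Here $\hat\P(w)-\hat\P(v)=\P(\D_N w)-\P(\D_N v)$, and $\D_N w,\D_N v\in\Sigma_s(\D_N)$ whenever $w,v\in A_s$.

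For (1), let $w,v\in A_s$ with $\|w-v\|_2\le r$. The upper bound gives $\|\D_N w-\D_N v\|_{L_p}\le c_+ r$, so $|\hat\P(w)-\hat\P(v)|\le\omega_\P(c_+r;\Sigma_s(\D_N))_p$. Taking the supremum gives (1) with $\tilde c_1=c_+$.

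For (2), let $f,g\in\Sigma_s(\D_N)$ with $\|f-g\|_{L_p}\le \tilde c_2 r$. Write $f=\D_N w$ and $g=\D_N v$ with $w,v\in A_s$. The lower bound gives $\|w-v\|_2\le c_-^{-1}\tilde c_2 r=r$ once we set $\tilde c_2=c_-$. Hence $|\P(f)-\P(g)|\le\omega_{\hat\P}(r;A_s)$, and taking the supremum yields (2).

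To obtain the constants, we pass through the sample vector using Assumption~\ref{ass:sparsecoding} and Lemma~\ref{lem:mutual}. Since $h:=\D_N(w-v)\in\Sigma_{2s}(\D_N)$, Assumption~\ref{ass:sparsecoding} gives
\[
C_2^{-1/p}\|h|\xi\|_{p,m}\le\|h\|_{L_p}\le C_1^{-1/p}\|h|\xi\|_{p,m}.
\]
Next we use $\|h|\xi\|_{p,m}=m^{-1/p}\|\tilde D_N^m(w-v)\|_p$ and compare the $\ell_p$ and $\ell_2$ norms on $\RR^m$ with factors $m^{(1/p-1/2)_\pm}$. We then write $\tilde D^m_N=\bar D^m_N L^{-1}$ with $L$ the normalizing diagonal matrix in \eqref{eq:L}. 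Lemma~\ref{lem:mutual} applied to the $2s$-sparse vector $L^{-1}(w-v)$ gives
\[
(1-(2s-1)\mu)\|L^{-1}(w-v)\|_2^2\le\|\tilde D_N^m(w-v)\|_2^2\le(1+(2s-1)\mu)\|L^{-1}(w-v)\|_2^2,
\]
and the diagonal entries of $L^{-1}$ lie between $\min_i\|\tilde u_i^m\|_2$ and $\max_i\|\tilde u_i^m\|_2$.

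The main obstacle is the lower constant. The requirement $s<\bar s$ only gives $1-(s-1)\mu>0$, whereas here we need $1-(2s-1)\mu>0$ for $2s$-sparse differences. I would first try to assume a slightly stronger coherence condition, or to shrink $s$ by a factor of two. Alternatively, I would avoid coherence entirely and use the Riesz-type Assumption~\ref{ass:riesz} in the $L_2$ case. As a fallback, finite-dimensionality allows a positive constant $\tilde c_2$ that is explicit in $\mu$, $m$, and $\min_i\|\tilde u_i^m\|_2$, provided the columns of $\tilde D_N^m$ restricted to any $2s$ indices are linearly independent. That condition is guaranteed when $(2s-1)\mu<1$, so I would state $\tilde c_2$ under this condition.
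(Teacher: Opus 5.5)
Your proposal is correct and follows the paper's proof essentially step by step. The chain is the same: universal discretization, then the $\ell_p$ vs.\ $\ell_2$ comparison on $\RR^m$, then normalization by $L$, then Lemma~\ref{lem:mutual} applied to the $2s$-sparse difference $L^{-1}(w-v)$, and finally the supremum translation into moduli of continuity. This yields the same constants, $\tilde c_1=c_+$ and $\tilde c_2=c_-$.

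The obstacle you flag for the lower constant does not actually exist. The condition $s<\bar s=\frac12\bigl(1+1/\mu(\tilde{D}_N^m)\bigr)$ is equivalent to $(2s-1)\mu(\tilde{D}_N^m)<1$, not merely to $1-(s-1)\mu>0$. So the standing assumption already makes
$\tilde c_2=\sqrt{1-(2s-1)\mu(\tilde{D}_N^m)}\big/\bigl(m^{1/p}C_2^{1/p}\|L\|_2\max\{1,m^{1/2-1/p}\}\bigr)$
positive, exactly as the paper uses it. No factor-of-two shrinking of $s$ or appeal to the Riesz condition is needed.
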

\begin{proof}
    Let $c_1 := \sqrt{1+(2s-1) \mu( \tilde{D}_N^m)}$ and $c_2 := \max\{1, m^{1/p-1/2}\}$. For any $g_1 = \D_N(w_1)$ and $g_2 = \D_N(w_2)$ with $\|w_1\|_0\leq s$ and $\|w_2\|_0\leq s$, we can characterize the upper bound of $\|g_1-g_2\|_{L_p(\Omega,\nu)}$ by $\|w_1-w_2\|_2$
    \begin{align*}
        \|g_1-g_2\|_{L_p(\Omega,\nu)} 
        &\leq C_1^{-1/p}m^{-1/p} \|\tilde{D}_N^m (w_1-w_2)\|_p \\
        &\leq c_2 C_1^{-1/p}m^{-1/p} \|\tilde{D}_N^m L L^{-1} (w_1-w_2)\|_2 \\
        &\leq c_1 c_2 C_1^{-1/p}m^{-1/p} \| L^{-1} (w_1-w_2) \|_2 \\
        &\leq c_1 c_2 C_1^{-1/p}m^{-1/p} \|L^{-1}\|_2 \| w_1-w_2 \|_2,
    \end{align*}
    where $L$ is given in \eqref{eq:L} and used to normalize columns of $\tilde{D}_N^m$, in the first step we use Assumption~\ref{ass:sparsecoding} and in the third step we use Lemma~\ref{lem:mutual} and $\mu( \tilde{D}_N^m) = \mu( \tilde{D}_N^m L)$.
    
     This implies that
    \begin{align*}
        \omega_{\hat{\P}}(r;A_s )
        &= \sup \{ |\hat{\P}(w_1)-\hat{\P}(w_2)|: \|w_1-w_2\|_2\leq r, w_1, w_2 \in A_s \} \\
        &\leq \sup \{ |\P(g_1)-\P(g_2)|: \|g_1-g_2\|_{L_p(\Omega,\nu)}\leq c_1c_2 C_1^{-1/p}  m^{-1/p}\|L^{-1}\|_2 r, g_1, g_2 \in \Sigma_s (\D_N)  \} \\
        &= \omega_{\P}\left( c_1c_2C_1^{-1/p}  m^{-1/p}\|L^{-1}\|_2 r ; \Sigma_s (\D_N) \right)_p .
    \end{align*}
    Hence, the desired bound in $(1)$ holds with $\tilde c_1$ defined as
    \begin{align}\label{eq:c1tilde}
        \tilde c_1 = C_1^{-1/p}  m^{-1/p} \|L^{-1}\|_2 \max\{1, m^{1/p-1/2}\} \sqrt{1+(2s-1) \mu( \tilde{D}_N^m)}.
    \end{align}
    
    Denote $c_3 =1/ \sqrt{1-(2s-1)\mu\left( \tilde{D}_N^m \right)}$ and $c_4:= \max\{1, m^{1/2-1/p}\} $. Conversely and similarly, we can also use Assumption~\ref{ass:sparsecoding} and Lemma~\ref{lem:mutual} to show that
    \begin{align*}
        \| w_1-w_2\|_2 
        &\leq \|L\|_2 \|L^{-1}( w_1-w_2)\|_2 \\
        &\leq c_3 \|L\|_2 \|\tilde{D}_N^m L L^{-1} (w_1-w_2)\|_2 \\
        &\leq  c_4 c_3 \|L\|_2 \|\tilde{D}_N^m (w_1-w_2)\|_p \\
        &\leq m^{1/p} C_2^{1/p} c_3 c_4 \|L\|_2 \|g_1-g_2\|_{L_p(\Omega,\mu)}.
    \end{align*}
    This implies
    \begin{align*}
        \omega_{\hat{\P}}(r;A_s )
        &= \sup \{ |\hat{\P}(w_1)-\hat{\P}(w_2)|: \|w_1-w_2\|_2\leq r, w_1, w_2 \in A_s \} \\
        &\geq \sup \{ |\P(g_1)-\P(g_2)|: \|g_1-g_2\|_{L_p(\Omega,\mu)}\leq r/(m^{1/p} C_2^{1/p} c_3 c_4 \|L\|_2), g_1, g_2 \in \Sigma_s (\D_N)  \} \\
        &= \omega_{\P}\left(  r/(m^{1/p} C_2^{1/p} c_3 c_4\|L\|_2) ; \Sigma_s (\D_N) \right)_p .
    \end{align*}
    Thus, the bound stated in $(2)$ also holds with $\tilde c_2$ given by
    \begin{align}\label{eq:c2tilde}
        \tilde c_2 := \frac{\sqrt{1-(2s-1)\mu\left( \tilde{D}_N^m \right)}}{m^{1/p} C_2^{1/p} \|L\|_2 \max\{1, m^{1/2-1/p}\}} .
    \end{align}
    We complete the proof.
\end{proof}

To prove Theorem~\ref{thm:holder} for functionals, we decompose the error into two parts: one arising from a sparse approximation and the other from approximating a continuous nonlinear mapping. Observe that once the sparse representation has been learned via a CNN encoder, the input fed into the DNN component of the functional neural network becomes sparse.
This means that by fully leveraging the sparsity of the input, we anticipate an enhancement in the neural network's complexity. In what follows, we will discuss H\"{o}lder continuous functions and demonstrate how the bound on the second component improves. The subsequent theorem was principally proved in \cite{yarotsky2017error}, and we modify the approach for sparse inputs to enhance our results.

\begin{lemma}\label{thm:yarotsky}
    Let $d, r\in \NN$ and $\beta \in (0,1]$. Let $f \in C^{r,\beta}([-\frac{1}{2},\frac{1}{2}]^d) $ with $\|f\|_{C^{r,\beta}([-\frac{1}{2}, \frac{1}{2}]^d)} \leq 1$. For any $M >0$, there exists a ReLU neural network $\phi$ with depth $O(\ln M)$ and at most $O(M)$ nonzero parameters such that $\| f - \phi\|_{L_\infty \left(A_s\cap [-\frac{1}{2}, \frac{1}{2}]^d \right)} \leq (M/\ln M)^{-(r+\beta)/s}$.
\end{lemma}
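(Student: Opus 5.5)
The plan is to adapt Yarotsky's localized Taylor construction, replacing the full tensor grid on $[-\frac12,\frac12]^d$ by a grid on each $s$-dimensional coordinate face of $A_s\cap[-\frac12,\frac12]^d$. Since $A_s$ is the union over supports $S\subset[d]$ with $|S|=s$ of the coordinate subspaces $\RR^S$, the restriction $f|_{\RR^S}$ is a $C^{r,\beta}$ function of $s$ variables with norm at most $1$. On each face we could apply Yarotsky's theorem in dimension $s$ directly. However, there are $\binom{d}{s}$ faces, so summing separate networks would make the parameter count depend on $d$ combinatorially. The statement hides only constants depending on $d,r,s$ in $O(\cdot)$, so this may be acceptable. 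Still, I would aim for a construction in which the sparsity is used inside a single partition of unity.

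Fix a resolution $n$ and use Yarotsky's piecewise-linear bumps $\psi_m(x)=\prod_{k}\psi(3n(x_k-m_k/n))$, with $m\in\{0,\dots,n\}^d$ shifted to the cube. For $x\in A_s$, every coordinate outside the support equals $0$, so only bumps with $m_k$ in the neighbourhood of $0$ in those coordinates are nonzero. Hence the effective grid points touching $A_s$ are those with at most $s$ coordinates away from the central value, and their number is at most $\binom{d}{s}(n+1)^s\cdot 2^{d}=O(n^s)$. I would define $f_1=\sum_{m\in G_s}\psi_m P_m$, where $G_s$ is this reduced grid and $P_m$ is the degree-$r$ Taylor polynomial of $f$ at $m/n$. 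I would then verify that on $A_s\cap[-\frac12,\frac12]^d$ the functions $\{\psi_m\}_{m\in G_s}$ still form a partition of unity. This holds because the factors in the zero coordinates sum to $1$ using only the neighbouring indices. Consequently $|f-f_1|\lesssim n^{-(r+\beta)}$ there, by the standard Taylor remainder for $C^{r,\beta}$.

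Next I would implement each product $\psi_m(x)x^\alpha$ (with $|\alpha|\le r$) through Yarotsky's approximate multiplication network. Each uses $O(\ln(1/\epsilon))$ depth and parameters, and there are $O(n^s)$ of them, so the total is $O(n^s\ln(1/\epsilon))$ with $\epsilon\asymp n^{-(r+\beta)}$. Setting $M\asymp n^s\ln n$ gives an error of order $n^{-(r+\beta)}\asymp (M/\ln M)^{-(r+\beta)/s}$ and depth $O(\ln M)$. The constant in front can be absorbed by rescaling $M$, since the statement's $O(M)$ allows that.

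The main obstacle I expect is the partition-of-unity and support bookkeeping. Only bumps in $G_s$ are kept, so the approximant must remain correct on $A_s$ while discarded bumps could be nonzero near, but not on, $A_s$. Because the claim is only in $L_\infty(A_s\cap[-\frac12,\frac12]^d)$, this is fine. Even so, I must check that the multiplication errors and the Taylor coefficient bounds $|D^\alpha f(m/n)|\le1$ do not pick up factors growing with $n$. I must also check that the approximate products vanish exactly when $\psi_m=0$, as Yarotsky's multiplier does for a zero input, so that no spurious error leaks in from discarded regions.
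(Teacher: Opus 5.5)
Your proposal is correct and is essentially the paper's own argument: Yarotsky's trapezoidal partition of unity restricted to the at most $2^d\binom{d}{s}(n+1)^s$ bumps that do not vanish identically on $A_s$, local Taylor polynomials giving error $O(n^{-(r+\beta)})$ on $A_s$, and approximate product networks that vanish wherever the bump vanishes, followed by choosing $M\asymp n^s\ln n$. As in the paper, the factors $2^d\binom{d}{s}$ and the number of Taylor terms are treated as constants hidden in $O(\cdot)$.
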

\begin{proof}
Define
\begin{align*}
\begin{aligned}
    h\left(x; \frac{m}{N} \right) 
    :
    =
    \begin{cases}
        1,  & x \in \left[-\frac{1}{2}+\frac{m}{N}-\frac{1}{3N}, -\frac{1}{2}+ \frac{m}{N}+\frac{1}{3N} \right] , \\
        3N\left(x-(-\frac{1}{2}+\frac{m}{N}-\frac{2}{3N})\right),  & x \in \left[-\frac{1}{2}+\frac{m}{N}-\frac{2}{3N}, -\frac{1}{2}+ \frac{m}{N}-\frac{1}{3N} \right] , \\
        3N\left(-x+(-\frac{1}{2}+\frac{m}{N}+\frac{2}{3N})\right),  & x \in \left[-\frac{1}{2}+\frac{m}{N}+\frac{1}{3N}, -\frac{1}{2}+ \frac{m}{N}+\frac{2}{3N} \right] , \\
        0, &x  \notin \left[ -\frac{1}{2}+ \frac{m}{N}-\frac{2}{3N}, -\frac{1}{2}+ \frac{m}{N}+\frac{2}{3N}\right].
    \end{cases}
\end{aligned}
\end{align*}
Summing over all $m \in [N]_0$ where $[N]_0 := \{0,1,\dots,N\}$, we obtain the following partition of unity
\begin{align*}
    \sum_{m=0}^N h\left(x; \frac{m}{N} \right) = 1,\quad \forall x \in \left [-\frac{1}{2},\frac{1}{2} \right].
\end{align*}
Similarly, we have the following extended partition of unity on $[-\frac{1}{2},\frac{1}{2}]^d$
\begin{align*}
    \sum_{ \bm m \in [N]_0^d} h_{ \bm m}( x) = 1, \quad \forall x \in \left[-\frac{1}{2},\frac{1}{2} \right]^d,
\end{align*}
where each $h_{ \bm m}( x)$ has a support with a center point $(-\frac{1}{2}+\frac{m_i}{N})_{i=1}^d$,
\begin{align*}
    h_{\bm m}( x) = \prod_{i=1}^d h\left(x_i; \frac{m_i}{N} \right).
\end{align*}
The above partition of unity has several properties, namely:
\begin{itemize}
    \item[(a)] $\|h_{\bm m}\|_{L_\infty([0,1]^d)} = 1$, \\
    \item[(b)]  $\Supp h_{\bm m}\subset \prod_{i =1}^d \left[-\frac{1}{2}+ \frac{m_i}{N}-\frac{1}{N},-\frac{1}{2}+\frac{m_i}{N}+\frac{1}{N}\right]$.
    \item[(c)] there exists at most $2^d (N+1)^s \binom{d}{s}$ choice of $\bm m \in [N]_0^d$ such that $h_{\bm m}(x) \not\equiv 0$ for any $ x \in A_s$.
\end{itemize}
The claim $\text{(c)}$ can easily be seen from the calculation that: $(i)$ there are $\binom{d}{s}$ elements in $\{ \#\Gamma =s :\Gamma \subset [N]\}$; $(ii)$ fixing a $\Gamma$, the corresponding hyperplane $\{x: x_i=0, i \notin \Gamma\}$ has an intersection with the supports of at most $(N+1)^s$ functions $h_{\bm m}(x)$ with center points $(-\frac{1}{2}+\frac{m_i}{N})_{i=1}^d$; $(iii)$ since for each $x_i$, there are only one $m_i$ such that at most $h(x_i,\frac{m_i}{N})$ and $h(x_i,\frac{m_{i+1}}{N})$ are nonzero (i.e., $x_i$ belongs to their intersection), and hence there are at most $2^d$ functions such that $h_{\bm m}(x)\neq 0$ for each fixed $x\in A_s$.

According to property (c), we derive
\begin{align*}
    \sum_{ \bm m \in \Lambda} h_{ \bm m}(\bm x) = 1, \quad \forall x \in A_s,
\end{align*}
where we denote $\Lambda \subset [N]_0^d$ as the collection of $\bm m$, satisfying $h_{\bm m}(x)  \not\equiv 0$ for any $x \in A_s$. 

Using \cite[Lemma A.8]{petersen2018optimal}, we can find a Taylor polynomial 
\begin{align*}
    P_{\bm m}( x) = \sum_{\|\bm n\|_1 \leq r} \frac{\partial^{\bm n} f( x_{\bm m})}{\bm n!} ( x-x_{\bm m})^{\bm n},
\end{align*}
that approximates $f$ at $ x_{\bm m} = (-1/2+ m_1/N, \dots, -1/2 + m_d/N)$ within the error
\begin{align}\label{eq:localapproximation}
    |f(x) - P_{\bm m}(x)| \leq \frac{d^r}{r!} \|x-x_{\bm m}\|_2^{r+\beta},
\end{align}
where $\bm n ! = \prod_{i=1}^d n_i !$ and $( x- x_{\bm m})^{\bm n} = \prod_{i=1}^d (x_i - ( x_{\bm m})_i)^{n_i}$. 

Then we can define a localized Taylor approximation $f_N$ of $f$:
\begin{align}\label{ch4:local_taylor_expansion}
\begin{aligned}
    f_N ( x )&= \sum_{\bm m\in \Lambda} h_{\bm m}( x) P_{\bm m}( x) \\
    &=\sum_{\bm m\in \Lambda}  \sum_{\|\bm n\|_1\leq r} \frac{\partial^{\bm n} f( x_{\bm m})}{\bm n!} h_{\bm m}( x)( x- x_{\bm m})^{\bm n}.
\end{aligned}
\end{align}
Applying the localization property of $h_{\bm m}$ and the Taylor expansion of $f$, we obtain the approximation error at $x \in A_s$
\begin{align*}
    |f( x)-f_N( x)| & = \left |\sum_{\bm m \in \Lambda} h_{\bm m}(\bm x)  \left(f( x)-P_{\bm m}( x) \right) \right | \\
    & \leq \sum_{\bm m \in \Lambda} h_{\bm m}(\bm x) \left | f( x)-P_{\bm m}( x)  \right | \\
    &\leq 2^d \frac{d^r}{r!} \left( \frac{d}{N} \right)^{r+\beta} ,
\end{align*}
where in the first step we use the definition of $\Lambda$ and the partition of unity and in the last step we use \eqref{eq:localapproximation} and again the fact that there are at most $2^d$ functions $h_{\bm m}(x) \neq 0$. 


Next, we want to construct a neural network $\phi$ that is built to approximate $f_N$ by approximating products.
Let
\begin{align*}
    \phi( x)  = \sum_{\bm m\in \Lambda} \sum_{\|\bm n\|_1 \leq r} a_{\bm m,\bm n}  \phi_{\bm m}( x ),
\end{align*}
where $a_{\bm m, \bm n} := \frac{\partial^{\bm n} f( x_{\bm m})}{\bm n!} $, $|a_{\bm m, \bm n}|  \leq 1$, and $\phi_{\bm m}( x )$ approximates the $d+r-1$ products within $h_{\bm m}( x)( x-  x_{\bm m})^{\bm n}$. In particular, we should have $\phi_{\bm m}( x )=0$ if $h_{\bm m}( x)( x-  x_{\bm m})^{\bm n} = 0$. For a given error tolerance $\delta$, in the proof of \cite[Theorem 1]{yarotsky2017error} it was shown that we can construct such a neural network $\phi_{\bm m}$ to approximate $P_{\bm m}$ with depth and total number of parameters no more than $c(r,d)|\ln\delta|$ for some constant $c(r,d)$.
Hence, we obtain the error bound of using $\phi$ to approximate $f_N$ on $A_s$
\begin{align*}
    \left| f_N( x) - \phi( x) \right| 
    &\leq \sum_{\bm m\in \Lambda} \sum_{\|\bm n\|_1 \leq r} |a_{\bm m,\bm n}| \left| h_{\bm m}( x)( x- x_{\bm m})^{\bm n} -  \phi_{\bm m}( x) \right| \\
    &\leq 2^d d^r \delta,
\end{align*}
where in the last step, we use the property that $x$ belongs to at most $2^d$ support of $h_{\bm m}(x)$.
Let us choose
\begin{align}\label{ch4:eq:number}
    N :=\left \lceil \left( \frac{r!\cdot \varepsilon}{2^{d+1} d^{2r+\beta}}\right)^{-1/(r+\beta)} \right \rceil, \quad \delta := \frac{\varepsilon}{2^{d+1}d^{r}}.
\end{align}
Then we can conclude that for any $x\in A_s$
\begin{align*}
    \left| f( x)-  \phi( x) \right| \leq \left| f( x)-  f_N( x) \right| + \left| f_N( x)-  \phi( x) \right| \leq \varepsilon,
\end{align*}
where $\phi$ has depth $O(|\ln \varepsilon|)$ and $O\left(2^d(N+1)^s \binom{d}{s}  d^r |\ln \varepsilon| \right) = O( \varepsilon^{-s/(r+\beta)}|\ln \varepsilon| )$ nonzero parameters. Setting $M := O( \varepsilon^{-s/(r+\beta)}|\ln \varepsilon| )$, we derive
\begin{align*}
    \begin{aligned}
        M^{-(r+\beta)/s} (\ln M)^{(r+\beta)/s} 
        &\asymp \left( \varepsilon^{-s/(r+\beta)}|\ln \varepsilon| \right)^{-(r+\beta)/s} \left( \ln \left( \varepsilon^{-s/(r+\beta)}|\ln \varepsilon| \right)  \right)^{(r+\beta)/s} \\
        &\asymp \varepsilon .
    \end{aligned}
\end{align*}
The proof is complete.
\end{proof}

Lemma~\ref{thm:yarotsky} demonstrates that when the inputs are sparse, the error bound depends on the sparsity rather than on the input dimension, indicating that sparsity can help mitigate the curse of dimensionality. 

The next lemma demonstrates that the above result can be straightforwardly generalized to arbitrary domains.
\begin{lemma}\label{lem:scale-func}
    Let $d \in \NN$, $B>0$, and $\beta \in (0,1]$. If $f \in C^{0,\beta}([-B,B]^d) $, then $f(2B \cdot) \in C^{0,\beta}([-\frac{1}{2},\frac{1}{2}]^d)$ and $\|f(2B \cdot)\|_{C^{0,\beta}([-\frac{1}{2}, \frac{1}{2}]^d)} \leq (1+ (2B)^\beta ) \|f( \cdot)\|_{C^{0,\beta}([-B, B]^d)} $. 
\end{lemma}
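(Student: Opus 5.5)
Set $g(x) := f(2Bx)$ for $x \in [-\frac12,\frac12]^d$. The map $x \mapsto 2Bx$ is a bijection from $[-\frac12,\frac12]^d$ onto $[-B,B]^d$, so $g$ is well defined and continuous. The plan is to bound the two pieces of the norm $\|g\|_{C^{0,\beta}([-\frac12,\frac12]^d)}$ separately. By the definition of $\|\cdot\|_{C^{r,\beta}}$ with $r=0$, this norm is the sup norm plus $[\,g\,]_{C^{0,\beta}}$, which is itself a sup norm plus a H\"older seminorm. We then compare each piece with the corresponding piece for $f$ on $[-B,B]^d$.

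\emph{Sup-norm parts.} Since the change of variables is a bijection, $\|g\|_{\C([-\frac12,\frac12]^d)} = \|f\|_{\C([-B,B]^d)}$. The sup-norm terms therefore carry over unchanged, contributing a factor $1$.

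\emph{H\"older seminorm.} For $x \neq y$ in $[-\frac12,\frac12]^d$, put $u = 2Bx$ and $v = 2By$, so that $\|x-y\|_2 = \|u-v\|_2/(2B)$. Then
\[
\frac{|g(x)-g(y)|}{\|x-y\|_2^\beta} = (2B)^\beta\,\frac{|f(u)-f(v)|}{\|u-v\|_2^\beta},
\]
and taking suprema shows that the seminorm of $g$ equals $(2B)^\beta$ times that of $f$.

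\emph{Combining.} Adding the pieces, every component of $\|g\|_{C^{0,\beta}}$ is bounded by $\max\{1,(2B)^\beta\}$ times the corresponding component of $\|f\|_{C^{0,\beta}([-B,B]^d)}$. Since all terms are nonnegative, this gives
\[
\|g\|_{C^{0,\beta}([-\frac12,\frac12]^d)} \le \max\{1,(2B)^\beta\}\,\|f\|_{C^{0,\beta}([-B,B]^d)} \le (1+(2B)^\beta)\,\|f\|_{C^{0,\beta}([-B,B]^d)}.
\]
In particular $g \in C^{0,\beta}([-\frac12,\frac12]^d)$. The only point needing care is keeping track of the paper's somewhat redundant norm definition, in which the sup norm appears twice; no real obstacle arises, because every term scales by either $1$ or $(2B)^\beta$.
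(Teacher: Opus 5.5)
Your proposal is correct and follows the same route as the paper: the sup-norm terms are unchanged under the bijective rescaling $x\mapsto 2Bx$, and the H\"older seminorm scales by exactly $(2B)^\beta$. Your intermediate bound with $\max\{1,(2B)^\beta\}$, which accounts for the sup norm appearing twice in the paper's definition, makes the final step slightly more explicit than the paper's ``by the definition''.
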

\begin{proof}
    Denote $\tilde{f}(\cdot) = f(2B \cdot)$. Obviously $\|f\|_{\C([-B,B]^d)} = \|\tilde f\|_{\C([-\frac{1}{2},\frac{1}{2}]^d)}$. 
    Notice that for any $x,y \in [-\frac{1}{2},\frac{1}{2}]^d$,
    \begin{align*}
        \sup_{x \ne y} \frac{|\tilde f( x) - \tilde f( y)|}{\|x - y\|_2^{\beta}} 
        = (2B)^\beta \sup_{x \ne y} \frac{| f(2B x) - f(2B y)|}{\|2B x - 2B y\|_2^{\beta}} = (2B)^\beta \sup_{u \ne v} \frac{| f( u) - f(v )|}{\| u -  v\|_2^{\beta}},
    \end{align*}
    where $u = 2Bx \in [-B,B]^d$ and $v = 2By\in [-B,B]^d$.
    Hence, by the definition of $\|\cdot\|_{C^{0,\beta}}$, we derive $\|f(2B \cdot)\|_{C^{0,\beta}([-\frac{1}{2}, \frac{1}{2}]^d)} \leq (1+ (2B)^\beta ) \|f( \cdot)\|_{C^{0,\beta}([-B, B]^d)} $.
\end{proof}

We are now prepared to prove Theorem~\ref{thm:holder}.

    
    

\begin{theorem}\label{thm:holder-full}
Let $m, s, K, M, N \in \mathbb{N}$, and let $B > 0$ and $\beta \in (0,1]$. 
Under Assumption~\ref{ass:sparsecoding}, for any nonlinear functional $\mathcal{P} \in C^{0,\beta}(\C(\Omega))_p$, 
there exists a functional neural network
\[
\Phi \in \nn\big(M \log(m+N),\, M(m+N)^2,\, \ln K,\, K\big)
\]
such that
\begin{align*}
    \sup_{f \in F}
    \left|
        \mathcal{P}(f) - \Phi(\widetilde{f}^m)
    \right|
    \le 
    \inf_{s <\bar{s}}
    \left\{
        C_{7}^{\beta} e^{-C_{5} \beta M}
        + C_{8}^{\beta} \big(\sigma_s(F,\mathcal{D}_N)_{L_\infty(\Omega,\nu)}\big)^{\beta}
        + C_9(K/\ln K)^{-\beta/s}
    \right\},
\end{align*}
where $\bar s$ is given in \eqref{notation:sbar} and the constants $C_5,C_7,C_8, C_9$ depend on $p$, $m$, $B$, $s$, $N$, and $\tilde{D}^m_N$ as given in \eqref{eq:C5}, \eqref{eq:C7}, \eqref{eq:C8}, and \eqref{eq:C9} 
in Appendix~\ref{subsec:cnn-rcvy}.

Moreover, the above error bound is valid as well for the $\|\cdot\|_{L_p(\Omega,\nu_\xi)}$ norm.
\end{theorem}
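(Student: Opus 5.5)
We fix an admissible sparsity level $s<\bar s$ and build $\Phi=\psi\circ\Phi_{\mathrm{enc}}$ as a composition. The encoder $\Phi_{\mathrm{enc}}$ is the CNN of Theorem~\ref{thm:nn-estimator-full} with $J=M$. It has $O(M\log(m+N))$ layers and $O(M(m+N)^2)$ parameters. Its output $w:=\Phi_{\mathrm{enc}}(\tilde f^m)$ satisfies $\|w\|_0\le s$, and $f_s:=\D_N(w)$ obeys $\|f-f_s\|_{L_p(\Omega,\nu)}\le C_7e^{-C_5M}+C_8\sigma_s(F,\D_N)_{L_\infty(\Omega,\nu)}$. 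The decoder $\psi$ is a ReLU network approximating $\hat\P=\P\circ\D_N$ on the sparse set $A_s$. The error then splits as
\[
|\P(f)-\psi(w)|\le|\P(f)-\P(f_s)|+|\hat\P(w)-\psi(w)|.
\]
For the first term we use the H\"older condition $\omega_\P(r)_p\le r^\beta$ together with the subadditivity $(a+b)^\beta\le a^\beta+b^\beta$. This gives the bound $C_7^\beta e^{-C_5\beta M}+C_8^\beta\sigma_s^\beta$. One caveat: $f_s$ may leave the ball $F$. Since the class $C^{0,\beta}(\C(\Omega))_p$ is defined with the modulus taken over all of $\C(\Omega)$, this causes no problem.

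For the second term we need three ingredients: boundedness of the relevant domain of $\hat\P$, H\"older continuity of $\hat\P$ there, and a sparse-input approximation result. Boundedness follows as in \eqref{eq:w_upper} and \eqref{eq:nn2discrete-ell2}. The vector $\bar w^s_{p,m}$ is bounded by $6Bm$, and $\phi(\tilde f^m)$ differs from it by at most a constant in $\ell_1$. After applying the diagonal scaling $L$, every encoder output lies in a cube $[-B',B']^N\cap A_s$, where $B'$ depends only on $m,B,s,N,\tilde D^m_N$.

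For continuity we apply Lemma~\ref{lem:modulus_equiv}(1). It gives $\omega_{\hat\P}(r;A_s)\le(\tilde c_1r)^\beta$, so $\hat\P$ is $\beta$-H\"older on $A_s\cap[-B',B']^N$ with constant $\tilde c_1^\beta$. Its sup-norm on that set is finite, since $\P$ is bounded on the bounded set of images $\D_N(w)$; for instance one can subtract the constant $\hat\P(0)$. We then rescale with Lemma~\ref{lem:scale-func} to the cube $[-\frac12,\frac12]^N$ and normalize the $C^{0,\beta}$ norm to $1$.

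Next we apply Lemma~\ref{thm:yarotsky} with $r=0$, $d=N$ and parameter budget $K$. This yields a network of depth $O(\ln K)$ with $O(K)$ parameters and error $(K/\ln K)^{-\beta/s}$ on $A_s\cap[-\frac12,\frac12]^N$. Multiplying back by the normalization constant defines $C_9$. The rescaling $x\mapsto x/(2B')$ and the added constant are absorbed into the first and last affine layers. The composed network therefore lies in $\nn(M\log(m+N),M(m+N)^2,\ln K,K)$. Finally, since $s<\bar s$ was arbitrary, we take the infimum over $s$.

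The main obstacle is the second step: making the decoder result honest on a domain that really is $s$-sparse and bounded, with explicit constants. We need $w\in A_s$ exactly, which holds because the support inclusion of Lemma~\ref{lem:cnnsparse} is preserved by the diagonal $L$. We also need the Yarotsky-type construction to depend on $N$ only through constants, which here means through $C_9$, via the $2^N\binom{N}{s}$ factor. The exponent $s$, not $N$, is the only place dimension enters the rate, so this is where the claimed bound $(K/\ln K)^{-\beta/s}$ comes from.
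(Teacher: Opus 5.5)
Your proposal is correct and follows the paper's proof essentially step by step. The steps are: the encoder from Theorem~\ref{thm:nn-estimator-full} with $J=M$; the H\"older bound $|\P(f)-\P(f_s)|\le C_7^\beta e^{-C_5\beta M}+C_8^\beta\sigma_s^\beta$; the box bound $\bar B=C_4+C_6B+6Bm\|L\|_\infty$ on the encoder output; Lemma~\ref{lem:modulus_equiv}(1); rescaling via Lemma~\ref{lem:scale-func}; and Lemma~\ref{thm:yarotsky} on sparse inputs, composed as $\psi(\phi(\cdot)/(2\bar B))$. The differences are minor: the paper absorbs the $2^N\binom{N}{s}$ factor into the $O(K)$ parameter count rather than into $C_9$, and it handles the closing $L_p(\Omega,\nu_\xi)$ clause, which you omit, in one line by rerunning the argument with Theorem~\ref{thm:sterm-ls}.
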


\begin{proof}[Proof of Theorem~\ref{thm:holder-full} and Theorem~\ref{thm:holder}]
    Theorem~\ref{thm:nn-estimator} (Theorem~\ref{thm:nn-estimator-full}) implies that there exists an estimator $f_s:= \sum_{i=1}^N \phi(\tilde{f}^m)_i u_i $ where $\phi(\tilde{f}^m)$ is the output of a CNN $\phi$ such that for any $f \in F$
    \begin{align}\label{eq:ftofs}
    \begin{aligned}
        |\P(f) - \P(f_s)| 
        &\leq \omega_{\P} \left( C_7  e^{- C_5 M} + C_8 \sigma_s(F, \D_N)_{L_\infty(\Omega,\nu)}  \right)_p  \\
        &\leq C_7^\beta  e^{- C_5 \beta M} + C_8^\beta \left( \sigma_s(F, \D_N)_{L_\infty(\Omega,\nu)} \right)^\beta,
    \end{aligned}
    \end{align}
    where we choose the kernel size of $\phi$ as $k=2$ and hence $\phi \in \nn_{\operatorname{CNN}}(M\log(m+N), M(m+N)^2)$. In addition, $\|\phi(\tilde{f}^m)\|_0\leq s$.

    
    Notice that $\sigma_s(f, \D_N)_{L_\infty(\Omega,\nu)} \leq \|f\|_{L_\infty(\Omega,\nu)} \leq B$ for any $f \in F$ and according to \eqref{eq:w_upper}, $\|L^{-1}w_{p,m}^s\|_\infty \leq 6Bm$ where $L $ is given in \eqref{eq:L}. Using Theorem~\ref{thm:nn-estimator}, we can derive an upper bound of $\phi(\tilde{f}^m)$
    \begin{align}
        \begin{aligned}
            \|\phi(\tilde{f}^m)\|_\infty 
            &\leq \|\phi(\tilde{f}^m) - w^s_{p,m}\|_\infty + \| w^s_{p,m} \|_\infty \\
            &\leq \|\phi(\tilde{f}^m) - w^s_{p,m}\|_p + \|L\|_\infty \| L^{-1} w^s_{p,m} \|_\infty \\
            &\leq C_4 + C_6 B + 6Bm \|L\|_\infty.
        \end{aligned}
    \end{align}
    Denote 
    \begin{align*}
        \bar{B} := C_4 + C_6 B + 6Bm \|L\|_\infty.
    \end{align*}
    Then $\phi(\tilde{f}^m) \in [-\bar B, \bar B]^N$.
    Since $f_s = \D_N \left(\phi(\tilde{f}^m) \right)$, $\P(f_s)$ is equivalent to
    \begin{align}\label{eq:p-phat}
        \hat{\P}\left(\phi(\tilde{f}^m) \right) :=  \P\circ \D_N\left(\phi(\tilde{f}^m) \right)
    \end{align}
    where $\phi(\tilde{f}^m) \in A_s$. Notice that Lemma~\ref{lem:modulus_equiv} implies
    \begin{align}\label{eq:fstonn}
    \begin{aligned}
        \omega_{\hat{\P}}(r;A_s ) \leq \omega_{\P}\left(\tilde c_1 r ; \Sigma_{s} (\D_N) \right)_p \leq \tilde c_1^\beta r^\beta.
    \end{aligned}
    \end{align}
    Let $C_{\P} := \sup_{w \in [-\bar B, \bar B]^N } |\hat\P(w)| $.
    Then $\| \hat{\P} \|_{C^{0,\beta}([-\bar B,\bar B]^N)} \leq \tilde c_1^\beta + C_{\P}$. Lemma~\ref{lem:scale-func} implies that
    $\| \hat{\P} (2\bar B \cdot ) \|_{C^{0,\beta}([-\frac{1}{2},\frac{1}{2}]^N)} \leq (\tilde c_1^\beta + C_{\P}) (1+2^\beta \bar B^\beta) $. Then we can utilize Lemma~\ref{thm:yarotsky} to construct a neural network $\psi$ with depth $O(\ln K)$ and at most $O(K)$ nonzero parameters for approximating $\hat{\P}(2 \bar B\cdot) $
    \begin{align}\label{eq:P-psi}
        \| \hat{\P} (2 \bar B \cdot ) - \psi(\cdot) \|_{C^{0,\beta}(A_s \cap [-\frac{1}{2},\frac{1}{2}]^N)} \leq (\tilde c_1^\beta + C_{\P}) (1+2^\beta \bar B^\beta)   (K/\ln K)^{-\beta/s}.
    \end{align}
    Combining \eqref{eq:p-phat} and \eqref{eq:P-psi}, we derive the error bound for approximating $\P(f_s)$
    \begin{align}\label{eq:fstonn}
        \begin{aligned}
            \left|\P(f_s) - \psi\left( \frac{1}{2\bar B} \phi(\tilde{f}^m)\right)\right| 
            &=  \left|\hat{\P}\left(2\bar B \left( \frac{1}{2\bar B}\phi(\tilde{f}^m) \right) \right) - \psi\left( \frac{1}{2\bar B} \phi(\tilde{f}^m)\right)\right| \\
            &\leq (\tilde c_1^\beta + C_{\P}) (1+2^\beta \bar B^\beta)   (K/\ln K)^{-\beta/s}.
        \end{aligned}
    \end{align}

    Let $\Phi:= \psi\left(\frac{1}{2\bar B}\phi(\cdot)\right)$. Combing \eqref{eq:ftofs} and \eqref{eq:fstonn}, we derive
    \begin{align*}
        |\P(f) - \Phi(\tilde{f}^m)| 
        &\leq |\P(f) - \P(f_s)| + |\P(f_s) - \Phi(\tilde f^m)|  \\
        &\leq 
        C_7^\beta  e^{- C_5 \beta M} + C_8^\beta \left( \sigma_s(F, \D_N)_{L_\infty(\Omega,\nu)} \right)^\beta + (\tilde c_1^\beta + C_{\P}) (1+2^\beta \bar B^\beta)   \left(\frac{K}{\ln K} \right)^{-\frac{\beta}{s}},
    \end{align*}
    and $\Phi \in \nn(M\log(m+N), M(m+N)^2,\ln K,K)$.
    We conclude the approximation error by introducing the notation
    \begin{align}\label{eq:C9}
        C_9 := (\tilde c_1^\beta + C_{\P}) (1+2^\beta \bar B^\beta),
    \end{align}
    where
    \begin{align*}
        \bar{B} &= C_4 + C_6 B + 6Bm \|L\|_\infty, \\
        \tilde c_1 &= C_1^{-1/p}  m^{-1/p} \|L^{-1}\|_2 \max\{1, m^{1/p-1/2}\} \sqrt{1+(2s-1) \mu( \tilde{D}_N^m)}, \\
        C_{\P} &= \sup_{w \in [-\bar B, \bar B]^N } |\hat\P(w)| .
    \end{align*}

    Applying Theorem~\ref{thm:sterm-ls} and proceeding analogously, the error bounds can be straightforwardly extended to the $\|\cdot\|_{L_p(\Omega,\nu_\xi)}$ setting.
\end{proof}

\section{Proofs of Section~\ref{subsec:sampling} and  Section~\ref{sec:examples}}\label{sec:pf_6}

\subsection{Proof of Lemma~\ref{lem:sparsity_estimation}}

To obtain an explicit error bound, one needs to estimate $\bar{s}$ and verify Assumption~\ref{ass:sparsecoding}. For a general dictionary, this is quite challenging. However, for orthonormal systems, the following lemma yields a detailed estimate. Beyond Lemma~\ref{lem:sparsity_estimation}, we also derive estimates for several additional constants that will be useful.

\begin{lemma}\label{lem:sparsity_estimation-full}
Assume that Assumptions \ref{ass:bndD} and \ref{ass:ortho} hold.
Let $\xi$ be a set of 
independent and identically distributed random points on $\Omega$ with distribution $\nu$.  
Then, with probability at least $1-\varepsilon$, the following statements hold whenever $m > \frac{64}{3\gamma} \log\left(2N^2/\varepsilon\right)$:
\begin{enumerate}
    \item Assumption \ref{ass:sparsecoding} holds with $C_1=\frac{1}{4}$ and $C_2=\frac{9}{4}$.
    \item The mutual coherence is bounded by \[\mu(\tilde{D}_N^{\,m}) 
\le
8\sqrt{\frac{ \log(2N^2/\varepsilon)}{3\gamma m}}.\]
\item The sparsity can be chosen as \[ s
=  \left\lfloor\frac{1}{2} \left( 1+ \frac{1}{16} \sqrt{\frac{3\gamma m}{\log(2N^2/\varepsilon)}} \right) \right\rfloor 
\le
\frac{1}{2}\left( 1 + \frac{1}{2 \mu(\tilde{D}^m_N)} \right),\]
where $\bar s$ is defined in \eqref{notation:sbar}.
\item The term $\sum_{t=1}^m \left| u_i(\xi_t)\right|^2$ is bounded by \[ \frac{1}{2} \gamma m \leq \sum_{t=1}^m \left| u_i(\xi_t)\right|^2 \leq \frac{3}{2}\gamma m .\]
\end{enumerate}

\end{lemma}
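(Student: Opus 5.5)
The plan is to apply Hoeffding's (or Bernstein's) inequality to the empirical Gram matrix
\[
G := \frac{1}{m}\,(\tilde D_N^m)^\top \tilde D_N^m,\qquad G_{ij} = \frac1m\sum_{t=1}^m u_i(\xi_t)u_j(\xi_t),
\]
entrywise, and then take a union bound over all $N^2$ pairs $(i,j)$. By Assumption~\ref{ass:ortho}, $\mathbb E\, u_i(\xi_t)u_j(\xi_t)=\gamma\delta_{ij}$. By Assumption~\ref{ass:bndD}, each summand lies in $[-1,1]$ and has variance at most $\mathbb E|u_i u_j|^2\le \mathbb E|u_i|^2=\gamma$.

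I will use Bernstein's inequality, since the variance proxy $\gamma$ is what produces the $\gamma$ in the denominator. It gives
\[
\PP\bigl(|G_{ij}-\gamma\delta_{ij}|>\tau\bigr)\le 2\exp\!\Bigl(-\frac{m\tau^2}{2(\gamma+\tau/3)}\Bigr).
\]
Set $\eta:=\log(2N^2/\varepsilon)$ and $\tau := 4\gamma\sqrt{\eta/(3\gamma m)}$, up to adjusting constants. The hypothesis $m>\frac{64}{3\gamma}\eta$ ensures $\tau\le\gamma/2$, so the denominator is at most $\tfrac{8}{3}\gamma$ and each failure probability is at most $\varepsilon/N^2$. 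A union bound then gives, with probability at least $1-\varepsilon$, that $|G_{ij}-\gamma\delta_{ij}|\le\tau$ for all $i,j$.

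On this event, item~4 is immediate: $\sum_t|u_i(\xi_t)|^2 = mG_{ii}\in[\gamma m/2,\,3\gamma m/2]$. For item~2, the normalized off-diagonal entries satisfy
\[
\frac{|G_{ij}|}{\sqrt{G_{ii}G_{jj}}}\le \frac{\tau}{\gamma/2}=\frac{2\tau}{\gamma}\lesssim\sqrt{\frac{\eta}{\gamma m}},
\]
and I will tune the constants in $\tau$ to get exactly $8\sqrt{\eta/(3\gamma m)}$. Item~3 is then arithmetic: $\frac{1}{16}\sqrt{3\gamma m/\eta}\le \frac{1}{2\mu}$ because $\mu\le 8\sqrt{\eta/(3\gamma m)}$. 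Hence the stated $s$ is at most $\tfrac12(1+\tfrac1{2\mu})\le\bar s$, and the floor keeps it an integer.

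For item~1, the universal discretization, I take $p=2$ (the only case needed later) and fix $g=\sum_i w_iu_i\in\Sigma_{2s}(\D_N)$. Then $\|g\|_{L_2}^2=\gamma\|w\|_2^2$ and $\frac1m\sum_t|g(\xi_t)|^2=w^\top G w$. Writing $G=\gamma I+E$ with $|E_{ij}|\le\tau$, and using that $w$ is supported on at most $2s$ indices,
\[
|w^\top E w|\le \tau\|w\|_1^2\le 2s\tau\|w\|_2^2 .
\]
With $s\lesssim\sqrt{\gamma m/\eta}/32$, we get $2s\tau\le \tfrac34\gamma$, which yields $C_1=\tfrac14$ and $C_2=\tfrac74\le\tfrac94$. (An alternative route is to use the diagonal and off-diagonal bounds separately, which gives $(1\pm\tfrac12)(1\pm(2s-1)\mu)$-type constants whose product lies in $[\tfrac14,\tfrac94]$; this is presumably where $\tfrac94=(\tfrac32)^2$ comes from.) For general $1\le p\le2$ I would either restrict the statement to $p=2$ or compare empirical norms via Hölder, but the later corollaries only use $p=2$.

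The main obstacle is bookkeeping the constants so that a single choice of $\tau$ simultaneously gives the stated coherence constant $8/\sqrt3$, the sparsity bound with factor $1/16$, and $C_1=\tfrac14$. In particular I must check that the sparse-quadratic-form bound with $2s$ nonzeros stays below $\tfrac34\gamma$ once $s$ is taken as in item~3. If Bernstein's inequality with my $\tau$ falls short by constants, I will use the slack in the floor and the factor $2$ in $\frac{1}{2\mu}$ to absorb the discrepancy.
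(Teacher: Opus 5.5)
Your proposal is correct and follows essentially the paper's argument. Both use entrywise concentration of the empirical Gram matrix with the same $\delta=4\sqrt{\log(2N^2/\varepsilon)/(3\gamma m)}$ (your $\tau=\gamma\delta$), a union bound over pairs, the coherence bound $\mu\le \delta/(1-\delta)\le 2\delta$, and a Gershgorin-type bound for $2s$-sparse vectors; your check $2s\tau\le(\delta+\tfrac14)\gamma<\tfrac34\gamma$ does go through. The differences are minor: the paper gets the entrywise bounds from the matrix Chernoff inequality on $2\times 2$ submatrices (Foucart--Rauhut, Thm.~12.12) rather than scalar Bernstein, and reaches $C_1=\tfrac14$, $C_2=\tfrac94$ exactly via your ``alternative route'' (Lemma~\ref{lem:mutual} on the column-normalized $\tilde D_N^m L$ with $(2s-1)\mu\le\tfrac12$, then removing $L$ using item~4), whereas your direct quadratic-form bound gives the slightly sharper $C_2=\tfrac74$, and, like the paper, you establish item~1 only for $p=2$.
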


\begin{proof}[Proof of Lemma~\ref{lem:sparsity_estimation-full} and Lemma~\ref{lem:sparsity_estimation}]
    Let $\mathbb{P}(\xi \in A) := \nu(A)$ for any $A \subset \Omega$.
    We define $g_i(x) := u_i(x)/\sqrt{\gamma}$.
    Then it is easy to see that $\langle g_i, g_j \rangle_{L_2(\Omega)} = \delta_{ij}$ and $\|g\|_{C(\Omega)} \leq 1/\sqrt{\gamma}$. Define $\tilde{G}_\Lambda$ as the matrix that collects columns of $\tilde{G}^m_N := (\tilde{g}_1^m, \dots, \tilde{g}_N^m)$ with index set $\Lambda \subset [N]$ and $|\Lambda|=\lambda$. Then $\frac{1}{m} \tilde{G}_\Lambda^\top \tilde{G}_\Lambda$ is very close to an identity matrix \cite[Theorem 12.12]{Foucart2013AMI} with high probability 
    \begin{align*}
        \PP \left( \| \frac{1}{m} \tilde{G}_\Lambda^\top \tilde{G}_\Lambda - I \|_2 \geq \delta \right) \leq  2\lambda \cdot \exp\left\{-\frac{3 m \gamma \delta^2}{8\lambda}\right\}. 
    \end{align*}
    Denote $\|A\|_{\max} := \max_{ij}|A_{ij}|$ and let $\lambda = 2$. Obviously $\|A\|_{\max} \leq \|A\|_2$ . Hence,
    \begin{align*}
        \PP \left( \| \frac{1}{m} \tilde{G}_\Lambda^\top \tilde{G}_\Lambda - I \|_{\max} \geq \delta \right) \leq 4 \exp\left\{-\frac{3 m \gamma \delta^2}{16}\right\}. 
    \end{align*}
    Taking the union over all these $|\Lambda|=2$, we get
    \begin{align*}
        &\PP \left( \cup_{|\Lambda|=2,\Lambda\subset[N]}  \left\{ \| \frac{1}{m} \tilde{G}_\Lambda^\top \tilde{G}_\Lambda - I \|_{\max} \geq \delta \right\} \right) \\
        &\leq \sum_{|\Lambda|=2,\Lambda\subset[N]} 4\exp\left\{-\frac{3 m \gamma \delta^2}{16}\right\} \leq 2N^2 \exp\left\{-\frac{3 m \gamma \delta^2}{16}\right\}. 
    \end{align*}
    This implies
    \begin{align}\label{eq:largerpb}
        \PP \left( \cap_{|\Lambda|=2,\Lambda\subset[N]}  \left\{ \| \frac{1}{m} \tilde{G}_\Lambda^\top \tilde{G}_\Lambda - I \|_{\max} \leq \delta \right\} \right) \geq 1- 2N^2 \exp\left\{-\frac{3 m \gamma \delta^2}{16}\right\}. 
    \end{align}
    Notice that if for any $|\Lambda|=2$, the following inequality holds
    \begin{align*}
        \| \frac{1}{m} \tilde{G}_\Lambda^\top \tilde{G}_\Lambda - I \|_{\max} \leq \delta, 
    \end{align*}
    then we can obtain
    \begin{align}\label{eq:boundij}
    \begin{gathered}
        \left| \sum_{t=1}^m g_i(\xi_t) g_j(\xi_t) \right| \leq m \delta, \quad \forall i\neq j \\
         1-\delta \leq \frac{1}{m} \sum_{t=1}^m \left| g_i(\xi_t)\right|^2 \leq 1+\delta, \quad \forall i\in [N].
    \end{gathered}
    \end{align}
    Then following these bounds, we derive the upper bound
    \begin{align*}
        \begin{aligned}
            \frac{\left| \sum_{t=1}^m u_i(\xi_t) u_j(\xi_t) \right|}{   \sqrt{\sum_{t=1}^m \left| u_i(\xi_t)\right|^2 \sum_{t=1}^m \left| u_j(\xi_t)\right|^2}} = \frac{\left| \sum_{t=1}^m g_i(\xi_t) g_j(\xi_t) \right|}{   \sqrt{\sum_{t=1}^m \left| g_i(\xi_t)\right|^2 \sum_{t=1}^m \left| g_j(\xi_t)\right|^2}} \leq \frac{\delta}{1-\delta} \leq 2\delta,
        \end{aligned}
    \end{align*}
    where we take $\delta < \frac{1}{2}$ in order to obtain a nontrivial bound. This implies
    \begin{align}\label{eq:mcb}
        \mu(\tilde{D}^m_N) \leq 2 \delta.
    \end{align}
    We choose $c$ such that
    \begin{align*}
        s :=  \frac{1}{2}\left( 1 + \frac{1}{c \delta} \right) = \left\lfloor \frac{1}{2}\left( 1 + \frac{1}{4 \delta} \right) \right\rfloor .
    \end{align*}
    It is obvious that $c \geq 4$. 
    Hence, the following two inequalities hold
    \begin{align}\label{eq:smu}
    \begin{gathered}
        s \leq \frac{1}{2}\left( 1 + \frac{1}{ 2\mu(\tilde{D}^m_N)} \right)  < \frac{1}{2}\left( 1 + \frac{1}{ \mu(\tilde{D}^m_N)} \right),\\
        (2s-1)\mu(\tilde{D}^m_N) \leq (2s-1) 2 \delta =  \frac{2}{c} \leq \frac{1}{2} .
    \end{gathered}
    \end{align}
    Observe that with $L$ defined in \eqref{eq:L}, the matrix $\tilde{D}^m_N L$ has columns that are normalized. Applying Lemma~\ref{lem:mutual} with \eqref{eq:smu} to $\tilde{D}^m_N L$, we get
    \begin{align*}
        \frac{1}{2}\| w\|_2^2 \leq \| \tilde{D}^m_N L w \|_2^2 \leq  \frac{3}{2}  \| w\|_2^2 ,\quad \forall \|w\|_0 \leq 2s.
    \end{align*}
    This is equivalent to 
    \begin{align*}
        \frac{1}{2}\| L^{-1} w\|_2^2 \leq \| \tilde{D}^m_N  w \|_2^2 \leq  \frac{3}{2}  \| L^{-1} w\|_2^2 ,\quad \forall \| w\|_0 \leq 2s.
    \end{align*}
    Using \eqref{eq:boundij}, we can further remove $L^{-1}$
    \begin{align}\label{eq:sampling1}
        \frac{\gamma m}{4}\|  w\|_2^2 \leq \frac{1}{2}\| L^{-1} w\|_2^2 \leq \| \tilde{D}^m_N  w \|_2^2 \leq  \frac{3}{2}  \| L^{-1} w\|_2^2 \leq \frac{9\gamma m}{4}  \|  w\|_2^2 ,\quad \forall \| w\|_0 \leq 2s.
    \end{align}
    since $\delta < \frac{1}{2}$ and $g_i(x) := u_i(x)/\sqrt{\gamma}$.
    Define $f(x)=\sum_{i=1}^N w_i u_i(x)$. Notice that $\tilde{f}^m = \tilde{D}^m_N w$ and $ \gamma \|w\|_2^2 = \|f\|^2_{L_2(\Omega,\nu)}$. The property \eqref{eq:sampling1} is equivalent to
    \begin{align}\label{eq:sampling2}
			\frac{1}{4} \|f\|_{L_2(\Omega, \nu)}^2 \leq \frac{1}{m} \sum_{j=1}^{m} |f(\xi_j)|^2 \leq \frac{9}{4} \|f\|_{L_2(\Omega, \nu)}^2, \quad \forall f \in \Sigma_{2s}(\D_N).
    \end{align}
    Since 
    \begin{align*}
        \cap_{|\Lambda|=2,\Lambda\subset[N]}  \left\{ \| \frac{1}{m} \tilde{G}_\Lambda^\top \tilde{G}_\Lambda - I \|_{\max} \leq \delta \right\}
    \end{align*}
    implies the properties \eqref{eq:boundij}, \eqref{eq:mcb}, \eqref{eq:sampling2}, and \eqref{eq:smu}, by using \eqref{eq:largerpb}, we have that the following properties hold
    \begin{align*}
        \begin{gathered}
            \mu(\tilde{D}^m_N) \leq 2\delta, \\
            s =  \frac{1}{2}\left( 1 + \frac{1}{c \delta} \right) \leq \frac{1}{2}\left( 1 + \frac{1}{2 \mu(\tilde{D}^m_N)} \right),\\
            \frac{1}{4} \|f\|_{L_2(\Omega, \nu)}^2 \leq \frac{1}{m} \sum_{j=1}^{m} |f(\xi_j)|^2 \leq \frac{9}{4} \|f\|_{L_2(\Omega, \nu)}^2, \quad \forall f \in \Sigma_{2s}(\D_N),
        \end{gathered}
    \end{align*}
    with probability at least
    \begin{align*}
        1- 2N^2 \exp\left\{-\frac{3 m \gamma \delta^2}{16}\right\}.
    \end{align*}
    Or equivalently, by setting
    \begin{align*}
        \delta := \sqrt{\frac{16 \log(2N^2/\varepsilon)}{3\gamma m}} < \frac{1}{2},
    \end{align*}
    i.e., $m > \frac{64}{3 \gamma} \log \!\left( \frac{2N^2}{\varepsilon} \right)$, we find that the following properties hold with probability at least $1-\varepsilon$
    \begin{align}\label{eq:s_est}
        \begin{gathered}
            \mu(\tilde{D}^m_N) \leq 2 \sqrt{\frac{16 \log(2N^2/\varepsilon)}{3\gamma m}}, \\
            s = \left\lfloor\frac{1}{2} \left( 1+\frac{1}{16}\sqrt{\frac{3\gamma m}{\log(2N^2/\varepsilon)}} \right) \right\rfloor=  C \sqrt{\frac{\gamma m}{ \log(N^2/\varepsilon)}} \leq \frac{1}{2}\left( 1 + \frac{1}{2 \mu(\tilde{D}^m_N)} \right),\\
            \frac{1}{4} \|f\|_{L_2(\Omega, \nu)}^2 \leq \frac{1}{m} \sum_{j=1}^{m} |f(\xi_j)|^2 \leq \frac{9}{4} \|f\|_{L_2(\Omega, \nu)}^2, \quad \forall f \in \Sigma_{2s}(\D_N), \\
            \frac{1}{2} \gamma m \leq \sum_{t=1}^m \left| u_i(\xi_t)\right|^2 \leq \frac{3}{2}\gamma m ,
        \end{gathered}
    \end{align}
    for some absolute constant $C>0$.
\end{proof}

\subsection{Proof of Corollary~\ref{cor:main_decay}}
Define
$$ \operatorname{dist}(F, A_1^\alpha(\D_N))_{\infty} :=  \sup_{f \in F} \inf_{g\in A_1^\alpha(\D_N)} \|f-g\|_{\C(\Omega)} .$$ 
The next result describes the quality of sparse approximations of $F$ under the condition that $\operatorname{dist}(F, A_1^\alpha(\D_N))_{\infty} < \infty$.

\begin{lemma}[\cite{dai2023universal}]\label{lem:dai_proposition}
    Assume that $\D_N$ satisfies (2a) of Assumption~\ref{ass:DN} and $F\subset \C(\Omega)$ satisfies $\operatorname{dist}(F, A_1^\alpha(\D_N))_{\infty} < \infty$. Then for any $\xi \in \Omega^m$, there holds
    \begin{align*}
        \sigma_{2s}(F, \D_N)_{L_2(\Omega, \mu_\xi)} \leq s^{-\alpha-1/2} +  \operatorname{dist}(F, A_1^\alpha(\D_N))_{\infty}.
    \end{align*}
\end{lemma}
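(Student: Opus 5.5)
The plan is to reduce the statement to a bound for a single $f\in F$ and then take the supremum. Fix $f\in F$ and $\eta>0$, and choose $g\in A_1^\alpha(\D_N)$ with $\|f-g\|_{\C(\Omega)}\le \operatorname{dist}(F,A_1^\alpha(\D_N))_\infty+\eta$. Any $2s$-sparse approximant $h$ satisfies, by the triangle inequality,
\[
\|f-h\|_{L_2(\Omega,\mu_\xi)}\le \|f-g\|_{L_2(\Omega,\mu_\xi)}+\|g-h\|_{L_2(\Omega,\mu_\xi)} .
\]
Because $\mu_\xi$ is a probability measure, the first term is at most $\|f-g\|_{\C(\Omega)}$. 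It therefore suffices to show that every $g=\sum_i c_iu_i$ with $\sum_i|c_i|i^\alpha\le 1$ admits a $2s$-term approximant $h$ with $\|g-h\|_{L_2(\Omega,\mu_\xi)}\le s^{-\alpha-1/2}$, uniformly in $\xi$. Letting $\eta\to0$ then gives the claim.

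To build $h$ I would use a two-step greedy construction. Let $h_1:=\sum_{i\le s}c_iu_i$ be the head. The tail $r:=\sum_{i>s}c_iu_i$ satisfies $\sum_{i>s}|c_i|\le s^{-\alpha}\sum_{i>s}|c_i|i^\alpha\le s^{-\alpha}$, so $r$ lies in $s^{-\alpha}$ times the symmetric convex hull of $\D_N$, where each $\|u_i\|_{L_2(\mu_\xi)}\le 1$ by Assumption~\ref{ass:bndD}. The Maurey empirical approximation lemma in the Hilbert space $L_2(\Omega,\mu_\xi)$ then provides an $s$-term average $h_2$, consisting of $s$ dictionary elements with equal-magnitude weights summing to $\sum_{i>s}|c_i|$, such that
\[
\|r-h_2\|_{L_2(\mu_\xi)}\le \frac{\sum_{i>s}|c_i|}{\sqrt{s}}\le s^{-\alpha-1/2}.
\]
Setting $h=h_1+h_2\in\Sigma_{2s}(\D_N)$ finishes the construction.

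The main step is the Maurey argument. One draws $s$ i.i.d. indices $i$ with probability $|c_i|/\|c_{>s}\|_1$, takes the random element $\|c_{>s}\|_1\operatorname{sign}(c_i)u_i$, and computes the variance of the average in $L_2(\mu_\xi)$. This variance is bounded by $\|c_{>s}\|_1^2/s$ because $\sup|u_i|\le1$, so some realization achieves at most the expectation. Note that this bound is uniform in $\xi$ precisely because only the sup-norm bound on the $u_i$ enters, which matches the claim for arbitrary $\xi\in\Omega^m$. The remaining bookkeeping is to allow repeated indices, which still yields at most $s$ distinct terms.
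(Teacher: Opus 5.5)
Your argument is correct and gives the stated constant $1$ exactly. The paper itself supplies no proof, only the citation to Dai--Temlyakov, and your route is the standard one behind that result: split off the first $s$ coefficients, then bound the tail in the Hilbert space $L_2(\Omega,\mu_\xi)$ via $\sigma_s(r,\mathcal{D}_N)_{L_2(\Omega,\mu_\xi)}\le s^{-1/2}\sum_{i>s}|c_i|$ (Maurey's lemma, or equivalently the orthogonal greedy algorithm bound). Your argument is uniform in $\xi$ for the reasons you give: $\mu_\xi$ is a probability measure and the $u_i$ are bounded by $1$ pointwise, so the point masses in $\mu_\xi$ cause no trouble.
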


Lemma~\ref{lem:dai_proposition} describes the nonlinear approximation rate for functions whose coefficients decay rapidly, and this result holds for any choice of sample locations $\xi$. However, to obtain an explicit bound from Theorem~\ref{thm:holder}, we still need to bound the mutual coherence, which makes it challenging to relax the requirement of randomly sampled function values.


We are now ready to present the proof of Corollary~\ref{cor:main_decay}.

\begin{proof}[Proof of Corollary~\ref{cor:main_decay}]
    Let $p=2$.
    Lemma~\ref{lem:dai_proposition} together with Theorem~\ref{thm:holder-full} and Lemma~\ref{lem:sparsity_estimation-full} guarantee the existence of a functional neural network $\Phi \in \nn(M\log(m+N), M(m+N)^2, \log K, K)$ satisfying
    \begin{align*}
        \sup_{f\in F}|\P(f) - \Phi(\tilde{f}^m)| 
        \leq \inf_{s < \frac{1}{2}\left( 1+ \frac{1}{\mu(\tilde{D}^m_N)} \right)} \left\{ C_7^\beta e^{ -C_5 \beta M } + C_8^{\beta} 2^{(\alpha+1/2)\beta}  s^{-(\alpha+1/2)\beta} +C_9(K/\log K)^{-\frac{\beta}{s}} \right\}.
     \end{align*}
     The constants are given below
     \begin{align*}
        C_5 &= -\ln (2\mu(\tilde{D}^m_N) s-\mu(\tilde{D}^m_N)), \\
        C_7 &=  12 C_1^{-1/p} C_{m,p} m^{1-1/p} Bs,  \\
        C_8 &=  2^{1+1/p} C C_1^{-1/p} C_{m,p} m^{1-1/p} +C_3  , \\
        C_9 &= (\tilde c_1^\beta + C_{\P}) (1+2^\beta \bar B^\beta),
    \end{align*}
    where $C = 2 s \sum_{i=0}^{J} (2\mu(\tilde{D}^m_N) s-\mu(\tilde{D}^m_N))^i $, $C_{m,p}:= \max\{1, m^{1/p-1/2} \} \leq 1$, $C_3$ depends only on $p$, and
    \begin{align*}
        \bar{B} &= C_4 + C_6 B + 6Bm \|L\|_\infty, \\
        \tilde c_1 &= C_1^{-1/p}  m^{-1/p} \|L^{-1}\|_2 \max\{1, m^{1/p-1/2}\} \sqrt{1+(2s-1) \mu( \tilde{D}_N^m)}, \\
        C_{\P} &= \sup_{w \in [-\bar B, \bar B]^N } |\hat\P(w)| ,\\
        C_4 &= 6 c_0 B m s N^{1-1/p},  \\ 
        C_6 &= c_0 C 2^{1/p} m N^{1-1/p}, \\
        c_0 &= \max_{i\in[N]} \frac{1}{\sqrt{\sum_{t=1}^m \left| u_i(\xi_t)\right|^2}}.
    \end{align*}
    Next, we will determine these constants for use in subsequent calculations.
    For any $ f=\sum_{i}c_i u_i\in F $, the definition of $A_1^\alpha(\D_N)$ implies that 
    \begin{align*}
        \|f\|_{\C(\Omega)} \leq \sum_{i}|c_i|\|u\|_{\C(\Omega)} \leq \sum_{i}|c_i| \leq \sum_{i}|c_i|i^{\alpha} \leq 1.
    \end{align*}
    Hence the constant $B$ in Theorem~\ref{thm:holder-full} is given by
    \begin{align}\label{eq:cons-B}
        B := 1.
    \end{align}
    From Assumptions \ref{ass:bndD} and \ref{ass:ortho}, it follows directly that
    \begin{align}\label{eq:const-gamma}
        \gamma \leq 1.
    \end{align}
    From \eqref{eq:s_est} and \eqref{eq:const-gamma} we obtain
    \begin{align}\label{eq:cons-s}
        \begin{aligned}
            \sqrt{  \frac{\gamma m}{\log (N^2/\varepsilon)} } \asymp s &\leq \sqrt{  \frac{3m}{\log (N^2/\varepsilon)} },\\
            s &\leq \frac{1}{2}\left( 1 + \frac{1}{2 \mu(\tilde{D}^m_N)} \right) \\
            (2s-1)\mu(\tilde{D}^m_N) &\leq \frac{1}{2}.
        \end{aligned}
    \end{align}
    Since the inequality $\frac{1}{2} \gamma m \leq \sum_{t=1}^m \left| u_i(\xi_t)\right|^2 \leq \frac{3}{2}\gamma m$ holds due to Lemma~\ref{lem:sparsity_estimation-full}, we further have
    \begin{align}\label{eq:cons-c0}
        \begin{aligned}
            \|L\|_\infty &\lesssim  \frac{1}{\sqrt{\gamma m}} ,\\
            \|L^{-1}\|_2 &\lesssim  \sqrt{\gamma m},\\
            c_0 &\lesssim  \frac{1}{\sqrt{\gamma m}} .
        \end{aligned}
    \end{align}
    Notice that 
    \begin{align}\label{eq:cons-CP}
        \begin{aligned}
            C_{\P} 
            &\leq |\hat\P(0)| + \sup_{w \in [-\bar B, \bar B]^N }  |\hat\P(w) - \hat\P(0)|  \\
            &= |\P(0)| + \sup_{w \in [-\bar B, \bar B]^N }  |\P\circ \D_N (w) - \P(0)| \\
            &\leq |\P(0)| + \sup_{w \in [-\bar B, \bar B]^N }  \| \D_N (w) \|_{L_2(\Omega,\nu)}^\beta \\
            &\leq |\P(0)| + \sup_{w \in [-\bar B, \bar B]^N }  \| \sum_{i=1}^N w_i u_i \|_{L_\infty(\Omega,\nu)}^\beta \\
            &\lesssim  N^\beta \bar B^\beta.
        \end{aligned}
    \end{align}
    From \eqref{eq:cons-c0}, \eqref{eq:const-gamma}, \eqref{eq:cons-B}, \eqref{eq:cons-s}, and \eqref{eq:cons-CP}, we obtain the following estimates for the constants
    \begin{align*}
        \begin{aligned}
            c_0 &\lesssim (\gamma m)^{-1/2} \\
            C_4 &\lesssim \gamma^{-1/2} s \sqrt{m N},  \\ 
            C_6 &\lesssim \gamma^{-1/2} s \sqrt{m N}, \\
            \bar{B} &\lesssim \gamma^{-1/2} s \sqrt{m N}, \\
            \tilde c_1 &\lesssim   \gamma^{1/2} , \\
            C_{\P} &\lesssim \left( \gamma^{-1/2} s N^{\frac{3}{2}}\sqrt{m } \right)^\beta .
        \end{aligned}
    \end{align*}
    By inserting the above estimates into $C_5, C_7, C_8,$ and $C_9$, we arrive at the following result
    \begin{align*}
        \begin{aligned}
            C_5 &\geq \ln 2  ,\\
            C_7 &\asymp s\sqrt{m}   ,\\
            C_8 &\asymp  s\sqrt{m},\\
            C_9 &\lesssim \gamma^{-\beta} s^{2\beta} m^{\beta} N^{2\beta}  .
        \end{aligned}
    \end{align*}
    We can now rewrite the upper bound of $\sup_{f\in F}|\P(f) - \Phi(\tilde{f}^m)| $ in a simplified form as
    \begin{align*}
        \sup_{f\in F}|\P(f) - \Phi(\tilde{f}^m)| 
        \lesssim  (s\sqrt{m})^{\beta} e^{ -(\ln 2) \beta M } + (\sqrt{m})^{\beta} s^{-(\alpha-1/2)\beta}  \\
        + \gamma^{-\beta} s^{2\beta} m^{\beta} N^{2\beta} (K/\ln K)^{-\frac{\beta}{s}} .
     \end{align*}
     Choosing 
     \begin{align*}
         M&:=\left\lceil \frac{1}{(\ln 2) \beta} \ln \left( \left(\frac{K}{\ln K}\right)^{\beta/s}  \right) \right\rceil,
     \end{align*}
     then we can combine the first and last terms to obtain a simpler upper bound
     \begin{align}\label{eq:ineupper}
     \begin{aligned}
         \sup_{f\in F}|\P(f) - \Phi(\tilde{f}^m)| 
        \lesssim   (\sqrt{m})^{\beta} s^{-(\alpha-1/2)\beta}  
        + \gamma^{-\beta} s^{2\beta} m^{\beta} N^{2\beta} (K/\ln K)^{-\frac{\beta}{s}} .
     \end{aligned}
     \end{align}
     Setting $N := 2m$, $\varepsilon := 1/m$, and observing from \eqref{eq:cons-s} that $s< \sqrt{m/\log m}$, we further derive
     \begin{align}\label{eq:}
     \begin{aligned}
        &\sup_{f\in F}|\P(f) - \Phi(\tilde{f}^m)| 
         \\
        &\lesssim   \underbrace{m^{-\frac{\beta}{4}(2\alpha-3)} (\log m)^{\frac{\beta}{4}(2\alpha-1)}  }_{(i)}
        + \underbrace{ m^{4\beta} (\log m)^{-\beta}  \left(\frac{K}{\ln K}\right)^{-\frac{\beta}{s}} }_{(ii)} .
     \end{aligned}
     \end{align}
     We need to assume $\alpha > \frac{3}{2}$ so that term $(i)$ tends to zero as $m$ goes to infinity.

     To derive the relationship between $m$ and $K$, we take logarithms on both sides of the error bound \eqref{eq:ineupper} and then establish corresponding upper bounds to control them individually
     \begin{align*}
         (i)  & \Rightarrow -\frac{\beta}{4}(2\alpha-3)\log m + \frac{\beta}{4}(2\alpha-1)\log\log m ,\\
         (ii) & \Rightarrow 4\beta \log m - \beta\log \log m - \beta \sqrt{\frac{\log m}{m}}  (\log K-\log\log K ).
     \end{align*}
    Choose $m, K$ so that 
    \begin{align*}
        \log K = \left(\frac{1}{4}(2\alpha-3)+ 4 \right) \sqrt{m \log m} .
    \end{align*}
    Consequently, 
    \begin{align}
        \begin{aligned}
            m &\asymp \frac{(\log K)^2}{ \log \log K},\\
            \log m &\asymp \log \log K,
        \end{aligned}
    \end{align}
    and the above two terms $(i)$ and $(ii)$ after taking logatithms can be controlled by
    \begin{align*}
          -\frac{\beta}{4}(2\alpha-3)\log m + \frac{\beta}{4}(2\alpha-1)\log\log m 
          &\asymp -\frac{\beta}{4}(2\alpha-3) \log m ,\\
          4\beta \log m - \beta\log \log m - \beta \sqrt{\frac{\log m}{m}}  (\log K-\log\log K ) 
          &\asymp  -\frac{\beta}{4}(2\alpha-3) \log m .
    \end{align*}
    This implies that
    \begin{align*}
        \begin{aligned}
            \sup_{f\in F}|\P(f) - \Phi(\tilde{f}^m)| 
            &\lesssim   m^{-\frac{\beta}{4}(2\alpha-3)} (\log m)^{\frac{\beta}{4}(2\alpha-1)} \\
            &\lesssim  (\log K)^{-\beta(\alpha - \frac{3}{2})} (\log\log K)^{\beta(\alpha-1)}.
        \end{aligned}
    \end{align*}
    In addition, we have
    \begin{align*}
    \begin{aligned}
        \varepsilon &\asymp \frac{ \log \log K}{(\log K)^2}, \\
        M &\asymp \frac{1}{s}\log K \lesssim \log\log K ,\\
        M\log(m+N) &\lesssim  \left(\log\log K \right)^2 , \\
        M(m+N)^2 &\lesssim (\log K)^2,
    \end{aligned}
    \end{align*}
    where we use \eqref{eq:cons-s} to estimate $M$.
     The proof is complete.
\end{proof}

\subsection{Proof of Corollary~\ref{cor:main_mixed_sobolev}}
This proof follows a similar approach to the computation presented in Corollary~\ref{cor:main_decay}. To ensure completeness, we include the detailed calculation below.

\begin{proof}[Proof of Corollary~\ref{cor:main_mixed_sobolev}]
    Let $p=2$.
    Theorem~\ref{thm:holder-full} and Lemma~\ref{lem:sparsity_estimation-full} together with the approximation results for $W^{a,b}_A$ in \cite[Lemma 2.1]{temlyakov2015constructive} guarantee the existence of a functional neural network $\Phi \in \nn(M\log(m+N), M(m+N)^2, \log K, K)$ satisfying
    \begin{align*}
        &\sup_{f\in F}|\P(f) - \Phi(\tilde{f}^m)| 
        \\
        &\leq \inf_{s < \frac{1}{2}\left( 1+ \frac{1}{\mu(\tilde{D}^m_N)} \right)} \left\{ C_7^\beta e^{ -C_5 \beta M } + C_8^{\beta} s^{- (a +1/2)\beta} (\log s)^{\beta(d-1)(a+b)+\beta/2} +C_9(K/\log K)^{-\frac{\beta}{s}} \right\}.
     \end{align*}
        Although the original result \cite[Lemma 2.1]{temlyakov2015constructive} is built with exponential form of trigonometric functions, which is complex, it can be equivalently reformulated as a series of real $\sin$ and $\cos$ functions, our results are still valid.

     To estimate the constants $C_5, C_7, C_8, C_9$, we proceed in the same way as in the proof of Corollary~\ref{cor:main_decay}. Since the argument relies only on Lemma~\ref{lem:sparsity_estimation-full}, the constants do not depend on the input functions. Therefore, we can directly adopt the estimates from the proof of Corollary~\ref{cor:main_decay} and obtain
    \begin{align*}
        \begin{aligned}
            C_5 &\geq \ln 2  ,\\
            C_7 &\asymp s\sqrt{m}   ,\\
            C_8 &\asymp  s\sqrt{m},\\
            C_9 &\lesssim \gamma^{-\beta} s^{2\beta} m^{\beta} N^{2\beta}   .
        \end{aligned}
    \end{align*}
    We can now rewrite the upper bound of $\sup_{f\in F}|\P(f) - \Phi(\tilde{f}^m)| $ in a simplified form as
    \begin{align*}
        \sup_{f\in F}|\P(f) - \Phi(\tilde{f}^m)| 
        \lesssim  (s\sqrt{m})^{\beta} e^{ -(\ln 2) \beta M } + (\sqrt{m})^{\beta} s^{- (a -1/2)\beta} (\log s)^{\beta(d-1)(a+b)+\beta/2}  \\
        + \gamma^{-\beta} s^{2\beta} m^{\beta} N^{2\beta}  (K/\ln K)^{-\frac{\beta}{s}} .
     \end{align*}
     Choosing 
     \begin{align*}
         M&:=\left\lceil \frac{1}{(\ln 2) \beta} \ln \left( \left(\frac{K}{\ln K}\right)^{\beta/s}  \right) \right\rceil,
     \end{align*}
     then we can combine the first and last terms to obtain a simpler upper bound
     \begin{align}\label{eq:ineupper-mix}
     \begin{aligned}
         &\sup_{f\in F}|\P(f) - \Phi(\tilde{f}^m)| 
         \\
        &\lesssim   (\sqrt{m})^{\beta} s^{- (a -1/2)\beta} (\log s)^{\beta(d-1)(a+b)+\beta/2}   
        + \gamma^{-\beta} s^{2\beta} m^{\beta} N^{2\beta}  (K/\ln K)^{-\frac{\beta}{s}} .
     \end{aligned}
     \end{align}
     Setting $N := 2m$, $\varepsilon := 1/m$, and observing from \eqref{eq:cons-s} that $s< \sqrt{m/\log m}$, we further derive
     \begin{align}\label{eq:}
     \begin{aligned}
        &\sup_{f\in F}|\P(f) - \Phi(\tilde{f}^m)| 
         \\
        &\lesssim   \underbrace{m^{-\frac{\beta}{4}(2a-3)} (\log m)^{\frac{\beta}{4}(2a-1)+ \beta(d-1)(a+b)+\frac{\beta}{2}}  }_{(i)}
        + \underbrace{ m^{4\beta} (\log m)^{-\beta}  \left(\frac{K}{\ln K}\right)^{-\frac{\beta}{s}} }_{(ii)} .
     \end{aligned}
     \end{align}
     We need to assume $a > \frac{3}{2}$ so that term $(i)$ tends to zero as $m$ goes to infinity.

     To derive the relationship between $m$ and $K$, we take logarithms on both sides of the error bound \eqref{eq:ineupper-mix} and then establish corresponding upper bounds to control them individually
     \begin{align*}
         (i)  & \Rightarrow -\frac{\beta}{4}(2a-3)\log m + \left(\frac{\beta}{4}(2a-1) + \beta(d-1)(a+b)+\frac{\beta}{2} \right)  \log\log m ,\\
         (ii) & \Rightarrow 4\beta \log m - \beta\log \log m - \beta \sqrt{\frac{\log m}{m}}  (\log K-\log\log K ).
     \end{align*}
    Choose $m, K$ so that 
    \begin{align*}
        \log K = \left(\frac{1}{4}(2a-3)+ 4 \right) \sqrt{m \log m} .
    \end{align*}
    Consequently, 
    \begin{align}
        \begin{aligned}
            m &\asymp \frac{(\log K)^2}{ \log \log K},\\
            \log m &\asymp \log \log K,
        \end{aligned}
    \end{align}
    and the above two terms $(i)$ and $(ii)$ after taking logatithms can be controlled by $-\frac{\beta}{4}(2a-3) \log m$.
    This implies that
    \begin{align*}
        \begin{aligned}
            \sup_{f\in F}|\P(f) - \Phi(\tilde{f}^m)| 
            &\lesssim   m^{-\frac{\beta}{4}(2a-3)} (\log m)^{\frac{\beta}{4}(2a-1)+ \beta(d-1)(a+b)+\frac{\beta}{2}} \\
            &\lesssim  (\log K)^{-\beta(a - \frac{3}{2})} (\log\log K)^{\beta(a+(d-1)(a+b)-\frac{1}{2})}.
        \end{aligned}
    \end{align*}
    In addition, we have
    \begin{align*}
    \begin{aligned}
        \varepsilon &\asymp \frac{ \log \log K}{(\log K)^2}, \\
        M &\asymp \frac{1}{s}\log K \lesssim \log\log K ,\\
        M\log(m+N) &\lesssim  \left(\log\log K \right)^2 , \\
        M(m+N)^2 &\lesssim (\log K)^2,
    \end{aligned}
    \end{align*}
    where we use \eqref{eq:cons-s} for estimating $M$.
     The proof is complete.
\end{proof}

\end{document}